\newtheorem{theorem}{Theorem}
\newtheorem{lemma}{Lemma}
\newtheorem{definition}{Definition}
\newtheorem{assumption}{Assumption}
\newtheorem{remark}{Remark}
\newtheorem{cor}{Corollary}
\newtheorem{proposition}{Proposition}
\newcommand{\stderr}[1]{\scriptsize $\pm #1$}
\newcommand{\method}{\texttt{GAGA}\xspace }
\newcommand\mdoubleplus{\mathbin{+\mkern-10mu+}}
\title{\method: Deciphering Age-path of Generalized\\ Self-paced Regularizer}
\author{%
   Xingyu Qu$^{1}$\thanks{These authors contributed equally.}~~~ Diyang Li$^{2*}$~~~ Xiaohan Zhao$^{2}$~~~ Bin Gu$^{1,2}$\\
  $^1$ Mohamed bin Zayed University of Artificial Intelligence \\$^2$  Nanjing University of Information Science \& Technology\\
  \texttt{\{Xingyu.Qu,bin.gu\}@mbzuai.ac.ae,} \\
  \texttt{Diyounglee@gmail.com, xiaohan.zhao42@foxmail.com}
 }
\begin{document}

\maketitle
\begin{abstract}
	Nowadays self-paced learning (SPL) is an important machine learning paradigm that mimics the cognitive process of humans and animals. The SPL regime involves a self-paced regularizer and a gradually increasing age parameter, which plays a key role in SPL but where to optimally terminate this process is still non-trivial to determine. A natural idea is to compute the solution path w.r.t. age parameter (\emph{i.e.}, age-path). However, current age-path algorithms are either limited to the simplest regularizer, or lack solid theoretical understanding as well as computational efficiency. To address this challenge, we propose a novel \underline{G}eneralized \underline{Ag}e-path \underline{A}lgorithm (\method) for SPL with various self-paced regularizers based on ordinary differential equations (ODEs) and sets control, which can learn the entire solution spectrum w.r.t. a range of age parameters. To the best of our knowledge, \method is the first \emph{exact} path-following algorithm tackling the age-path for \emph{general} self-paced regularizer. Finally the algorithmic steps of classic SVM and Lasso are described in detail. We demonstrate the performance of \method on real-world datasets, and find considerable speedup between our algorithm and competing baselines.

\end{abstract}
\section{Introduction}
\paragraph{The SPL.} Self-paced learning (SPL) \cite{Kumar} is a classical learning paradigm and has attracted increasing attention in the communities of machine learning \cite{jiang2014self,zhao2015self,ma2017self,lin2017active, ghasedi2019balanced}, data mining \cite{gao2018self,gan2020supervised} and computer vision \cite{supancic2013self,pan2020self, wu2021bispl}. The philosophy under this paradigm is simulating the strategy that how human-beings learn new knowledge. In other words, SPL starts learning from easy tasks and gradually levels up the difficulty while training samples are fed to the model sequentially. 
{At its core, SPL can be viewed as an automatic variant of curriculum learning (CL) \cite{bengio2009curriculum, soviany2022curriculum}, which uses prior knowledge to discriminate between simple instances and hard ones along the training process. Different from CL, the SPL assigns a real-valued ``easiness weight'' to each sample implicitly by adding a self-paced regularizer (SP-regularizer briefly) to the primal learning problem and optimizes the original model parameters as well as these weights.}
Considering this setting, the SPL is reported to alleviate the problem of getting stuck in bad local minima and provides better generalization as well as robustness for the models, especially in hard condition of heavy noises or a high outlier rate \cite{Kumar, jiang2015self}.
There are two critical aspects in SPL, namely the SP-regularizer and a gradually increasing age parameter. Different SP-regularizers can be designed for different kinds of training tasks. At the primary stage of SPL, only the hard SP-regularizer is utilized and leads to a binary variable for weighting samples \cite{Kumar}. Going with the advancing of the diverse SP-regularizers \cite{meng2017theoretical}, SPL equipped with different types of SP-regularizers has been successfully applied to various applications \cite{jiang2014easy, zhao2015self,ma2020self}. As for the age parameter (\emph{a.k.a}. pace parameter), the users are expected to increase its value continually under the SPL paradigm, given that the age parameter represents the maturity of current model. A lot of empirical practices have turned out that seeking out an appropriate age parameter is crucial to the SPL procedure \cite{peng2018self}. The SPL tends to obtain a worse performance in the presence of noisy samples/outliers when the age parameter gets larger, or conversely, an insufficient age parameter makes the gained model immature (\emph{i.e.} underfitting. See Figure \ref{fig:curve}).
\paragraph{On the age-path of SPL. }Although the SPL is a classical and widespread learning paradigm, when to stop the increasing process of age parameter in implementation is subject to surprisingly few theoretical studies. In the majority of  practices \cite{gong2018decomposition}, the choice of the optimal model age has, for the time being, remained restricted to be made by experience or by using the trial-and-error approach, which is to adopt the alternate convex search (ACS) \cite{wendell1976minimization} multiple times at a predefined sequence of age parameters. This operation is time-consuming and could miss some significant events along the way of age parameter. In addition, the SPL regime is a successive training process, which makes existing hyperparameter tuning algorithms like parallelizing sequential search \cite{bergstra2011algorithms} and bilevel optimization \cite{bard2013practical} difficult to apply. Instead of training multiple subproblems at different age parameters, a natural idea is to calculate the \emph{solution path} about age parameter, namely age-path (e.g., see Figure \ref{fig:path}). A solution path is a set of curves that demonstrate how the optimal solution of a given optimization problem changes w.r.t. a hyperparameter. Several papers like \cite{efron2004least,hastie2004entire} laid the foundation of solution path algorithm in machine learning by demonstrating the rationale of path tracking, which is mainly built on the Karush-Khun-Tucker (KKT) theorem \cite{karush1939minima}. Existing solution path algorithms involve generalized Lasso \cite{tibshirani2011solution}, 
semi-supervised support vector classification \cite{ogawa2013infinitesimal}, general parametric quadratic programming \cite{gu2017solution}, etc. However, none of the existing methods is available to SPL regime because they are limited to uni-convex optimization while the SPL objective is a \emph{biconvex} formulation. Assume we've got such an age-path, we can observe the whole self-paced evolution process clearly and recover useful intrinsic patterns from it. 

\paragraph{State of the art.}Yet, a rapidly growing literature \cite{li2016multi, gong2018decomposition, li2019pareto, gu2021finding} is devoted to developing better algorithms for solving the SPL optimization with ideas similar to age-path. However, despite countless theoretical and empirical efforts, the understanding of  age-path remains rather deficient. Based on techniques from incremental learning \cite{gu2018new}, \cite{gu2021finding} derived an exact age-path algorithm for mere \emph{hard SP-regularizer}, where the path remains piecewise constant. \cite{li2016multi, li2019pareto} proposed a multi-objective self-paced learning (MOSPL) method to approximate the age-path by evolutionary algorithm, which is not theoretically stable. Unlike previous studies, the difficulty of revealing the exact generalized age-path lies in the continuance of imposed weight and the alternate optimization procedure used to solve the minimization function. From this point of view, the technical difficulty inherent in the study of age-path with general SP-regularizer is intrinsically more challenging.
\paragraph{Proposed Method.} In order to tackle this issue, we establish a novel \underline{G}eneralized \underline{Ag}e-path \underline{A}lgorithm (\method) for various self-paced regularizers, which prevents a straightforward calculation of every age parameter. Our analysis is based on the theorem of partial optimum while previous theoretical results are focused on the implicit SPL objective. In particular, we enhance the original objective to a single-variable analysis problem, and use different sets to partition samples and functions by their confidence level and differentiability. Afterward, we conduct our main theorem results based on the technique of ordinary differential equations (ODEs). In the process, the solution path hits, exits, and slides along the various constraint boundaries. The path itself is piecewise smooth with kinks at the times of boundary hitting and escaping. Moreover, from this perspective we are able to explain some shortcomings of conventional SPL practices and point out how we can improve them. We believe that the proposed method may be of independent interest beyond the particular problem studied here and might be adapted to similar biconvex schemes.  
\paragraph{Contributions.} Therefore, the main contributions brought by this work are listed as follows.  
\begin{itemize}[leftmargin=0.2in]
	\item  We firstly connect SPL paradigm to the concept of partial optimum and emphasize
	its importance here that has been ignored before, which gives a novel viewpoint to the robustness of SPL. Theoretical studies are conducted to reveal that our result does exist some equivalence with previous literature, which makes our study more stable.
	\item A framework of computing the \emph{exact} age-path for \emph{generalized} SP-regularizer is derived using the technique of ODEs, which allows for the time-consuming ACS to be avoided. Concrete algorithmic steps of classic SVM \cite{vapnik1999nature} and Lasso \cite{tibshirani1996regression} are given for implementation.	
	\item  Simulations on real and synthetic data are provided to validate our theoretical findings and justify their impact on the designing future SPL algorithms of practical interest.
\end{itemize}
\paragraph{Notations.}We write matrices in uppercase (\emph{e.g.}, $X$)  and vectors in lowercase with bold font (\emph{e.g.}, $\boldsymbol{x}$). Given the index set $\mathcal{E}$ (or $\mathcal{D}$), $X_{\mathcal{E}}$ (or $X_{\mathcal{E}\mathcal{D}}$) denotes the submatrix that taking rows with indexes in $\mathcal{E}$ (or rows/columns with indexes in $\mathcal{E}$/$\mathcal{D}$, respectively). Similarly notations lie on  $\boldsymbol{v}_{\mathcal{E}}$ for vector $\boldsymbol{v}$, $\boldsymbol{\ell}_{\mathcal{E}}(x)$ for vector functions $\boldsymbol{\ell}(x)$. For a set of scalar functions $\{\ell_{i}(x)\}_{i=1}^{n}$, we denote the vector function $\boldsymbol{\ell}(x)$ where $\boldsymbol{\ell}(x)=\left(\ell_{i}(x)\right)_{i=1}^{n}$ without statement and vice versa. Moreover, we defer the full proofs as well as the algorithmic steps on applications to the Appendix.

\begin{figure}[tb]
	\centering
		\begin{minipage}{0.31\textwidth}
		\begin{figure}[H]
			\centering
			\begin{adjustbox}{width=1.05\textwidth}
				\includegraphics{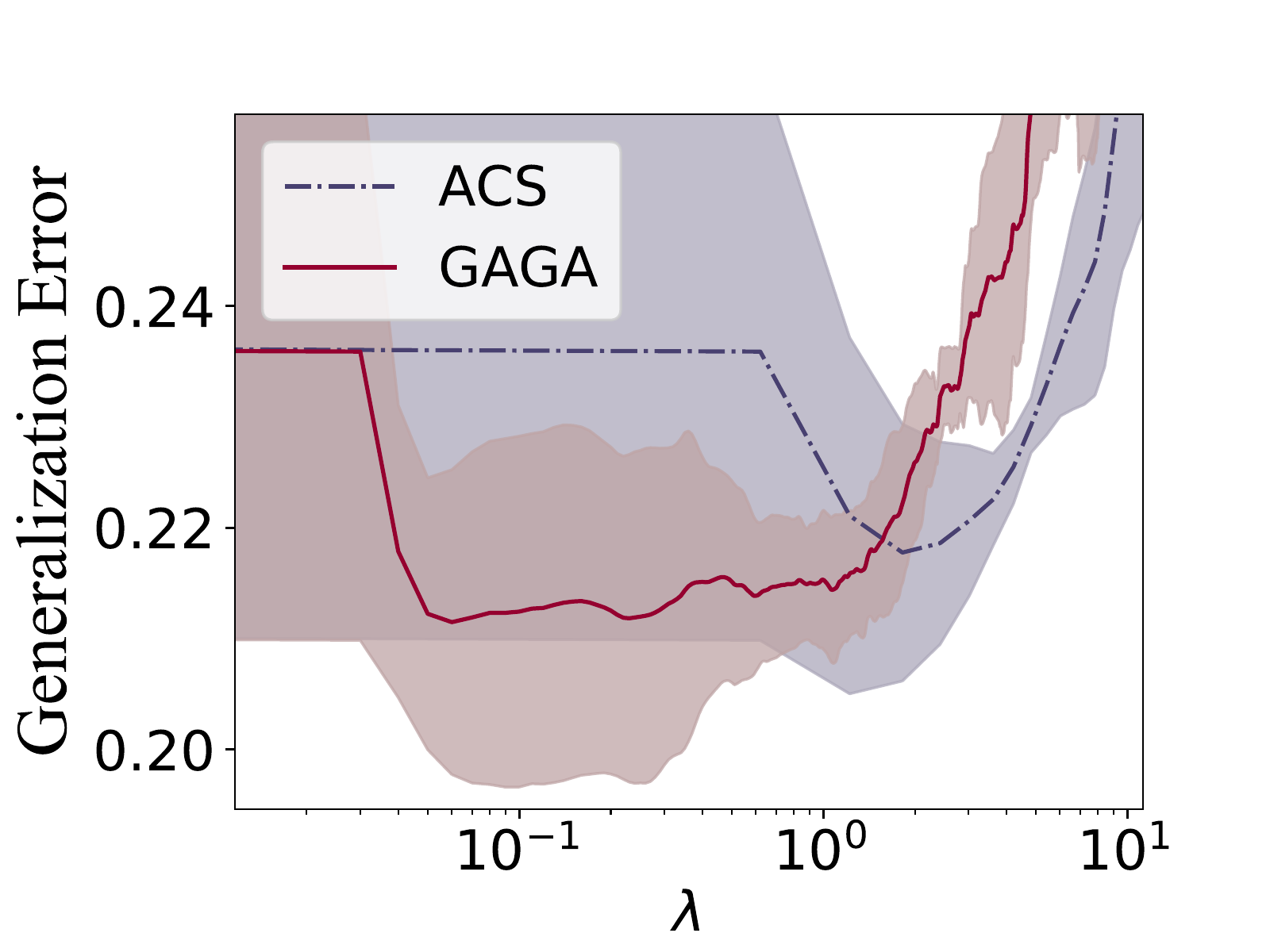}
			\end{adjustbox}
			\vspace{-1em}
			\caption{Learning curve against age $\lambda$. The curve is recorded when running linear regression on music dataset.}
			\vspace{3em}
			\label{fig:curve}
		\end{figure}
	\end{minipage}
	\hspace{6pt}
	\begin{minipage}{0.31\textwidth}
		\begin{figure}[H]
			\centering
			\begin{adjustbox}{width=1.05\textwidth}
				\includegraphics{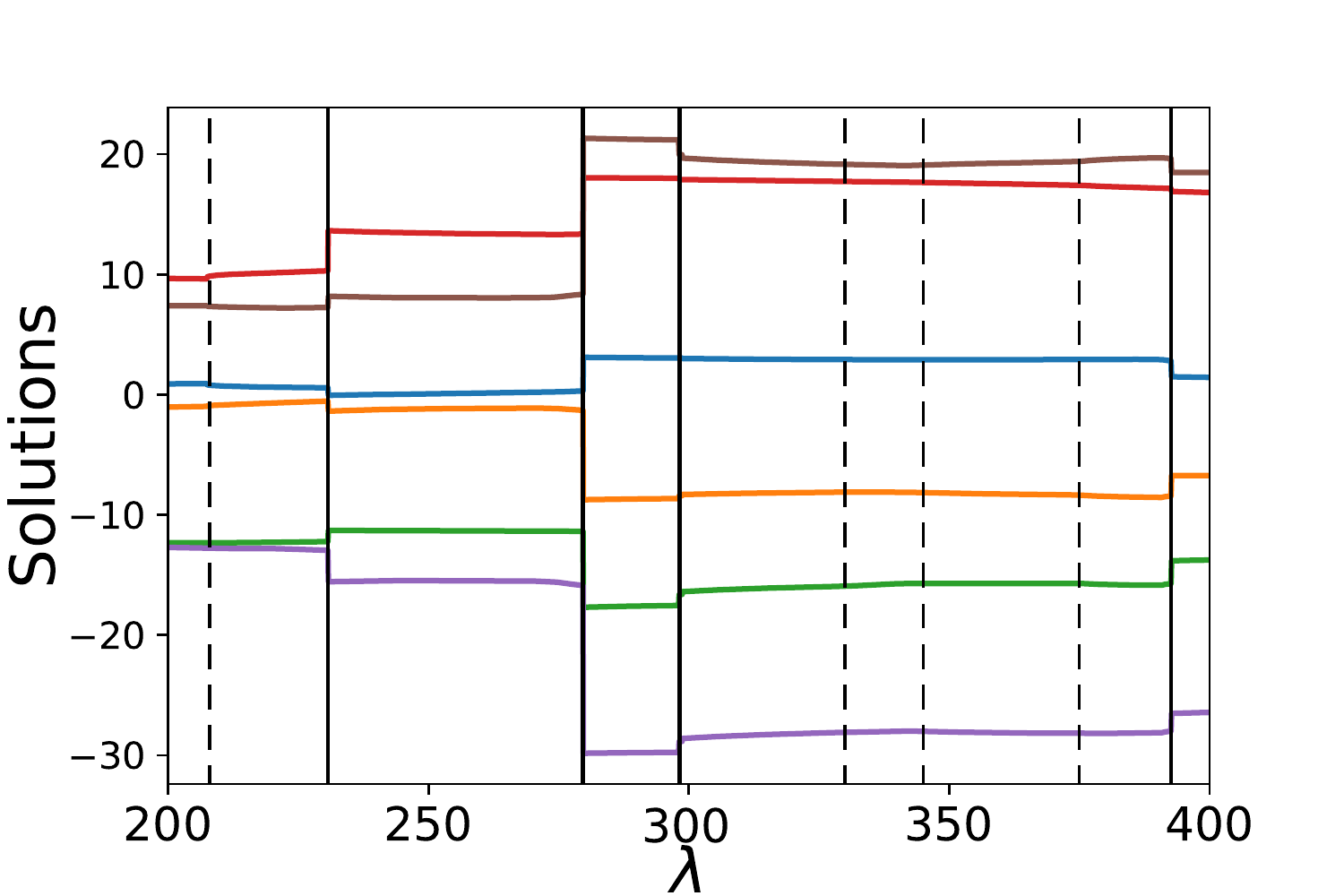}
			\end{adjustbox}
			\caption{An age-path visualisation. Different vertical lines represent different types of critical points. The figure is plotted on random 60\% features from diabetes dataset using Lasso with $\alpha=0.01$.}
			\label{fig:path}
		\end{figure}
	\end{minipage}
	\hspace{6pt}
\begin{minipage}{0.31\textwidth}
	\begin{figure}[H]
		\centering
		\begin{adjustbox}{width=.88\textwidth}
			\includegraphics{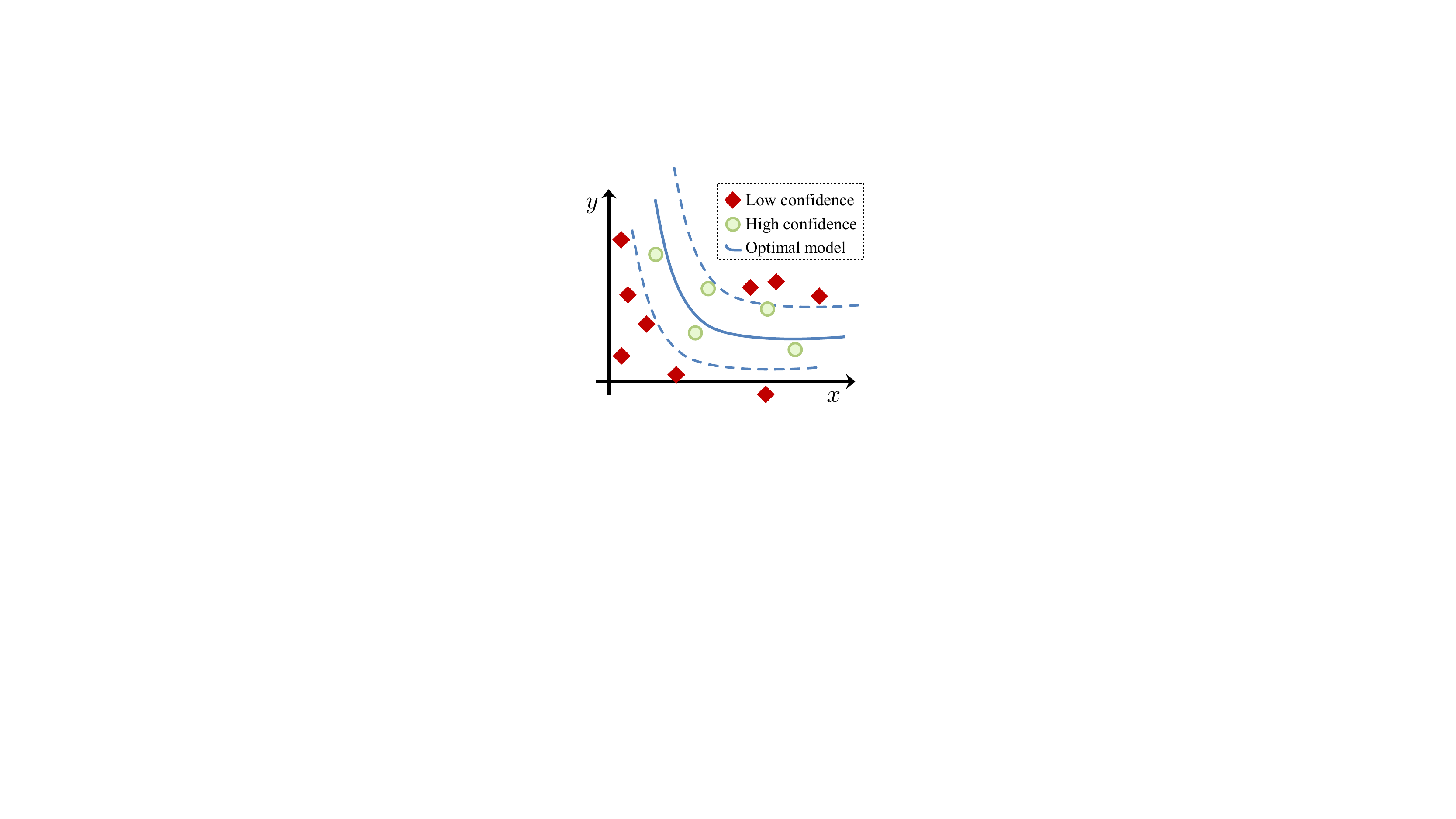}
		\end{adjustbox}
		\caption{An \emph{example} of set partition in 2-D space. Sample points of same colors belong to one set. The two dashed lines represent partition boundaries (smooth surfaces), which satisfies $l_i=\lambda$.}
		\label{fig:set}
	\end{figure}
\end{minipage}
\end{figure}

\section{Preliminaries}\label{preli}

\subsection{Self-paced Learning}

Suppose we have a dataset containing the label vector $\boldsymbol{y}\in \mathbb{R}^{n}$ and $X\in \mathbb{R}^{n\times d}$, where $n$ samples with $d$ features are included. The $i$-th row $X_{i}$ represents the $i$-th data sample $x_{i}$ (\emph{i.e.}, the $i$-th observation). In this paper, the following unconstrained learning problem is considered\vspace{-4pt}
\begin{equation}\label{obj}
	\min _{\boldsymbol{w} \in \mathbb{R}^{d}}~ \sum_{i=1}^{n} \ell\left(x_{i}, y_{i} ; \boldsymbol{w}\right)+\sum_{j=1}^{m} \alpha_{j} \mathcal{R}_{j}(\boldsymbol{w}),
\end{equation}
where $\mathcal{R}_j(\cdot)$ is the regularization item with a positive trade-off parameter $\alpha_{j}$, and $\ell_{i}(\boldsymbol{w})$\footnote{Without ambiguity,  we use $\ell_{i}(\boldsymbol{w})$ as the shorthand notations of $\ell\left(x_{i}, y_{i} ; \boldsymbol{w}\right)$.} denotes loss function w.r.t. $\boldsymbol{w}$.
\begin{definition}[$\mathbb {PC}^{r}$ Function]
	Let $f: U \rightarrow \mathbb{R}$ be a continuous function on the open set $U \in \mathbb{R}^{n}$. If $\left\{f_{i}\right\}_{i \in I_{f}}$ is a set of $\mathbb C^{r}$ (i.e., r-times continuously differentiable) functions such that $f(\boldsymbol{x}) \in\left\{f_{i}(\boldsymbol{x})\right\}_{i \in I_{f}}$ holds for every $\boldsymbol{x} \in U$, then $f$ is an r-times piecewise continuously differentiable function, namely  $\mathbb{PC}^{r}$ function. The $\left\{f_{i}\right\}_{i \in I}$ is a set of selection functions of $f$.
\end{definition}
\begin{assumption}
	\label{smooth}
	We assume that $\ell_{i}(\boldsymbol{w})$ and $\mathcal{R}_{j}(\boldsymbol{w})$ are convex $\mathbb {PC}^{r}$ functions each with a set of selection functions $\bigcup_{k \in I_{\ell_{i}}}\left\{D_{\ell_{i}}^{k}\right\}$ and $\bigcup_{k \in I_{\mathcal{R}_{j}}}\left\{D_{\mathcal{R}_{j}}^{k}\right\}$, respectively.
\end{assumption}
In self-paced learning, the goal is to jointly train the model parameter $\boldsymbol{w}$ and the latent weight variable $\boldsymbol{v}$ by minimizing
\vspace{-8pt}
\begin{equation}\label{spl}
	\underset{\boldsymbol{w} \in \mathbb{R}^{d}, \boldsymbol{v} \in[0,1]^{n}}{\operatorname{argmin}}\mathcal{L}(\boldsymbol{w}, \boldsymbol{v}):= \sum_{j=1}^{m} \alpha_{j} \mathcal{R}_{j}(\boldsymbol{w})+\sum_{i=1}^{n}\left[  v_{i} l_i\left(\boldsymbol{w}\right)+\rule{0pt}{10pt} f\left(v_{i}, \lambda\right)\right] ,
\end{equation}
where $f\left(v, \lambda\right)$ represents the SP-regularizer.

\subsection{SP-regularizer}
\begin{definition}[SP-regularizer \cite{meng2015objective}]\label{SPR} 
	Suppose that $v$ is a weight variable, $\ell$ is the loss, and $\lambda$ is the age parameter. $f(v, \lambda)$ is called a self-paced regularizer, if\\
	\textbf{(i)}$f(v, \lambda)$ is convex with respect to $v \in[0,1]$;\\
	\textbf{(ii)}{$v^{*}(\ell, \lambda)$} is monotonically decreasing w.r.t. $\ell$, and holds {\small	$\lim _{\ell \rightarrow 0} v^{*}(\ell, \lambda)=1, \lim _{\ell \rightarrow \infty} v^{*}(\ell, \lambda)=0$};\\
	\textbf{(iii)}$v^{*}(\ell, \lambda)$ is monotonically increasing w.r.t. $\lambda$, and holds {\small$\lim _{\lambda \rightarrow \infty} v^{*}(\ell, \lambda) \leq 1, \lim _{\lambda \rightarrow 0} v^{*}(\ell, \lambda)=0$},
	
	where $v^{*}(\ell, \lambda)=\arg \min _{v \in[0,1]} v \ell+f(v, \lambda)$.
\end{definition}
The {Definition} \ref{SPR} gives axiomatic definition of SP-regularizer. Some frequently utilized SP-regularizers include $f^{H}(v, \lambda)=-\lambda v$, $f^{L}(v, \lambda)=\lambda\left(\frac{1}{2} v^{2}-v\right)$, $f^{M}(v, \lambda, \gamma)=\frac{\gamma^{2}}{v+\gamma / \lambda}$ and $f^{L O G}(v, \lambda, \alpha)=\frac{1}{\alpha} K L(1+\alpha \lambda, v)$, which represents hard, linear, mixture, LOG SP-regularizer, respectively.
\subsection{Biconvex Optimization}
\begin{definition}[Biconvex Function] \label{bifunc}
	A function $f: B \rightarrow \mathbb{R}$ on a biconvex set $B \subseteq \mathcal{X} \times \mathcal{Y}$ is called a biconvex function on $B$, if $f_{x}(\cdot):=f(x, \cdot): B_{x} \rightarrow \mathbb{R}$ is a convex function on $B_{x}$ for every fixed $x \in \mathcal{X}$ and $f_{y}(\cdot):=f(\cdot, y): B_{y} \rightarrow \mathbb{R}$ is a convex function on $B_{y}$ for every fixed $y \in \mathcal{Y}$.
\end{definition}
\begin{definition}[Partial Optimum]\label{po}
	Let $f: B \rightarrow \mathbb{R}$ be a given biconvex function and let $\left(x^{*}, y^{*}\right) \in B$. Then, $z^{*}=\left(x^{*}, y^{*}\right)$ is called a partial optimum of $f$ on $B$, if	$	f\left(x^{*}, y^{*}\right) \leq f\left(x, y^{*}\right) \forall x \in B_{y^{*}} \text { and } f\left(x^{*}, y^{*}\right) \leq f\left(x^{*}, y\right) \forall y \in B_{x^{*}}.	$
\end{definition}\vspace{-3pt}
\begin{wrapfigure}[]{R}{0.46\textwidth}
	\vspace{-2.2em}
	\begin{minipage}{0.46\textwidth}
		\begin{algorithm}[H] 
			\caption{~{Alternate Convex Search} (\textbf{ACS})}
			\label{acs}
			\begin{algorithmic}[1]
				\REQUIRE{Dataset ${X}$ and $y$, age parameter $\lambda$.}
				\STATE Initialize $\boldsymbol{w}$.
				\WHILE{not converged }
				\STATE Update $\boldsymbol{v}^{*}=\operatorname{argmin}_{\boldsymbol{v}}\mathcal{L}(\boldsymbol{w}^*, \boldsymbol{v})$.
				\STATE Update $\boldsymbol{w}^{*}=\operatorname{argmin}_{\boldsymbol{w}}\mathcal{L}(\boldsymbol{w}, \boldsymbol{v}^*)$.
				\ENDWHILE
				\ENSURE{ $\hat{\boldsymbol w}$}
			\end{algorithmic}
		\end{algorithm}
	\end{minipage}\vspace{-6pt}
\end{wrapfigure}
Optimizing (\ref{spl}) leads to a biconvex optimization problem and is generally non-convex with fixed $\lambda$, in which a number of local minima exist and previous convex optimization tools can't achieve a promising effect \cite{li2015iterative}. It's reasonably believed that algorithms taking advantage of the biconvex structure are more efficient in the corresponding setting. For frequently used one, ACS (\emph{c.f.} Algorithm \ref{acs}) is presented to optimize $x$ and $y$ in $f(x, y)$ alternately until terminating condition is met.

\begin{remark}%
	The order of the optimization subproblems in line 3 \& 4 in Algorithm \ref{acs} can be permuted.
\end{remark}
\begin{theorem}\label{conver1}\cite{gorski2007biconvex}
	Let $\mathcal{X} \subseteq \mathbb{R}^{n}$ and $\mathcal{Y} \subseteq \mathbb{R}^{m}$ be closed sets and let $f: \mathcal{X} \times \mathcal{Y} \rightarrow \mathbb{R}$ be continuous. Let the sequence $\left\{z_{i}\right\}_{i \in \mathbb{N}_+}$ generated by ACS converges to $z^{*} \in \mathcal{X} \times \mathcal{Y}$. Then $z^{*}$ is a partial optimum.
\end{theorem}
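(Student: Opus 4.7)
The plan is to extract, from each full cycle of the ACS iteration, two one-sided minimization inequalities that certify the new $y$-update (resp.\ new $x$-update) as a global minimizer on the corresponding slice, and then pass both inequalities to the limit using continuity of $f$. Write $z_i = (x_i, y_i)$ for the iterate at the end of cycle $i$, so that the subsequent cycle produces
\[
y_{i+1} \in \arg\min_{y \in \mathcal{Y}} f(x_i, y), \qquad x_{i+1} \in \arg\min_{x \in \mathcal{X}} f(x, y_{i+1}).
\]
Because the domain $B = \mathcal{X} \times \mathcal{Y}$ is a product, the slices $B_{x_i}$ and $B_{y_{i+1}}$ coincide with $\mathcal{Y}$ and $\mathcal{X}$ respectively, so the quantifiers live over the full factor sets.

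The key inequalities I would record are, for every $i \in \mathbb{N}_+$,
\[
f(x_i, y_{i+1}) \le f(x_i, y) \quad \forall y \in \mathcal{Y}, \qquad f(x_{i+1}, y_{i+1}) \le f(x, y_{i+1}) \quad \forall x \in \mathcal{X}.
\]
From the hypothesis $z_i \to z^* = (x^*, y^*)$, the coordinate projections give $x_i \to x^*$ and $y_i \to y^*$, and hence the shifted sequences $x_{i+1}$ and $y_{i+1}$ also converge to $x^*$ and $y^*$. Fix an arbitrary $y \in \mathcal{Y}$ and apply continuity of $f$ to the first inequality: the left-hand side tends to $f(x^*, y^*)$ and the right-hand side tends to $f(x^*, y)$, so $f(x^*, y^*) \le f(x^*, y)$. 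An entirely symmetric argument, fixing $x \in \mathcal{X}$ and passing the second inequality to the limit, yields $f(x^*, y^*) \le f(x, y^*)$. Together these two bounds are exactly the condition of Definition \ref{po}, so $z^*$ is a partial optimum.

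The argument is essentially mechanical once the indices are set up correctly, and the main pitfall is precisely that bookkeeping: $y_{i+1}$ is the minimizer of $f(x_i, \cdot)$ \emph{not} of $f(x_{i+1}, \cdot)$, so if one tried to collapse both updates into a single inequality pairing $(x_i, y_i)$ one would lose the minimization property. Closedness of $\mathcal{X}$ and $\mathcal{Y}$ is used only tacitly, to guarantee that the limit $z^*$ lies in the feasible product set so that the inequalities make sense; no compactness, biconvexity of $f$, or uniqueness of minimizers is needed. The Remark observation that the order of the two updates in Algorithm \ref{acs} can be permuted is immediate from the symmetry of the two inequalities, so the same proof covers both orderings.
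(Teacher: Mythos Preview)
The paper does not supply its own proof of this statement; Theorem~\ref{conver1} is quoted from \cite{gorski2007biconvex} and used as a black box. Your argument is correct and is precisely the standard proof given in that reference: record the two slice-minimization inequalities produced in each ACS cycle and pass them to the limit using continuity of $f$ together with closedness of the factors to keep $z^*$ feasible.
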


\subsection{Theoretical Consistency}

Researchers in earlier study \cite{meng2015objective} theoretically conducted the latent SPL loss (\emph{a.k.a.}, implicit objective) and further proved that the SPL paradigm  converges to the \emph{stationary point} of the latent objective under some mild assumptions, which gives explanation to the robustness of the SPL \cite{meng2015objective,ma2018convergence}.
In this paper, we focus on the \emph{partial optimum} of original SPL objective and result is given in Theorem \ref{conver2}. 

\begin{theorem}\label{conver2}
	Under the same assumptions in Theorem 2 of \cite{ma2018convergence}, the partial optimum of SPL objective consists with the stationary point of implicit SPL objective $G_{\lambda}$.
\end{theorem}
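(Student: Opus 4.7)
The plan is to verify that the first-order necessary conditions for a partial optimum of $\mathcal{L}(\boldsymbol{w},\boldsymbol{v})$ coincide with those for stationarity of the implicit objective $G_\lambda(\boldsymbol{w}):=\min_{\boldsymbol{v}\in[0,1]^n}\mathcal{L}(\boldsymbol{w},\boldsymbol{v})$. Concretely, first I would unpack a partial optimum $(\boldsymbol{w}^*,\boldsymbol{v}^*)$ into its two coordinate-wise optimality conditions: (a) $\boldsymbol{v}^*\in\arg\min_{\boldsymbol{v}\in[0,1]^n}\mathcal{L}(\boldsymbol{w}^*,\boldsymbol{v})$, and (b) $\boldsymbol{w}^*\in\arg\min_{\boldsymbol{w}}\mathcal{L}(\boldsymbol{w},\boldsymbol{v}^*)$. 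By separability in $\boldsymbol{v}$ together with the convexity clause in Definition \ref{SPR}, condition (a) pins $v_i^* = v^*(\ell_i(\boldsymbol{w}^*),\lambda):=\arg\min_{v\in[0,1]} v\,\ell_i(\boldsymbol{w}^*)+f(v,\lambda)$ uniquely.

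The core step is an envelope-theorem/Danskin-style identity. Writing $F_\lambda(\ell):=\min_{v\in[0,1]} v\ell+f(v,\lambda)$, one has $G_\lambda(\boldsymbol{w})=\sum_j\alpha_j\mathcal{R}_j(\boldsymbol{w})+\sum_i F_\lambda(\ell_i(\boldsymbol{w}))$. Since the inner problem is strictly convex in $v$ on the compact set $[0,1]$ with unique argmin $v^*(\ell,\lambda)$, Danskin's theorem yields $F_\lambda'(\ell)=v^*(\ell,\lambda)$. I would then apply the chain rule (in the Clarke generalized-gradient sense to accommodate the $\mathbb{PC}^r$ regularity of Assumption \ref{smooth}) to obtain
\[
\partial G_\lambda(\boldsymbol{w}) \;=\; \sum_j \alpha_j\,\partial\mathcal{R}_j(\boldsymbol{w}) \;+\; \sum_i v^*(\ell_i(\boldsymbol{w}),\lambda)\,\partial\ell_i(\boldsymbol{w}).
\]

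From condition (b), the partial-optimality KKT inclusion reads $0\in\sum_j\alpha_j\,\partial\mathcal{R}_j(\boldsymbol{w}^*)+\sum_i v_i^*\,\partial\ell_i(\boldsymbol{w}^*)$. Substituting the identity $v_i^*=v^*(\ell_i(\boldsymbol{w}^*),\lambda)$ from step (a) into this inclusion yields exactly $0\in\partial G_\lambda(\boldsymbol{w}^*)$, which is precisely the stationarity condition for the implicit objective. The reverse direction (stationary point of $G_\lambda$ lifts to a partial optimum by defining $v_i:=v^*(\ell_i(\boldsymbol{w}^*),\lambda)$) is then immediate by the same chain-rule identification, and the two characterisations coincide.

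The main obstacle I anticipate is the nonsmoothness introduced by (i) the $\mathbb{PC}^r$ regularity of $\ell_i$ and $\mathcal{R}_j$ and (ii) boundary activation of $v^*$ at $0$ or $1$, where the naive envelope formula needs justification. The assumptions inherited from Theorem~2 of \cite{ma2018convergence} supply exactly the missing pieces: sufficient convexity/strict convexity of $f(\cdot,\lambda)$ for $v^*$ to be single-valued and continuous in $\ell$, together with enough regularity at the candidate point for the subdifferential chain rule for $G_\lambda$ to hold with equality. Once these conditions are invoked, the argument reduces to matching two KKT systems term-by-term, and the theorem follows.
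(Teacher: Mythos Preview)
Your proposal is correct and follows essentially the same route as the paper: both arguments reduce to showing that the subdifferential inclusion $0\in\partial G_\lambda(\boldsymbol{w}^*)$ and the partial-optimum KKT condition $0\in\partial_{\boldsymbol{w}}\mathcal{L}(\boldsymbol{w}^*,\boldsymbol{v}^*)$ are the same expression once one uses $v_i^*=v^*(\ell_i(\boldsymbol{w}^*),\lambda)$ and $F_\lambda'(\ell)=v^*(\ell,\lambda)$. The only cosmetic difference is that you derive $F_\lambda'=v^*$ via Danskin, whereas the paper (following \cite{ma2018convergence}) takes $F_\lambda(\ell)=\int_0^\ell v_\lambda^*(\tau)\,d\tau$ as the definition and reads off the derivative directly; note also that under the cited assumptions $\ell_i$ is continuously differentiable, so the paper writes $\nabla\ell_i$ rather than $\partial\ell_i$.
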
\vspace{-4pt}
Factoring in both Theorem \ref{conver1} \& \ref{conver2}, the ACS procedure (or its variations) used in SPL paradigm indeed finds the partial optimum of SPL objective, which unifies the two proposed analysis frameworks and provides more in-depth understanding to the intrinsic mechanism behind the SPL regime.

\section{Age-Path Tracking}\label{tracking}
\subsection{Objective Reformulation}

For the convenience of derivation, we denote the set $I_{\mathcal{R}}$ or $\bar{I}_{\mathcal{R}}$ to be the set of indexes $j$ where $\mathcal{R}_{j}$ is differentiable or non-differentiable at $\boldsymbol{w}$, respectively. Similarly, we have $I_{\ell}$ and $\bar{I}_{\ell}$ w.r.t. $\ell_{i}$. 
\begin{wrapfigure}[]{R}{0.66\textwidth}
	\vspace{-2.0em}
	\begin{minipage}{0.66\textwidth}
		\begin{equation*}
			\footnotesize
			x_{i} \in ~\begin{cases}
				\mathcal{E}, & \text { if } l_i<~\lambda ~\text{(or $\left( \dfrac{\lambda \gamma}{\lambda+\gamma}\right)^2$ in mixture $f\left(v, \lambda\right)$)}\\
				\mathcal{D}, & \text { if } l_i \geqslant ~\lambda~\text{(or $\lambda^2$ in mixture $f\left(v, \lambda\right)$)}  \\
				\mathcal{M}, & \text { if }\left(\dfrac{\lambda \gamma}{\lambda+\gamma}\right)^{2} \leqslant l_{i} \leqslant \lambda^{2} ~\text{(only used in mixture $f\left(v, \lambda\right)$)} 
			\end{cases}
		\end{equation*}
	\end{minipage}\vspace{-13pt}
\end{wrapfigure}

Moreover, the training of the SPL is essentially a process of adaptive sample selection, so we classify all the sample points in the training set into different sets $\mathcal{P}:=\left\lbrace \mathcal{E},\mathcal{D},\mathcal{M},...\right\rbrace $ according to their confidence (or loss)\footnote{We only present the mainstream SP-regularizers here. The partition is similar in other cases.}.  Figure \ref{fig:set} illustrates a partition example when  hard, linear or LOG SP-regularizer is used. Since subproblem in \emph{line $3$} of Algorithm \ref{acs} always gives closed-form solutions in iterations\footnote{
For example, we have $v_i^*=\left\{\begin{array}{c}
		-\ell_i / \lambda+1, \text { if } \ell_i<\lambda \\
		0, \text { if } \ell_i \geq \lambda
	\end{array}\right.$ for linear $f\left(v, \lambda\right)$. 
	More results are shown in \cite{meng2015objective}.}, we can rewrite SPL optimization objective as (\ref{repart}), which is indeed equivalent to  searching a partial optimum of (\ref{spl}).

\begin{equation}\label{repart}
	\text { Compute } \hat{\boldsymbol{w}} \text {,~ s.t.~} \hat{\boldsymbol{w}} \in\arg \min _{ \boldsymbol{w} \in \mathbb{R}^{d}}~ \sum_{j=1}^{m} \alpha_{j} \mathcal{R}_{j}(\boldsymbol{w})+\sum_{J \in \mathcal{P}} \sum_{i \in J} v_i^{*}\left(l_{i}(\hat{\boldsymbol{w}}), \lambda\right) \cdot \ell_{i}(\boldsymbol{w}).
\end{equation}

\subsection{Piecewise Smooth Age-path}\label{path}
The KKT theorem \cite{karush1939minima} states that (\ref{repart}) holds iff
\begin{equation}\label{mainKKT}
	\mathbf{0} \in ~\sum_{j=1}^{m} \alpha_{j} \partial \mathcal{R}_{j}(\hat{\boldsymbol{w}})+\sum_{J \in \mathcal{P}} \sum_{i \in J} v^{*}\left(l_{i}(\hat{\boldsymbol{w}}), \lambda\right) \cdot \partial \ell_{i}(\hat{\boldsymbol{w}}),
\end{equation}

where $\partial(\cdot)$ denotes the {subdifferential (set of all subgradients)}. In the $\mathbb{P}\mathbb{C}^{r}$ setting, subgradient can be expressed explicitly by essentially active functions (\emph{c.f.} Lemma \ref{subgd}). 

\begin{definition}[Essentially Active Set]\label{active}
	Let $f: U \rightarrow \mathbb{R}$ be a $\mathbb{P C}^{r}$ function on the open set $U \in \mathbb{R}^{n}$ with a set of selection functions  $\left\{f_{i}\right\}_{i \in I_{f}}$. For $\boldsymbol{x} \in U$, we call $I_{f}^{a}(\boldsymbol{x}):=\left\{i \in I_{f}: f(\boldsymbol{x})=f_{i}(\boldsymbol{x})\right\}$
	is the active set at $\boldsymbol{x}$, and
	$I_{f}^{e}(\boldsymbol{x}):=\left\{i \in I_{f}: \boldsymbol{x} \in \operatorname{\bf cl}\left(\rule{0pt}{10pt}\operatorname{\bf int}\left(\rule{0pt}{9pt}\left\{\boldsymbol{y} \in U: ~f(\boldsymbol{y})=f_{i}(\boldsymbol{y})\right\}\right)\right)\right\}$
	is the essentially active set at $\boldsymbol{x}$, where \textbf{cl}$(\cdot)$ and \textbf{int}$(\cdot)$ denote the closure and interior of a set.
\end{definition}
\begin{lemma}\label{subgd}\cite{held1974validation}
	Let $f: U \rightarrow \mathbb{R}$ be a $\mathbb{P}\mathbb{C}^{r}$ function on an open set $U$ and $\bigcup_{i \in I_{f}}\left\{f_{i}\right\}$ is a set of selection functions of $f$, then $
	\partial f(\boldsymbol{x})=\operatorname{conv}(\bigcup_{i \in I_{f}^{e}(\boldsymbol{x})}\left\{f_{i}(x)\right\})=\{\sum_{i \in I_{f}^{e}(\boldsymbol{x})} t_{i} \nabla f_{i}(\boldsymbol{x}): \sum_{i \in I_{f}^{e}(\boldsymbol{x})} t_{i}=1,~ t_{i} \geq 0\} .$
	Especially, if $f$ is differentiable at $\boldsymbol{x}$, $\partial f(\boldsymbol{x})=\{\nabla f(\boldsymbol{x})\}$.
\end{lemma}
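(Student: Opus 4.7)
The plan is to prove Lemma \ref{subgd} by appealing to Rademacher's theorem, the limit-of-gradients characterization of the Clarke generalized gradient, and the fact that for convex functions the Clarke generalized gradient coincides with the convex subdifferential. The key structural observation is that for a $\mathbb{PC}^r$ function, the set of limiting gradients at nearby differentiable points is exactly $\{\nabla f_i(\boldsymbol{x}) : i \in I_f^e(\boldsymbol{x})\}$, so convexification immediately delivers the stated formula.

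First I would verify that $f$ is locally Lipschitz near $\boldsymbol{x}$: on any compact neighborhood $K \subset U$, $f$ takes values in the finite collection $\{f_i\}_{i \in I_f}$ of $C^r$ functions, each of which is Lipschitz on $K$; continuity of $f$ then forces $f$ itself to be Lipschitz on $K$ with constant $\max_i \mathrm{Lip}(f_i|_K)$. Hence Rademacher's theorem applies, and the Clarke generalized gradient admits the representation $\partial^\circ f(\boldsymbol{x}) = \operatorname{conv}\{\lim_k \nabla f(\boldsymbol{x}_k) : \boldsymbol{x}_k \to \boldsymbol{x},\ f \text{ differentiable at } \boldsymbol{x}_k\}$.

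Next I would identify the limiting-gradient set with $\{\nabla f_i(\boldsymbol{x}) : i \in I_f^e(\boldsymbol{x})\}$. For the inclusion $\supseteq$, fix $i \in I_f^e(\boldsymbol{x})$; by Definition \ref{active}, $\boldsymbol{x} \in \operatorname{\mathbf{cl}}(\operatorname{\mathbf{int}}\{\boldsymbol{y} : f(\boldsymbol{y})=f_i(\boldsymbol{y})\})$, so one can pick $\boldsymbol{x}_k \to \boldsymbol{x}$ lying in the interior of this set; on a neighborhood of each $\boldsymbol{x}_k$ we have $f \equiv f_i$, giving $\nabla f(\boldsymbol{x}_k) = \nabla f_i(\boldsymbol{x}_k) \to \nabla f_i(\boldsymbol{x})$ by $C^r$ smoothness. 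For the inclusion $\subseteq$, suppose $\nabla f(\boldsymbol{x}_k) \to \boldsymbol{g}$; each differentiable point $\boldsymbol{x}_k$ lies in $\operatorname{\mathbf{int}}\{f = f_{j(k)}\}$ for some selection index $j(k)$ (if it lay only on a lower-dimensional coincidence set, one would need to argue that the recorded gradient still matches some essentially active $\nabla f_j$). Since $I_f$ is finite, pigeonhole lets me pass to a subsequence with constant index $j(k) \equiv j$; then $j \in I_f^e(\boldsymbol{x})$ and $\boldsymbol{g} = \nabla f_j(\boldsymbol{x})$. Convexifying gives $\partial^\circ f(\boldsymbol{x}) = \operatorname{conv}\{\nabla f_i(\boldsymbol{x}) : i \in I_f^e(\boldsymbol{x})\}$. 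Finally, since Assumption \ref{smooth} posits convexity of $f$, we have $\partial^\circ f(\boldsymbol{x}) = \partial f(\boldsymbol{x})$, yielding the stated identity; the differentiable case is immediate because all essentially active $\nabla f_i(\boldsymbol{x})$ must then equal $\nabla f(\boldsymbol{x})$.

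The main obstacle I anticipate is the careful handling of the $\subseteq$ direction, specifically ruling out the pathological scenario where a differentiable point $\boldsymbol{x}_k$ sits on a measure-zero coincidence boundary between two selection regions and is nevertheless picked up by the limit-gradient set. The clean way around this is to invoke the standard nonsmooth-analysis fact that at such boundary points $f$ is generically not differentiable, and when it is, the gradient must match $\nabla f_j(\boldsymbol{x}_k)$ for some $j$ with $\boldsymbol{x}_k$ an accumulation point of $\operatorname{\mathbf{int}}\{f = f_j\}$, so $j \in I_f^e(\boldsymbol{x})$ after the pigeonhole reduction. Once this is handled, the rest of the argument is essentially bookkeeping.
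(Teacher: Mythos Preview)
The paper does not supply its own proof of this lemma: it is quoted verbatim as a known result from \cite{held1974validation}, and no argument for it appears anywhere in the body or the appendix. So there is nothing on the paper's side to compare against.

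Your outline is the standard route in the nonsmooth-analysis literature (local Lipschitzness $\Rightarrow$ Rademacher $\Rightarrow$ Clarke gradient as convex hull of limiting gradients $\Rightarrow$ identification of the limiting-gradient set with $\{\nabla f_i(\boldsymbol{x}):i\in I_f^e(\boldsymbol{x})\}$, then Clarke $=$ convex subdifferential under convexity). The one place where your sketch is genuinely soft is the $\subseteq$ direction: you cannot quite assert that every differentiable $\boldsymbol{x}_k$ lies in $\operatorname{\mathbf{int}}\{f=f_{j(k)}\}$ for some $j(k)$. The clean fix is not the case analysis you hint at but rather the null-set robustness of the Clarke representation: one may discard any Lebesgue-null set $N$ when forming the limiting gradients, and the set $U\setminus\bigcup_i\operatorname{\mathbf{int}}\{f=f_i\}$ is null (it is contained in a finite union of coincidence sets $\{f_i=f_j\}$ with $i\neq j$, each of which either has empty interior or is absorbed into an essentially active region). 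With $N$ chosen this way, every surviving $\boldsymbol{x}_k$ lies in some $\operatorname{\mathbf{int}}\{f=f_j\}$ by construction, and your pigeonhole argument goes through cleanly. Apart from that tightening, the proposal is correct.
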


\begin{assumption}
	\label{essActive}
	We assume that $I_{\mathcal{R}_{j}}^{a}(\boldsymbol{x})=I_{\mathcal{R}_{j}}^{e}(\boldsymbol{x}),I_{\ell_{i}}^{a}(\boldsymbol{x})=I_{\ell_{i}}^{e}(\boldsymbol{x})$ holds for all $\boldsymbol{x}$ considered and all $\mathcal{R}_{j},\ell_{i}$ in the following. 

\end{assumption}
\vspace{-4pt}
We adopt a mild relaxation as shown in Assumption \ref{essActive}. Investigation \cite{gebken2021structure} confirmed that it can be easily established in most practical scenarios. Without loss of generality, we suppose the following Assumption \ref{easeNotation} also holds to further ease the notation burden.
\begin{assumption}\label{easeNotation}
    We assume that $\mathcal{R}_{j},\ell_{i}$ are non-differentiable at $\boldsymbol{x}$ with multiple active selection functions, where $j\in \{1,\ldots,m\},~i\in \{1,\ldots, n\}.$
\end{assumption}

\vspace{-7pt}
Therefore, the condition (\ref{mainKKT}) can be rewritten in detail. Formally, there exists $\hat{\boldsymbol{t}}_{\mathcal{R}}$ and $\hat{\boldsymbol{t}}_{\ell}$ such that

\begin{equation}\label{KKT_}
	\begin{aligned}
		\sum_{j=1}^{m} \sum_{k \in I_{\mathcal{R}_{j}}^{a}(\hat{\boldsymbol{w}})} \alpha_{j} \hat{t}_{\mathcal{R}_{j}}^{k}(\hat{\boldsymbol{w}}) \nabla D_{\mathcal{R}_{j}}^{k}(\hat{\boldsymbol{w}})&+\sum_{J \in \mathcal{P}} \sum_{i \in J} \sum_{k \in I_{\ell_{i}}^{a}(\hat{\boldsymbol{w}})} v^{*}_i\left(\ell_{i}(\hat{\boldsymbol{w}}), \lambda\right) \hat{t}_{\ell_{i}}^{k}(\hat{\boldsymbol{w}}) \nabla D_{\ell_{i}}^{k}(\hat{\boldsymbol{w}})=\boldsymbol{0}, \\
		D_{\mathcal{R}_{j}}^{k}(\hat{\boldsymbol{w}})-D_{\mathcal{R}_{j}}^{r_{j}}(\hat{\boldsymbol{w}})&=0,
		\quad \forall k \in I_{\mathcal{R}_{j}}^{a}(\hat{\boldsymbol{w}})\backslash \{r_{j}\},
		\quad \forall j \in \bar{I}_{\mathcal{R}} \\
		D_{\ell_{i}}^{k}(\hat{\boldsymbol{w}})-D_{\ell_{i}}^{l_{i}}(\hat{\boldsymbol{w}})&=0, 
		\quad \forall k \in I_{\ell_{i}}^{a}(\hat{\boldsymbol{w}})\backslash \{l_{i}\},
		\quad \forall i \in \bar{I}_{\ell}\\
		\quad \sum_{k \in I_{\mathcal{R}_{j}}^{a}(\hat{\boldsymbol{w}})} \hat{t}_{\mathcal{R}_{j}}^{k}(\hat{\boldsymbol{w}})-1&=0,  \quad \hat{t}_{\mathcal{R}_{j}}^{k}(\hat{\boldsymbol{w}}) \geq 0, 
		\quad 1 \leq j \leq m \\
		\sum_{k \in I_{\ell_{i}}^{a}(\hat{\boldsymbol{w}})} \hat{t}_{\ell_{i}}^{k}(\hat{\boldsymbol{w}})-1&=0,  \quad \hat{t}_{\ell_{i}}^{k}(\hat{\boldsymbol{w}}) \geq 0, 
		\quad 1\leq i \leq n ,
	\end{aligned}
\end{equation}
where $r_{j},l_{i}$ is randomly selected from $I_{\mathcal{R}_{j}}^{a},I_{\ell_{i}}^{a}$ and being fixed. The second and third equations in (\ref{KKT_}) describe the active sets while the last two equations describe the subgradients.  When the partial optimum is on the smooth part, we denote the left side of equations (\ref{KKT_}) to be a $\mathbb C^{1}$ function $\mathcal{F}$, thus revealing that the solution path lies on the smooth manifold $\mathcal{F}\left(\boldsymbol{w}, \lambda, \boldsymbol{t}_{\mathcal{R}}, \boldsymbol{t}_{\ell}\right)=\mathbf{0}$. By the time it comes across the kink\footnote{$\hat{\boldsymbol{t}}_{\bar{I}_\mathcal{R}},\hat{\boldsymbol{t}}_{\bar{I}_{\ell}}$ hit the restriction bound in Lemma \ref{subgd} or $I_{\ell},\mathcal{P}$ are violated so that the entire structure changes.}, we need to 
refresh the index partitions and update (\ref{KKT_}) to run next segment of path. WLOG, we postulate that the initial point is non-degenerate (\emph{i.e.}, the  $\boldsymbol{J}_{\boldsymbol{w}, \boldsymbol{t}_{\mathcal{R}}, \boldsymbol{t}_{\ell}}$ is invertible). By directly applying the implicit function theorem, the existence and uniqueness of a local $\mathbb{C}^{1}$ solution path $\left(\hat{\boldsymbol{w}},\hat{\boldsymbol{t}}_{\mathcal{R}},\hat{\boldsymbol{t}}_{\mathcal{\ell}}\right)$ can be established over here. Drawing from the 
theory of differential geometry gives another intuitive understanding of age-path, which tells that the first equation in (\ref{KKT_}) indeed uses an analogue moving frame \cite{o2006elementary} to represent a smooth curve that consists of the smooth structure. 
\begin{theorem}\label{theorem}
	Given a partial optimum $\left(\hat{\boldsymbol{w}},\boldsymbol{v}^{*}\left(\hat{\boldsymbol{w}},\lambda\right)\right)$ at $\lambda_{0}$, $\hat{\boldsymbol{t}}_{\mathcal{R}}, \hat{\boldsymbol{t}}_{\ell}$ in (\ref{KKT_}) can be solved from $\mathcal{F}\left(\hat{\boldsymbol{w}}, \lambda_{0}, \hat{\boldsymbol{t}}_{\mathcal{R}}, \hat{\boldsymbol{t}}_{\ell}\right)=\mathbf{0}$. If the Jacobian $\boldsymbol{J}_{\boldsymbol{w}, \boldsymbol{t}_{\mathcal{R}}, \boldsymbol{t}_{\ell}}$ is invertible at $\left(\hat{\boldsymbol{w}}, \hat{\boldsymbol{t}}_{\mathcal{R}}, \hat{\boldsymbol{t}}_{\ell}\right)$, then in an open neighborhood of $\lambda_{0}$, $\left(\hat{\boldsymbol{w}}, \hat{\boldsymbol{t}}_{\mathcal{R}}, \hat{\boldsymbol{t}}_{\ell}\right)$ is a $\mathbb{C}^{1}$ function w.r.t. $\lambda$ and fits the ODEs
	\begin{equation}\label{mainODE}
		\frac{d\left(\begin{array}{c}
				\hat{\boldsymbol{w}} \\
				\hat{\boldsymbol{t}_{\mathcal{R}}} \\
				\hat{\boldsymbol{t}_{\ell}}
			\end{array}\right)}{\rule{0pt}{10pt}d \lambda}=-\boldsymbol{J}_{\boldsymbol{w}, \boldsymbol{t}_{\mathcal{R}}, \boldsymbol{t}_{\ell}}^{-1} \cdot \boldsymbol{J}_{\lambda},
	\end{equation}
	in which the explicit expressions of $\boldsymbol{J}_{\boldsymbol{w}, \boldsymbol{t}_{\mathcal{R}}, \boldsymbol{t}_{\ell}}^{-1},\boldsymbol{J}_{\lambda}$ are listed in Appendix \ref{details}.
\end{theorem}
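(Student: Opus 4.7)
The plan is to recast the statement as a direct consequence of the implicit function theorem (IFT) applied to the KKT system $\mathcal{F}(\boldsymbol{w},\lambda,\boldsymbol{t}_{\mathcal{R}},\boldsymbol{t}_{\ell})=\mathbf{0}$ assembled from the first three row-blocks of (\ref{KKT_}), followed by chain-rule differentiation to extract the ODE form. Existence of $\hat{\boldsymbol{t}}_{\mathcal{R}},\hat{\boldsymbol{t}}_{\ell}$ at $\lambda_0$ is already guaranteed by Lemma \ref{subgd} together with the KKT condition (\ref{mainKKT}); what remains is the $\mathbb{C}^1$ extension in $\lambda$ and the derivation of (\ref{mainODE}).

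First I would check that $\mathcal{F}$ is $\mathbb{C}^1$ near the reference point. By Assumption \ref{smooth} every selection function $D_{\mathcal{R}_j}^{k}$ and $D_{\ell_i}^{k}$ is $\mathbb{C}^r$ with $r\geq1$, so the selection-equality and simplex-normalization rows of (\ref{KKT_}) are smooth. The closed-form solution $v_i^{*}(\ell_i,\lambda)$ provided by Definition \ref{SPR} is smooth in both arguments on each stratum of the partition $\mathcal{P}=\{\mathcal{E},\mathcal{D},\mathcal{M},\ldots\}$ (e.g.\ $v_i^{*}=1-\ell_i/\lambda$ on $\mathcal{E}$ for the linear regularizer, constants on $\mathcal{D}$ and $\mathcal{M}$), so the weighted-gradient row is smooth too. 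Finally, Assumption \ref{essActive} ensures that the essentially-active sets $I^{a}_{\mathcal{R}_j},I^{a}_{\ell_i}$ used to index $\mathcal{F}$ remain constant on a small enough neighborhood of $\lambda_0$, so $\mathcal{F}$ admits a single $\mathbb{C}^1$ expression rather than a piecewise one.

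Second, I would invoke IFT. The hypothesis that $\boldsymbol{J}_{\boldsymbol{w},\boldsymbol{t}_{\mathcal{R}},\boldsymbol{t}_{\ell}}$ is invertible at $(\hat{\boldsymbol{w}},\hat{\boldsymbol{t}}_{\mathcal{R}},\hat{\boldsymbol{t}}_{\ell},\lambda_0)$ is exactly the non-degeneracy condition, with $\lambda$ as base variable and $(\boldsymbol{w},\boldsymbol{t}_{\mathcal{R}},\boldsymbol{t}_{\ell})$ as fibre. IFT therefore yields an open interval $U\ni\lambda_0$ and a unique $\mathbb{C}^1$ map
\[
\lambda \;\longmapsto\; \bigl(\hat{\boldsymbol{w}}(\lambda),\,\hat{\boldsymbol{t}}_{\mathcal{R}}(\lambda),\,\hat{\boldsymbol{t}}_{\ell}(\lambda)\bigr),\qquad \lambda\in U,
\]
matching the given initial data and satisfying $\mathcal{F}\equiv\mathbf{0}$. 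Differentiating this identity by the chain rule gives
\[
\boldsymbol{J}_{\boldsymbol{w},\boldsymbol{t}_{\mathcal{R}},\boldsymbol{t}_{\ell}}\cdot \frac{d}{d\lambda}\!\begin{pmatrix}\hat{\boldsymbol{w}}\\ \hat{\boldsymbol{t}}_{\mathcal{R}}\\ \hat{\boldsymbol{t}}_{\ell}\end{pmatrix}+\boldsymbol{J}_{\lambda}=\mathbf{0},
\]
and left-multiplying by $\boldsymbol{J}_{\boldsymbol{w},\boldsymbol{t}_{\mathcal{R}},\boldsymbol{t}_{\ell}}^{-1}$ produces (\ref{mainODE}) precisely. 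The explicit blockwise forms of $\boldsymbol{J}_{\lambda}$ and $\boldsymbol{J}_{\boldsymbol{w},\boldsymbol{t}_{\mathcal{R}},\boldsymbol{t}_{\ell}}$ are then read off by differentiating each row-block of (\ref{KKT_}); it is worth noting that $\lambda$ enters explicitly only through the weights $v_i^{*}(\ell_i(\hat{\boldsymbol{w}}),\lambda)$, while all other terms pick up $\lambda$-dependence only implicitly through $\hat{\boldsymbol{w}}$ and the duals.

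The hard part is not the IFT call itself but the bookkeeping needed to justify treating $\mathcal{F}$ as a \emph{single} smooth map on $U$. One has to argue that, shrinking $U$ if necessary, no sample crosses a partition boundary of $\mathcal{P}$ (i.e.\ the strict inequalities $l_i<\lambda$, $l_i>\lambda$, and their mixture-regularizer analogues are preserved), the essentially-active sets remain constant, and every simplex multiplier $\hat{t}^{k}$ stays strictly positive so the inequality constraints in (\ref{KKT_}) remain inactive. All three conditions hold generically at a non-degenerate point and are exactly the events whose failure the algorithm detects as a ``kink''; at a kink one refreshes the partition and active sets and then re-applies the present theorem, so concatenating successive smooth segments produces the global piecewise-smooth age-path promised in Section \ref{path}.
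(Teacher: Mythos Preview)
Your proposal is correct and follows essentially the same route as the paper: verify that the equality block of (\ref{KKT_}) defines a $\mathbb{C}^{1}$ map $\mathcal{F}$ on a neighborhood where the partition $\mathcal{P}$ and active sets are frozen and the inequality constraints are slack, then apply the implicit function theorem with $\lambda$ as the base variable and differentiate to obtain (\ref{mainODE}). The only item the paper adds beyond your outline is an explicit appeal to the Picard--Lindel\"of theorem to argue that the local $\mathbb{C}^{1}$ branch can be continued until the next critical point, but this is an extension remark rather than part of the theorem's neighborhood claim; one small slip in your write-up is that $\mathcal{F}$ must include \emph{all} equality rows of (\ref{KKT_}), in particular the simplex-normalization constraints, not just the first three blocks.
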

\begin{cor}\label{cor}
	If all the functions are smooth in a neighborhood of the initial point, then (\ref{mainODE}) can be simplified as $d\hat{\boldsymbol{w}}/d\lambda=-\boldsymbol{J}_{\boldsymbol{w}}^{-1}\cdot\boldsymbol{J}_{\lambda}.$
\end{cor}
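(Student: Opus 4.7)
The plan is to specialize Theorem \ref{theorem} by showing that under full smoothness the auxiliary variables $\boldsymbol{t}_{\mathcal{R}}, \boldsymbol{t}_{\ell}$ become trivial constants, causing the ODE system (\ref{mainODE}) to collapse onto its $\boldsymbol{w}$ block alone. First I would invoke Lemma \ref{subgd}: differentiability of each $\mathcal{R}_j$ and each $\ell_i$ at $\hat{\boldsymbol{w}}$ forces the subdifferentials to be the singletons $\{\nabla \mathcal{R}_j(\hat{\boldsymbol{w}})\}$ and $\{\nabla \ell_i(\hat{\boldsymbol{w}})\}$. Combined with Assumption \ref{essActive}, the essentially active sets $I_{\mathcal{R}_j}^{e}$ and $I_{\ell_i}^{e}$ each reduce to a single index, and therefore $\bar{I}_{\mathcal{R}} = \bar{I}_{\ell} = \emptyset$ in a neighborhood of the initial point.

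Consequently, in the KKT system (\ref{KKT_}) the second and third lines, which encode the coincidence of multiple active selection functions, become vacuous; and the normalization lines collapse to $\hat{t}_{\mathcal{R}_j}^{r_j}\equiv 1$ and $\hat{t}_{\ell_i}^{l_i}\equiv 1$, since each sum now contains exactly one non-negative term forced to equal $1$. Feeding these identities back into the first equation of (\ref{KKT_}) yields the reduced stationarity condition
$$\sum_{j=1}^{m} \alpha_{j}\, \nabla \mathcal{R}_{j}(\hat{\boldsymbol{w}}) + \sum_{J \in \mathcal{P}} \sum_{i \in J} v_{i}^{*}\bigl(\ell_{i}(\hat{\boldsymbol{w}}), \lambda\bigr)\, \nabla \ell_{i}(\hat{\boldsymbol{w}}) = \boldsymbol{0},$$
which depends purely on $(\boldsymbol{w}, \lambda)$. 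The full Jacobian $\boldsymbol{J}_{\boldsymbol{w}, \boldsymbol{t}_{\mathcal{R}}, \boldsymbol{t}_{\ell}}$ evaluated in this smooth regime therefore block-decomposes: the $\boldsymbol{t}$-rows reduce to trivial identity constraints, leaving $\boldsymbol{J}_{\boldsymbol{w}}$ as the only non-trivial block, and its invertibility is inherited from that of the full Jacobian assumed in Theorem \ref{theorem}.

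Finally I would apply the implicit function theorem to this reduced equation to obtain a local $\mathbb{C}^{1}$ map $\lambda\mapsto\hat{\boldsymbol{w}}(\lambda)$, differentiate in $\lambda$, and read off $d\hat{\boldsymbol{w}}/d\lambda=-\boldsymbol{J}_{\boldsymbol{w}}^{-1}\cdot\boldsymbol{J}_{\lambda}$; the $\boldsymbol{t}$-components of (\ref{mainODE}) vanish identically because $\hat{\boldsymbol{t}}_{\mathcal{R}}$ and $\hat{\boldsymbol{t}}_{\ell}$ are constant. The main obstacle I expect is the bookkeeping required to verify cleanly that the smooth-case reduction of $\boldsymbol{J}_{\boldsymbol{w}, \boldsymbol{t}_{\mathcal{R}}, \boldsymbol{t}_{\ell}}$ does isolate $\boldsymbol{J}_{\boldsymbol{w}}$ as claimed, and choosing the neighborhood of $\lambda_0$ small enough that smoothness of all $\mathcal{R}_j$ and $\ell_i$ persists along the path so that Assumption \ref{essActive} is never invoked in a degenerate way.
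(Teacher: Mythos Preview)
Your proposal is correct and follows essentially the same route as the paper: under full smoothness one notes that $\bar{I}_{\mathcal{R}}=\bar{I}_{\ell}=\emptyset$, the KKT system (\ref{KKT_}) collapses to the single stationarity equation in $(\boldsymbol{w},\lambda)$, and Theorem \ref{theorem} (equivalently, the implicit function theorem) then gives $d\hat{\boldsymbol{w}}/d\lambda=-\boldsymbol{J}_{\boldsymbol{w}}^{-1}\boldsymbol{J}_{\lambda}$. Your version is more explicit about why the $\boldsymbol{t}$-variables freeze at $1$ and how the Jacobian block-decomposes, whereas the paper simply asserts the degeneration in one line; but the argument is the same.
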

\begin{remark}
Our supplement parts in Appendix \ref{details} present additional discussions.
\end{remark}

\subsection{Critical Points}
By solving the initial value problem (\ref{mainODE}) numerically with ODE solvers, the solution path regarding to $\lambda$ can be computed swiftly before any of $\mathcal{P},I_{\mathcal{R}}$ or $I_{\ell}$ changes. We denote such point where the set changes a \emph{critical point}, which can be divided into \emph{turning point} or \emph{jump point} on the basis of path's property at that point. To be more specific, the age-path is discontinuous at a jump point, while being continuous but non-differentiable at the turning point. This is also verified by Figure \ref{fig:path} and large quantity of numerical experiments.
At turning points, the operation of the algorithm is to update $\mathcal{P},I_{\mathcal{R}}, I_{\ell}$ according to index violator(s) and move on to the next segment. At jump points, path is no longer continuous and warm-start\footnote{Reuse previous solutions. The subsequent calls to fit the model will not re-initialise parameters.} can be utilized to speed up the training procedure. The total number of critical points on the solution path is estimated at approximately $\mathcal{O}(|\mathcal{D}\cup\mathcal{M}|)$\footnote{Precisely speaking, it's related to interval length of $\lambda$, the nature of objective and the distribution of data.}. Consequently, we present a heuristic trick to figure out the type of a critical point with complexity $\mathcal{O}(d)$, so as to avoid excessive restarts. As a matter of fact, the solutions returned by the numerical ODE solver is continuous with the fixed set, despite it may actually passes a jump point. In this circumstance, the solutions returned by ODEs have deviated from the ground truth partial optimum. Hence it's convenient that we can detect KKT conditions to monitor this behavior. This approach enjoys higher efficiency than detecting the partition conditions themselves, especially when the set partition is extraordinarily complex.

\subsection{\method Algorithm}

\begin{algorithm}[t]
	\caption{~{G}eneralized {Ag}e-path {A}lgorithm (\method)}
	\label{apspl}
	\textbf{Input}: Initial solution $\hat{\boldsymbol w}|_{\lambda_t=\lambda_{min}}$, $X$, $y$, $\lambda_{min}$ and $\lambda_{max}$.\\
	\textbf{Output}: Age-Path $\hat{\boldsymbol w}\left( \lambda\right) $ on $\left[ \lambda_{min}, \lambda_{max}\right]$.\\\vspace{-12pt}
	\begin{algorithmic}[1] 
		\STATE $\lambda_t\gets\lambda_{min}$, set $\mathcal{P}, I_{\mathcal{R}}, I_{\ell}$ according to $\hat{\boldsymbol w}|_{\lambda_t}$.
		\WHILE{$\lambda_t\leq\lambda_{max}$  }
		\STATE Solve (\ref{mainODE}) and examine partition $\mathcal{P},I_{\mathcal{R}}, I_{\ell}$ simultaneously.
		\IF{Partition $\mathcal{P},I_{\mathcal{R}}, I_{\ell}$ was not met}
		\STATE Update $\mathcal{P},I_{\mathcal{R}}, I_{\ell}$ according to index violator(s).
		\STATE Solve (\ref{mainODE}) with updated $\mathcal{P},I_{\mathcal{R}}, I_{\ell}$.
		\IF {KKT conditions are not met}
		\STATE Warm start at $\lambda_t+\delta$ (for a small $\delta>0$).
		\ENDIF
		\ENDIF
		\ENDWHILE
	\end{algorithmic}
\end{algorithm}
There has been extensive research in applied mathematics on numerical methods for solving ODEs, 
where the solver could automatically determine the step size of $\lambda$ when solving (\ref{mainODE}). In the tracking process, we compute the solutions with regard to $\lambda$. After detecting a new critical point, we need to reset $\mathcal{P},I_{\mathcal{R}},I_{\ell}$ at turning point while warm-start is required for jump point. The above procedure is repeated until we traverse the entire interval $\left[ \lambda_{min}, \lambda_{max}\right] $. We show the detailed procedure in Algorithm \ref{apspl}. The main computational burden occurs in 
solving $\boldsymbol{J}^{-1}$ in (\ref{mainODE}) with an approximate complexity $\mathcal{O}(p^{3})$ in general, where $p$ denotes the dimension of $\boldsymbol{J}$. Further promotion can be made via decomposition or utilizing the sparse representation of $\boldsymbol{J}$ on specific learning problems.

\section{Practical Guides}\label{guides}
In this section, we provide practical guides of using the \method to solve two important learning problems, \emph{i.e.}, classic SVM and Lasso. The detailed steps of algorithms are displayed in Appendix \ref{algs}.
\subsection{Support Vector Machines}\label{svmGuide}
Support vector machine (SVM) \cite{vapnik1999nature} has attracted much attention from researchers in the areas of bioinformatics, computer vision and pattern recognition. Given the dataset $X$ and label $\boldsymbol{y}$, we focus on the classic support vector classification as 
\begin{equation}\label{svm}
	\min_{\boldsymbol{w},b} \frac{1}{2}\|\boldsymbol{w}\|_{\mathcal{H}}^{2}+\sum_{i=1}^{n}C\max\left\lbrace  0,~1-y_{i}(\langle \phi(x_{i}),\boldsymbol{w}\rangle +b)\right\rbrace,
\end{equation}

where $\mathcal{H}$ is the reproducing kernel Hilbert space (RKHS) with the inner product $\langle \cdot \rangle$ and corresponding kernel function $\phi$. {Seeing that (\ref{KKT_}) still holds in infinite dimensional $\mathcal{H}$, the above analyses can be directly applied here. We also utilize the \emph{kernel trick} \cite{scholkopf2000kernel} to  avoid involving the explicit expression of $\phi$. In consistent with the framework, we have $~\ell_{i}=C\max\left\lbrace 0,g_{i}\right\rbrace$ and $g_{i}=1-y_{i}(\langle \phi(x_{i}),\boldsymbol{w}\rangle +b).$ The $I_{\ell}$ and $\mathcal{P}$ are determined by $\boldsymbol{g}$, thus we merely need to refine the division of $\mathcal{E}$ as $\mathcal{E}_{N}=\{i\in\mathcal{E}:g_{i}<0\},\mathcal{E}_{Z}=\{i\in\mathcal{E}:g_{i}=0\}$ and $ \mathcal{E}_{P}=\{i\in\mathcal{E}:g_{i}>0\}$, which gives $I_{\ell}=\mathcal{E}_{N}\cup\mathcal{E}_{P}\cup\mathcal{D}(\cup\mathcal{M})$. Afterwards, with some simplifications and denoting $\hat{\boldsymbol{\alpha}}=C\boldsymbol{v}^{*}\odot \hat{\boldsymbol{t}},$ we can obtain a simplified version of (\ref{KKT_}), from where the age-path can be equivalently calculated w.r.t. optimal $(\hat{\boldsymbol{\alpha}}, \hat{b})$. }

\begin{proposition}\label{propSVM}
	When $\boldsymbol{\alpha},b$ indicate a partial optimum, the dynamics of optimal $\boldsymbol{\alpha},b$ in (\ref{svm}) w.r.t. $\lambda$ for the linear and mixture SP-regularizer are shown as\footnote{Notations such as $\boldsymbol{\ell}_{\mathcal{M}}^{-\frac{3}{2}}$ for vectors represent the element-wise operation in this section.} 
	\begin{equation}\label{SVMode}
		\frac{d\left(\begin{array}{c}
				\boldsymbol{\alpha}_{\mathcal{E}_{Z}} \\
				\boldsymbol{\alpha}_{\mathcal{E}_{P}} \\
				b
			\end{array}\right)}{\rule{0pt}{10pt}d\lambda} =
		\left(\begin{array}{ccc}
		-\boldsymbol{y}_{\mathcal{E}_{Z}}^{T} & -\boldsymbol{y}_{\mathcal{E}_{P}}^{T} & 0\\
			Q_{\mathcal{E}_{Z}\mathcal{E}_{Z}} & Q_{\mathcal{E}_{Z}\mathcal{E}_{P}} & \boldsymbol{y}_{\mathcal{E}_{Z}} \\
			\frac{C^{2}}{\lambda} Q_{\mathcal{E}_{P}\mathcal{E}_{Z}} & \frac{C^{2}}{\lambda} Q_{\mathcal{E}_{P}\mathcal{E}_{P}}-I_{\mathcal{E}_{P}\mathcal{E}_{P}} & \frac{C^{2}}{\lambda} \boldsymbol{y}_{\mathcal{E}_{P}} 
			
		\end{array}\right)^{-1}
		\left(\begin{array}{c}
		    0\\
			\boldsymbol{0}_{\mathcal{E}_{Z}} \\
			-\frac{C}{\lambda^{2}}\boldsymbol{\ell}_{\mathcal{E}_{P}} 
			
		\end{array}\right),
	\end{equation}
	\vspace{-15pt}
	\begin{equation}\label{SVMode1}
		\frac{d\left(\begin{array}{c}
				\boldsymbol{\alpha}_{\mathcal{E}_{Z}} \\
				\boldsymbol{\alpha}_{\mathcal{M}} \\
				b
			\end{array}\right)}{\rule{0pt}{10pt}d\lambda} =
		\left(\begin{array}{ccc}
		    -\boldsymbol{y}_{\mathcal{E}_{Z}}^{T} & -\boldsymbol{y}_{\mathcal{M}}^{T} & 0\\
			Q_{\mathcal{E}_{Z}\mathcal{E}_{Z}} & Q_{\mathcal{E}_{Z}\mathcal{M}} & \boldsymbol{y}_{\mathcal{E}_{Z}} \\
			\frac{C^{2}\gamma}{2}\tilde{Q}_{\mathcal{M}\mathcal{E}_{Z}} & \frac{C^{2}\gamma}{2} \tilde{Q}_{\mathcal{M}\mathcal{M}}-I_{\mathcal{M}\mathcal{M}} & \frac{C^{2}\gamma}{2}\tilde{\boldsymbol{y}}_{\mathcal{M}} 
			
		\end{array}\right)^{-1}
		\left(\begin{array}{c}
		    0\\
			\boldsymbol{0}_{\mathcal{E}_{Z}} \\
			-\frac{C\gamma}{\lambda^{2}}\boldsymbol{1}_{\mathcal{M}} 
			
		\end{array}\right),
	\end{equation}
	respectively, where $\tilde{Q}_{\mathcal{M}\mathcal{E}_{Z}}=Diag\{\boldsymbol{\ell}_{\mathcal{M}}^{-\frac{3}{2}}\}Q_{\mathcal{M}\mathcal{E}_{Z}},\tilde{Q}_{\mathcal{M}\mathcal{M}}=Diag\{\boldsymbol{\ell}_{\mathcal{M}}^{-\frac{3}{2}}\}Q_{\mathcal{M}\mathcal{M}},\tilde{\boldsymbol{y}}_{\mathcal{M}}=\boldsymbol{\ell}_{\mathcal{M}}^{-\frac{3}{2}}\odot \boldsymbol{y}_{\mathcal{M}}.$ 
	
	Other components are constant as $\boldsymbol{\alpha}_{\mathcal{E}_{N}}=\boldsymbol{0}_{\mathcal{E}_{N}},\boldsymbol{\alpha}_{\mathcal{D}}=\boldsymbol{0}_{\mathcal{D}}$. Only for mixture regularizer, $\boldsymbol{\alpha}_{\mathcal{E}_{P}}=\boldsymbol{1}_{\mathcal{E}_{P}}$.
\end{proposition}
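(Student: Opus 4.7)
The plan is to specialize the general ODE (\ref{mainODE}) from Theorem \ref{theorem} to the SVM objective (\ref{svm}). First I would identify the selection structure: the regularizer $\frac{1}{2}\|\boldsymbol{w}\|_{\mathcal{H}}^2$ is smooth with gradient $\boldsymbol{w}$, whereas the hinge loss $\ell_i=C\max\{0,g_i\}$ has the two selection functions $\{0, Cg_i\}$ and is non-differentiable exactly at $g_i=0$. This justifies the refined partition $\mathcal{E}=\mathcal{E}_N\cup\mathcal{E}_Z\cup\mathcal{E}_P$: samples in $\mathcal{E}_N$ have active selection $\{0\}$ and contribute zero gradient, while $i\in\mathcal{D}$ have $v_i^{*}=0$; both therefore drop out of the KKT system (\ref{KKT_}), leaving only $\mathcal{E}_Z, \mathcal{E}_P$ (linear case) or $\mathcal{E}_Z, \mathcal{M}$ (mixture case) as active contributors, together with the bias stationarity arising from $b$.

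Next I would reduce the KKT system to the variables $(\boldsymbol{\alpha}, b)$ via the kernel trick. The stationarity in $\boldsymbol{w}$ forces $\boldsymbol{w}=\sum_i \alpha_i y_i \phi(x_i)$ with $\alpha_i=Cv_i^{*}t_i$, so that $g_i=1-\sum_j\alpha_jQ_{ij}-y_ib$ where $Q_{ij}=y_iy_jK(x_i,x_j)$. Stationarity in $b$ gives the familiar $\sum_iy_i\alpha_i=0$, which becomes the first row of (\ref{SVMode})/(\ref{SVMode1}). For $i\in\mathcal{E}_Z$ the active-constraint identity $g_i=0$ supplies the second block. For the third block I would plug in the closed-form $v^{*}$: in the linear case, $v_i^{*}=1-\ell_i/\lambda$ gives $\alpha_i=C-(C^2/\lambda)g_i$ on $\mathcal{E}_P$; in the mixture case, $v_i^{*}=\gamma/\sqrt{\ell_i}-\gamma/\lambda$ with $t_i=1$ on $\mathcal{M}$ gives $\alpha_i=C\gamma/\sqrt{\ell_i}-C\gamma/\lambda$. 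In parallel I would verify the constant components: $\boldsymbol{\alpha}_{\mathcal{E}_N}=\mathbf{0}$ from zero subgradient, $\boldsymbol{\alpha}_{\mathcal{D}}=\mathbf{0}$ from $v^{*}=0$, and in the mixture case $\boldsymbol{\alpha}_{\mathcal{E}_P}$ constant because $v^{*}=1$ throughout $\mathcal{E}$.

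Then I would differentiate each of these three implicit equations in $\lambda$ and stack them. Differentiating $\sum_iy_i\alpha_i=0$ produces the top row. Differentiating $g_i=0$ on $\mathcal{E}_Z$ picks up the $Q_{\mathcal{E}_Z\mathcal{E}_Z}$, $Q_{\mathcal{E}_Z\mathcal{E}_P}$ (or $Q_{\mathcal{E}_Z\mathcal{M}}$) blocks and the $\boldsymbol{y}_{\mathcal{E}_Z}$ column. Differentiating the $\mathcal{E}_P$ (resp. $\mathcal{M}$) identity in $\lambda$ uses the chain rule through $g_i(\boldsymbol{w}(\lambda))$ together with the explicit $\lambda$-derivative of $1/\lambda$ (resp. $\gamma/\sqrt{\ell_i}-\gamma/\lambda$), producing the third row and the right-hand side $-(C/\lambda^{2})\boldsymbol{\ell}_{\mathcal{E}_P}$ (resp. $-(C\gamma/\lambda^{2})\mathbf{1}_{\mathcal{M}}$). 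Collecting these and invoking the invertibility hypothesis of Theorem \ref{theorem} yields (\ref{SVMode}) and (\ref{SVMode1}) after left-multiplying by the inverse Jacobian.

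The main obstacle I anticipate lies in the mixture case, where differentiating $\alpha_i=C\gamma/\sqrt{\ell_i}-C\gamma/\lambda$ in $\lambda$ requires the chain rule through $\sqrt{\ell_i(\boldsymbol{w}(\lambda))}$. This introduces the $\ell_i^{-3/2}$ factors that rescale each row of $Q_{\mathcal{M}\cdot}$ and the $\boldsymbol{y}_{\mathcal{M}}$ column, producing exactly $\tilde{Q}_{\mathcal{M}\mathcal{E}_Z}$, $\tilde{Q}_{\mathcal{M}\mathcal{M}}$, and $\tilde{\boldsymbol{y}}_{\mathcal{M}}$; keeping the per-sample scaling straight while recalling that each $\dot{g}_i$ already contains $\sum_jQ_{ij}\dot{\alpha}_j+y_i\dot b$ is the most error-prone bookkeeping. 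Once this is in place, the remaining assembly is routine and the proposition reduces to applying Theorem \ref{theorem} verbatim.
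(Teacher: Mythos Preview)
Your proposal is correct and follows essentially the same route as the paper: write the partial-optimum KKT system in the $(\boldsymbol{\alpha},b)$ variables via $\alpha_i=Cv_i^{*}t_i$, isolate the three defining equations (bias stationarity, $g_i=0$ on $\mathcal{E}_Z$, and $\alpha_i=Cv_i^{*}$ on $\mathcal{E}_P$ resp.\ $\mathcal{M}$), then differentiate in $\lambda$ and invoke the implicit function theorem. The paper packages this as computing $\boldsymbol{J}_{\mathcal{F},(\boldsymbol{\alpha},b)}$ and $\boldsymbol{J}_{\mathcal{F},\lambda}$ for an explicit $\mathcal{F}$, but the content is identical to your ``differentiate each implicit equation and stack'' description, including the chain-rule bookkeeping for the $\ell_i^{-3/2}$ factors in the mixture case.
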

\noindent\textbf{Critical Point.} We track $\boldsymbol{g}$ along the path. The critical point is sparked off by any set in $\mathcal{E}_{N},\mathcal{E}_{Z},\mathcal{E}_{P},\mathcal{D}(,\mathcal{M})$ changes.

\subsection{Lasso}\label{lassoGuide}
Lasso \cite{tibshirani1996regression} uses a sparsity based regularization term that can produce sparse solutions.

Given the dataset $X$ and label $\boldsymbol{y}$, the Lasso regression is stated as
\begin{equation}
	\label{lasso}
	\underset{\boldsymbol{w} \in \mathbb{R}^{d}}{\operatorname{min}}~ \frac{1}{2n}\|X \boldsymbol w- \boldsymbol{y}\|^{2}+\alpha \left\|\boldsymbol w\right\|_{1}.
\end{equation}

We expand $\|\boldsymbol{w}\|_{1}=\sum_{j=1}^{d}|w_{j}|$ and treat $|w_{j}|$ as $\mathcal{R}_{j}$ in (\ref{KKT_}), hence the $I_{\mathcal{R}}=\{1\leq j \leq d:w_{j}\neq0\}$. We denote the set of active or inactive functions (components) by $\mathcal{A}=I_{\mathcal{R}},\bar{\mathcal{A}}=\bar{I}_{\mathcal{R}}$, respectively. In view of the fact that $\partial{|w_{j}|}$ removes $\boldsymbol{t}_{\mathcal{R}_{j}}$ from the equations in (\ref{KKT_}) for $j\in \bar{\mathcal{A}}$, we only pay attention to the $\mathcal{A}$ part w.r.t. the $(\boldsymbol{w}_{\mathcal{A}},\lambda)$. The $\boldsymbol{\ell}$ is defined as  $\frac{1}{2n}{(}X \boldsymbol w- \boldsymbol{y}{)}^{2}$ in the following.
\begin{proposition}\label{proplasso}
	When $(\boldsymbol{w},\boldsymbol{v}^{*}(\boldsymbol{w},\lambda))$ is a partial optimum, the dynamics of optimal $\boldsymbol{w}$ in (\ref{lasso}) w.r.t. $\lambda$ for the linear and mixture SP-regularizer are described as
	\begin{equation}\label{lassoode}
		\frac{d\boldsymbol{w}_{\mathcal{A}}}{d\lambda}= -\frac{\sqrt{2n}}{\lambda^{2}}\left(X_{\mathcal{A}\mathcal{E}}^{T}Diag\left\{\boldsymbol{1}_{\mathcal{E}}-\frac{3}{\lambda}\boldsymbol{\boldsymbol{\ell}}_{\mathcal{E}}\right\}X_{\mathcal{E}\mathcal{A}}\right)^{-1}X_{\mathcal{A}\mathcal{E}}^{T}\boldsymbol{\ell}_{\mathcal{E}}^{\frac{3}{2}},
	\end{equation}
	\vspace{-6pt}
	\begin{equation}\label{lassoode2}
		\frac{d\boldsymbol{w}_{\mathcal{A}}}{d\lambda}= -\frac{\sqrt{2n}\gamma}{\lambda^{2}}\left(X_{\mathcal{A}\mathcal{E}\cup\mathcal{M}}^{T}
		\tilde{X}_{\mathcal{E}\cup\mathcal{M}\mathcal{A}}\right)^{-1}X_{\mathcal{A}\mathcal{E}\cup\mathcal{M}}^{T}\left(\begin{array}{c}
			\boldsymbol{0}_{\mathcal{E}}\\
			\boldsymbol{\ell}_{\mathcal{M}}
		\end{array}\right),
	\end{equation}
	respectively, where $\tilde{X}_{\mathcal{E}\cup\mathcal{M}\mathcal{A}}= \left(\begin{array}{c}
		X_{\mathcal{E}\mathcal{A}}\\
		-\frac{\gamma}{\lambda}X_{\mathcal{M}\mathcal{A}}
	\end{array}\right)$ and $\boldsymbol{w}_{\bar{\mathcal{A}}}=\boldsymbol{0}_{\bar{\mathcal{A}}}.$
\end{proposition}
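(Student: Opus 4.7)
The plan is to specialize the general implicit-function/KKT argument of Theorem~\ref{theorem} to Lasso~(\ref{lasso}), handling the linear and mixture SP-regularizers as parallel cases.

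First, I would reduce the KKT system~(\ref{KKT_}). Since each $\ell_i(\boldsymbol{w}) = \tfrac{1}{2n}(X_i\boldsymbol{w}-y_i)^2$ is $\mathcal{C}^1$, there is a single active selection function for every $\ell_i$, so all multipliers $\hat{t}_{\ell_i}^{k}$ collapse to one. For $\mathcal{R}_j(\boldsymbol{w}) = |w_j|$, on the active set $\mathcal{A} = \{j : w_j \neq 0\}$ the regularizer is smooth with $\nabla \mathcal{R}_j = \operatorname{sign}(w_j)$, while on $\bar{\mathcal{A}}$ we have $w_j \equiv 0$ so $dw_j/d\lambda = 0$ and the subgradient bound $|t_{\mathcal{R}_j}| \leq 1$ is interior. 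Hence along any smooth segment the dynamics of $\boldsymbol{w}$ reduce to those of $\boldsymbol{w}_{\mathcal{A}}$, and the surviving gradient equation is
\[
\alpha \operatorname{sign}(\boldsymbol{w}_{\mathcal{A}}) + \sum_{J \in \mathcal{P}} \sum_{i \in J} v_i^{*}(\ell_i, \lambda) \, \nabla_{\mathcal{A}} \ell_i(\boldsymbol{w}) = \boldsymbol{0}.
\]

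Second, I would differentiate this identity implicitly in $\lambda$ on a smooth segment, where $\mathcal{P}$ and $\operatorname{sign}(\boldsymbol{w}_{\mathcal{A}})$ are frozen. Using the chain rule $\tfrac{d}{d\lambda} v_i^{*} = \partial_\lambda v_i^{*} + \partial_{\ell_i} v_i^{*} \cdot \nabla_{\mathcal{A}} \ell_i^{\top} \tfrac{d\boldsymbol{w}_{\mathcal{A}}}{d\lambda}$ and $\tfrac{d}{d\lambda} \nabla_{\mathcal{A}} \ell_i = \nabla^{2}_{\mathcal{A}\mathcal{A}} \ell_i \cdot \tfrac{d\boldsymbol{w}_{\mathcal{A}}}{d\lambda}$ and collecting terms yields the linear system
\[
\Biggl[ \sum_i \Bigl( \partial_{\ell_i} v_i^{*} \, \nabla_{\mathcal{A}}\ell_i \nabla_{\mathcal{A}}\ell_i^{\top} + v_i^{*} \, \nabla^{2}_{\mathcal{A}\mathcal{A}} \ell_i \Bigr) \Biggr] \frac{d \boldsymbol{w}_{\mathcal{A}}}{d\lambda} = - \sum_i \partial_\lambda v_i^{*} \, \nabla_{\mathcal{A}} \ell_i,
\]
which instantiates the general ODE~(\ref{mainODE}) in the Lasso setting.

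Third, I would plug in the explicit form of $v^{*}$ for each regularizer. For $f^{L}$: $v_i^{*} = 1 - \ell_i/\lambda$ on $\mathcal{E}$ (so $\partial_\lambda v^{*} = \ell_i/\lambda^2$, $\partial_{\ell_i} v^{*} = -1/\lambda$) and $v_i^{*} \equiv 0$ on $\mathcal{D}$. For $f^{M}$: solving $\arg\min_{v \in [0,1]} v\ell + \gamma^2/(v + \gamma/\lambda)$ gives $v_i^{*} \equiv 1$ on $\mathcal{E}$, $v_i^{*} = \gamma \ell_i^{-1/2} - \gamma/\lambda$ on $\mathcal{M}$ (so $\partial_\lambda v^{*} = \gamma/\lambda^2$, $\partial_{\ell_i} v^{*} = -\tfrac{\gamma}{2} \ell_i^{-3/2}$), and $v_i^{*} \equiv 0$ on $\mathcal{D}$. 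Substituting $\nabla_{\mathcal{A}}\ell_i = \tfrac{r_i}{n} X_{i,\mathcal{A}}^{\top}$ and $\nabla^{2}_{\mathcal{A}\mathcal{A}} \ell_i = \tfrac{1}{n} X_{i,\mathcal{A}}^{\top} X_{i,\mathcal{A}}$ with $r_i = X_i\boldsymbol{w} - y_i$ and $r_i^{2} = 2n\ell_i$, the rank-one term $\partial_{\ell_i}v^{*}\nabla\ell_i\nabla\ell_i^{\top}$ merges with the Hessian term into $\tfrac{1}{n}(1 - \tfrac{3}{\lambda} \ell_i) X_{i,\mathcal{A}}^{\top} X_{i,\mathcal{A}}$ on $\mathcal{E}$ in the linear case, giving exactly the diagonal weight of~(\ref{lassoode}). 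In the mixture case $\mathcal{E}$ contributes the unweighted block $X_{\mathcal{E}\mathcal{A}}$, while $\mathcal{M}$ contributes a $-\gamma/\lambda$-scaled block after factoring $\ell_i^{-3/2}$, producing the augmented matrix $\tilde{X}_{\mathcal{E}\cup\mathcal{M},\mathcal{A}}$ of~(\ref{lassoode2}); the RHS comes from aggregating $\partial_\lambda v_i^{*} \nabla_{\mathcal{A}} \ell_i$ and identifying $r_i \ell_i$ with $\sqrt{2n}\,\ell_i^{3/2}$ up to a sign absorbed into the corresponding row of $X_{\mathcal{E}\mathcal{A}}$.

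The main obstacle is the bookkeeping in the last step: one must track how the factors $r_i$, $\ell_i^{1/2}$ and $X_{i,\mathcal{A}}$ recombine into the compact diagonal $\operatorname{diag}\{\mathbf{1}_{\mathcal{E}} - \tfrac{3}{\lambda}\boldsymbol{\ell}_{\mathcal{E}}\}$ and the augmented $\tilde{X}_{\mathcal{E}\cup\mathcal{M},\mathcal{A}}$, being careful with signs of residuals and with the $\sqrt{2n}$ factor coming from $r_i^{2} = 2n\ell_i$. A secondary subtlety is that the invertibility hypothesis of Theorem~\ref{theorem} must be read here as invertibility of the displayed weighted Gram matrix (and its mixture analogue), which holds generically when the relevant sample blocks have full column rank on $\mathcal{A}$.
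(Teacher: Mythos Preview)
Your proposal is correct and follows essentially the same route as the paper: reduce the KKT system~(\ref{KKT_}) to the active coordinates $\mathcal{A}$ using the smoothness of $\ell_i$ and the $\ell_1$-subdifferential structure, then differentiate the first-order condition implicitly in $\lambda$ and substitute the closed-form $v^{*}$ for each regularizer. The paper carries out the computation concretely for the mixture case (differentiating the per-sample term $\mathcal{Z}(\hat{\boldsymbol{w}}_{\mathcal{A}},\lambda)$ directly and then vectorizing), whereas you first write the abstract linear system in terms of $\partial_{\ell_i}v^{*}$, $\partial_{\lambda}v^{*}$, $\nabla_{\mathcal{A}}\ell_i$, $\nabla^{2}_{\mathcal{A}\mathcal{A}}\ell_i$ and then specialize; the underlying argument and the bookkeeping (in particular the cancellation producing the $1-\tfrac{3}{\lambda}\ell_i$ weight and the $\sqrt{2n}$ factor from $r_i^{2}=2n\ell_i$) are the same.
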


\noindent\textbf{Critical Point.} The critical point is encountered when $\mathcal{A}$ or $\mathcal{P}$ changes. 

\section{Experimental Evaluation}\label{exp}
\begin{figure}[tb]
	\centering
	\begin{minipage}{0.57\textwidth}
		\begin{table}[H]
			\centering\scriptsize
			\begin{tabular}{lllll} \toprule
				Dataset & Source & Samples & Dimensions & Task\\ \midrule
				mfeat-pixel & UCI \cite{asuncion2007uci} & 2000 & 240 & \multirow{3}{*}{C}\\ 
				pendigits & UCI & 3498 & 16 \\
				
				hiva agnostic & OpenML & 4230 & 1620  \\ \midrule
				music  & OpenML \cite{vanschoren2014openml}  & 1060 & 117 & \multirow{6}{*}{R}\\ 
				cadata  & UCI & 20640 & 8 \\
				delta elevators  & OpenML & 9517 & 8 \\
				houses  & OpenML & 22600 & 8 \\
				ailerons  & OpenML & 13750 & 41 \\
				elevator  & OpenML & 16600 & 18 \\
				\bottomrule
			\end{tabular}
			\vspace{5pt}
			\caption{Datasets description in our experiments. The C=Classification, R=Regression.}
			\label{dataset}
		\end{table}
	\end{minipage}
	\hspace{5pt}
		\begin{minipage}{0.4\textwidth}
		\begin{figure}[H]
			\vspace{6pt}
			\centering
			\begin{adjustbox}{width=1\textwidth}
				\includegraphics{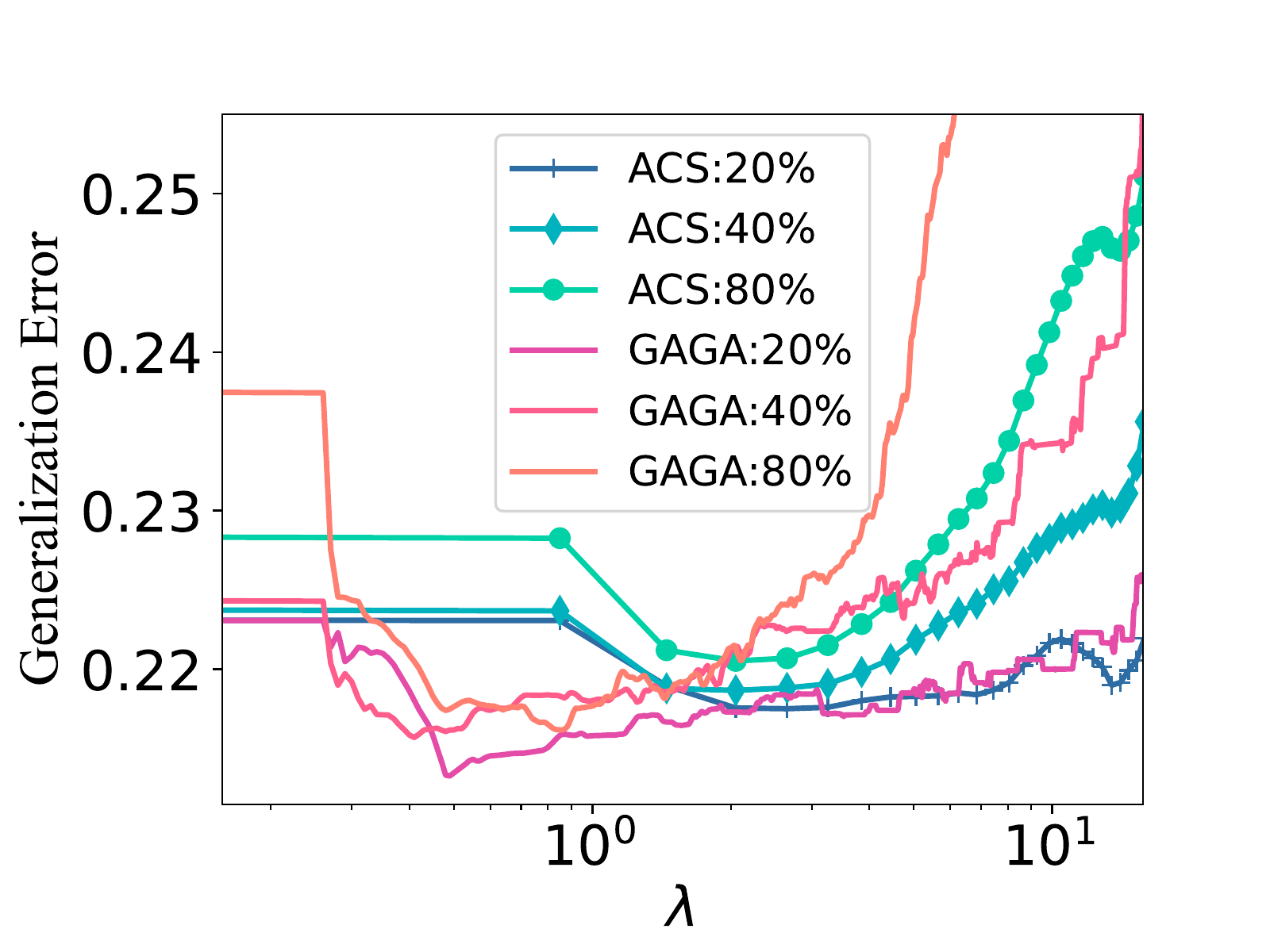}
			\end{adjustbox}
			\vspace{-2em}
			\caption{Robustness to noise. This figure shows the learning curve under different noise ratios, which confirms that the \method is more robust when in the setting of relatively high noise.}
			\label{fig:noisy}
		\end{figure}
	\end{minipage}
\end{figure}
We present the empirical results of the proposed \method on two tasks: SVM for binary classification and Lasso for robust regression in the noisy environment. The results demonstrate that our approach outperforms existing SPL implementations on both tasks.

\begin{table}[t]
	\centering\tiny
	\setlength{\tabcolsep}{7pt}
	\resizebox{\textwidth}{!}{
	\begin{tabular}{l cc c     ccc c c}
		\toprule
		\bf \multirow{2}{*}{\vspace{-5pt}Dataset} &  \multicolumn{2}{c}{\bf Parameter} && \multicolumn{3}{c}{\bf Competing Methods} & \bf Ours &\bf \multirow{2}{*}{\vspace{-5pt}Restarting times}\\
		\cmidrule{2-3}  \cmidrule {5-7}
		& $C,\gamma_{\kappa}$ & $\alpha$ &&  \it Original & \it ACS & \it MOSPL &  ~\method~   \\
		\midrule
		mfeat-pixel\dag    & 1.00, 0.50 & --         %
		&& ~0.959\stderr{0.037}~ & ~0.976\stderr{0.015}~ & ~0.978\stderr{0.021}~ & ~\textbf{0.986}\stderr{0.016}~ & 23\\
		mfeat-pixel\ddag     & 1.00, 0.50 & --         %
		&& ~0.945\stderr{0.025}~ & ~0.947\stderr{0.031}~ & ~0.960\stderr{0.027}~ & ~\textbf{0.983}\stderr{0.013}~ & 25\\
		hiva agnostic\dag  & 1.00, 0.50 &   --                  %
		&& ~0.868\stderr{0.027}~ & 0.941\stderr{0.009} & ~0.946\stderr{0.0137}~ & ~\textbf{0.960}\stderr{0.004}~ & 8\\
		pendigits\dag   & 1.00, 1.00 & --%
		&& ~0.924\stderr{0.069}~ & ~0.960\stderr{0.005}~ & ~0.962\stderr{0.048}~ & ~\textbf{0.971}\stderr{0.046}~ & 10\\
		pendigits\ddag     & 1.00, 0.20            & --      %
		&& ~0.931\stderr{0.045}~  & ~0.942\stderr{0.089}~ & ~0.940\stderr{0.088}~ & ~\textbf{0.944}\stderr{0.089}~ & 8\\ \midrule
		elevator\dag & --                & 2e-3                         %
		&& ~0.146\stderr{0.011}~ & ~0.144\stderr{0.012}~ & ~0.144\stderr{0.020}~ & ~\textbf{0.143}\stderr{0.012}~ & 3\\
		ailerons\dag       & --                & 6e-3                         %
		&& ~0.674\stderr{0.071}~    & ~0.492\stderr{0.006}~ & ~0.491\stderr{0.041}~ & ~\textbf{0.489}\stderr{0.009}~ & 16\\
		music\dag    & --              & 5e-3                      %
		&& ~0.325\stderr{0.009}~    & ~0.219\stderr{0.018}~ & ~0.215\stderr{0.012}~ & ~\textbf{0.206}\stderr{0.013}~ & 123\\
		delta elevators\dag    & --              & 5e-3                      %
		&& ~0.783\stderr{0.153}~    & ~0.724\stderr{0.138}~ & ~0.679\stderr{0.057}~ & ~\textbf{0.634}\stderr{0.184}~ & 4\\
		houses\dag      & --  & 5e-3 
		&& ~0.213\stderr{0.013}~ & ~0.209\stderr{0.010}~ & ~0.205\stderr{0.231}~ & ~\textbf{0.201}\stderr{0.146}~ & 4\\
		\bottomrule \\
	\end{tabular}
	}
	\caption{Average results with the standard deviation in 20 runs on different datasets using the \emph{linear SP-regularizer}. The top results in each row are in boldface. }
	\label{tab:acc1}
\end{table}
\begin{table}[!ht]
	\centering\tiny
	\setlength{\tabcolsep}{7pt}
	\resizebox{\textwidth}{!}{
	\begin{tabular}{l ccc c     ccc c c}
		\toprule
		\bf \multirow{2}{*}{\vspace{-5pt}Dataset} &  \multicolumn{3}{c}{\bf Parameter} && \multicolumn{3}{c}{\bf Competing Methods} & \bf Ours &\bf \multirow{2}{*}{\vspace{-5pt}Restarting times}\\
		\cmidrule{2-4}  \cmidrule {6-8}
		&$\gamma$& $C$,$\gamma_{\kappa}$ & $\alpha$ &&  \it Original & \it ACS & \it MOSPL &  ~\method~   \\
		\midrule
		mfeat-pixel\dag   & 0.20 & 1.00, 1.00 & --%
		&& ~0.959\stderr{0.037}~ & ~0.963\stderr{0.038}~ & ~0.968\stderr{0.037}~ & ~\textbf{0.973}\stderr{0.040}~ & 12\\
		mfeat-pixel\ddag    & 0.50 & 0.20, 1.00      & --      %
		&& ~0.945\stderr{0.025}~    & ~0.962\stderr{0.024}~ & ~0.970\stderr{0.027}~ & ~\textbf{0.977}\stderr{0.015}~ & 10\\
		hiva agnostic\dag  & 0.50 & 1.00, 1.00 &   --                  %
		&& ~0.868\stderr{0.027}~ & ~0.946\stderr{0.004}~ & ~0.949\stderr{0.019}~ & ~\textbf{0.957}\stderr{0.007}~ & 10\\
		pendigits\dag     & 0.50 & 2.00, 1.00 & --         %
		&& ~0.924\stderr{0.069}~ & ~0.956\stderr{0.062}~ & ~0.957\stderr{0.071}~ & ~\textbf{0.962}\stderr{0.083}~ & 32\\
		pendigits\ddag     & 0.20 & 1.00, 1.00 & --         %
		&& ~0.931\stderr{0.045}~ & ~0.940\stderr{0.088} ~ & ~0.942\stderr{0.089}~ & ~\textbf{0.944}\stderr{0.088}~ & 30\\\midrule
		
		cadata\dag    & 1.00 & --,--             & 5e-3                       %
		&& ~0.798\stderr{0.039}~    & ~0.782\stderr{0.042}~ & ~0.754\stderr{0.084}~ & ~\textbf{0.748}\stderr{0.010}~ & 13\\
		ailerons\dag    & 0.50 & --,--              & 5e-3                       %
		&& ~0.674\stderr{0.071}~    & ~0.452\stderr{0.057}~ & ~0.433\stderr{0.083}~ & ~\textbf{0.422}\stderr{0.090}~ & 14\\
		music\dag       & 0.50 & --,--                & 6e-3                         %
		&& ~0.325\stderr{0.009}~    & ~0.218\stderr{0.009}~ & ~0.216\stderr{0.021}~ & ~\textbf{0.213}\stderr{0.027}~ & 110\\
		delta elevators\dag & 0.50 & --,--             & 5e-3                         %
		&& ~0.783\stderr{0.153}~ & ~0.663\stderr{0.074}~ & ~0.650\stderr{0.029}~ & ~\textbf{0.595}\stderr{0.132}~ & 12\\
		houses\dag      & 0.50 & --,--  & 5e-3 
		&& ~0.213\stderr{0.013}~ & ~0.146\stderr{0.012}~ & ~0.144\stderr{0.027}~ & ~\textbf{0.142}\stderr{0.012}~ & 8\\
		\bottomrule \\
	\end{tabular}
	}
	\caption{Average results with the standard deviation in 20 runs on different datasets using the \emph{mixture SP-regularizer}. The top results in each row are in boldface.}
	\label{tab:acc2}
\end{table}
\begin{figure}[!t]
    \vspace{-1em}
	\centering
	\begin{adjustbox}{width=0.95\textwidth}
		\includegraphics{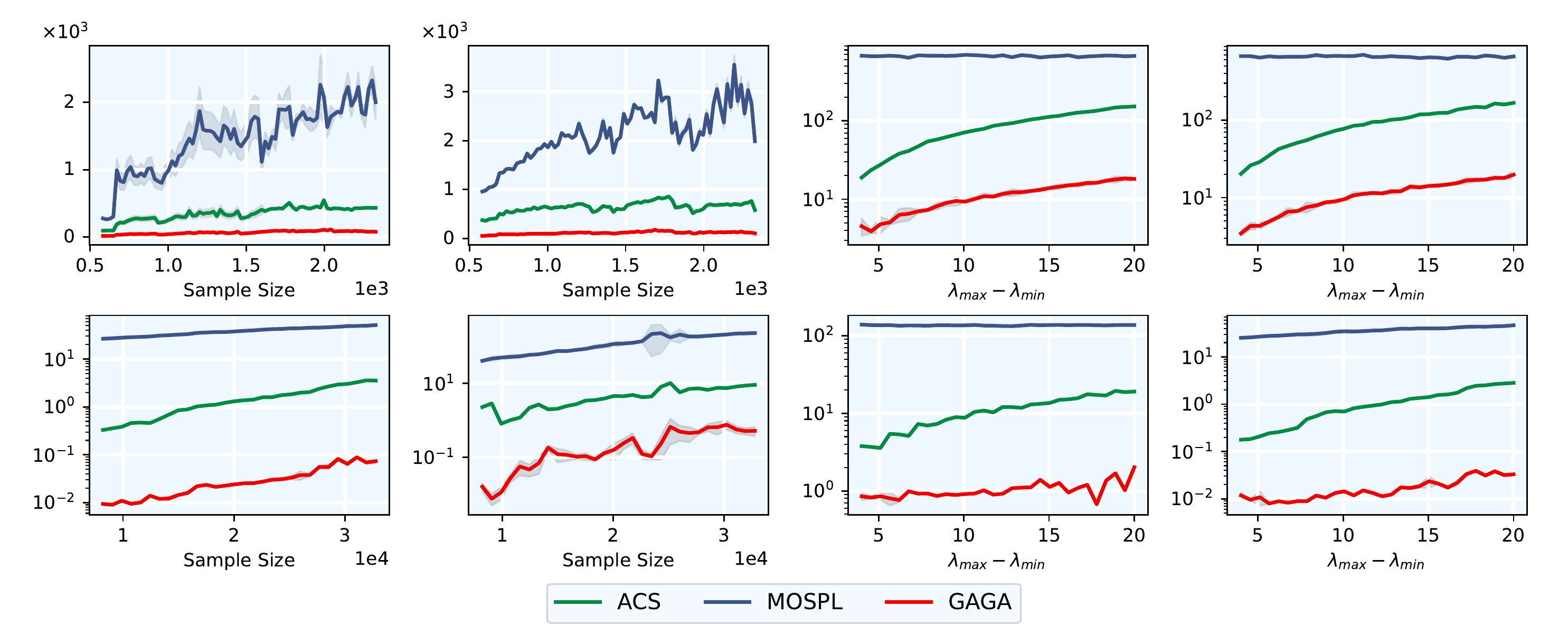}
	\end{adjustbox}
	\caption{The study of efficiency comparison. $y$-axis denotes the average running time (in seconds) with 20 runs. The interval $[\lambda_{min},\lambda_{max}]$ refers to the predefined searching area.}
	\label{fig:time}
\end{figure}
\paragraph{Baselines. }  \method is compared against three baseline methods: 1) \textbf{Original} learning model without SPL regime.  
2) \textbf{ACS} \cite{wendell1976minimization} performs sequential search of $\lambda$, which is the most commonly used algorithm in SPL implementations.
3) \textbf{MOSPL} \cite{li2016multi} is a state-of-the-art age-path approach that using the multi objective optimization, in which the solution is derived with the age parameter $\lambda$ implicitly. 

\paragraph{Datasets.}  The Table \ref{dataset} summarizes the datasets information. As universally known that SPL enjoys robustness in noisy environments, we impose 30\% of noises into the real-world datasets. In particular, we generate noises by turning normal samples into poisoning ones by flipping their labels \cite{Frenay2014classification, Ghosh2017robust} for classification tasks. For regression problem, noises are generated by the similar distribution of the training data as performed in \cite{Jagielski2018manipulating}.

\paragraph{Experiment Setting.}
\begin{wrapfigure}[]{R}{0.3\textwidth}
	\vspace{-2.2em}
	\begin{minipage}{0.3\textwidth}
		\begin{figure}[H]
			\centering
			\begin{adjustbox}{width=1\textwidth}
				\includegraphics{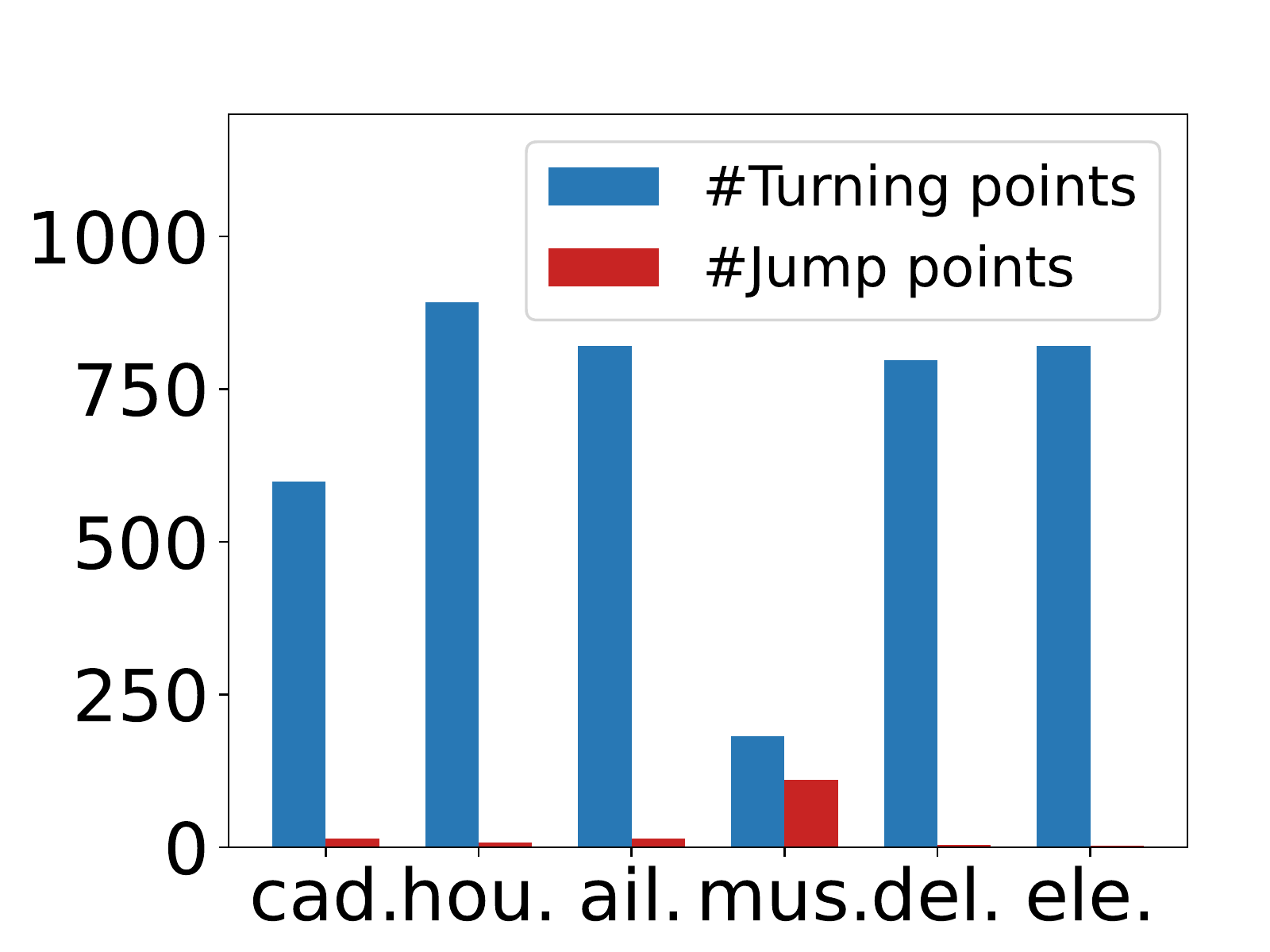}
			\end{adjustbox}
			\vspace{-10pt}
			\caption{Histogram illustrating the number of different types of critical points. Names of datasets are shortened into the first 3 letters.}
			\label{fig:points}
		\end{figure}
	\end{minipage}

\end{wrapfigure}
In experiments, we first verify the performance of \method and traditional ACS algorithm under different noise intensity to reflect the robustness of \method. We further study the generalization performance of \method with competing methods, so as to show its ability to select optimal model during the learning process. Meanwhile, we also evaluate the running efficiency between \method and existing SPL implementations in different settings, which examines the speedup of \method as well as its practicability. Finally, we count the number of restarts and different types of critical points when using \method, to investigate its ability to address critical points. For SVM, we use the Gaussian kernel $K(x_{1},x_{2})=\operatorname{exp}(-\gamma_{\kappa}\|x_{1}-x_{2}\|^{2})$. More details can be found in Appendix \ref{additional}.

\paragraph{Results.}
Figure \ref{fig:noisy} illustrates that conventional ACS fails to reduce the generalization error due to heavy noises fed to the model at a large age (\emph{i.e.}, overfits the dataset), while \method makes up for this shortcoming by selecting the optimal model that merely learns the trust-worthy samples during the continuous learning process. Table \ref{tab:acc1} and \ref{tab:acc2} demonstrate an overall performance enhancing in \method than competing algorithms. The `\dag' in tables denotes 30\% of artificial noise, while `\ddag' represents 20\%. Note that performances are measured by accuracy and generalization error for classification and regression, respectively. The results guarantee that \method outperforms the state-of-the-art approaches in SPL under different circumstances. Figure \ref{fig:time} shows that \method also enjoys a high computational efficiency by changing the sample size as well as the predefined age sequence, emphasizing the potentials of utilizing \method in practice. The number of different types of critical points on some datasets is given in Figure \ref{fig:points}. Corresponding restarting times can be found in Table \ref{tab:acc1} and \ref{tab:acc2}, hence indicate that \method is capable of identifying different types of critical points and uses the heuristic trick to avoids restarts at massive turning points.

\paragraph{Additional Experiments in Appendix \ref{additional}.} {We further demonstrate the ability of \method to address the  relatively large sample size and present more histograms. In addition, we apply \method to the logistic regression \cite{vapnik1999nature} for classification. 
We also verify that conventional ACS indeed tracks an approximation path of partial optimum in experiments, which provides a more in-depth understanding towards SPL and the performance promotion brought by \method. We also conduct a comparative study to the state-of-the-art robust model for SVMs \cite{chen2015robust,yang2014robust,biggio2011support} and Lasso \cite{nasrabadi2011robust} besides the SPL domain.}

\section{Conclusion}\label{conclu}
In this paper, we connect the SPL paradigm to the partial optimum for the first time. Using this idea, we propose the first \emph{exact} age-path algorithm able to tackle \emph{general} SP-regularizer, namely \method. 
Experimental results demonstrate \method outperforms traditional SPL paradigm and the state-of-the-art age-path approach in many aspects, especially in the highly noisy environment. We further build the relationship between our framework and existing theoretical analyses on SPL regime, which provides more in-depth understanding towards the principle behind SPL.

\bibliography{Ref}
\bibliographystyle{unsrt}\clearpage
\section*{Checklist}

\begin{enumerate}
	
	\item For all authors...
	\begin{enumerate}
		\item Do the main claims made in the abstract and introduction accurately reflect the paper's contributions and scope?
		\answerYes{}
		\item Did you describe the limitations of your work?
		\answerYes{See Appendix \ref{limit_impact}.}
		\item Did you discuss any potential negative societal impacts of your work?
		\answerNA{}
		\item Have you read the ethics review guidelines and ensured that your paper conforms to them?
		\answerYes{}
	\end{enumerate}

	\item If you are including theoretical results...
	\begin{enumerate}
		\item Did you state the full set of assumptions of all theoretical results?
		\answerYes{}
		\item Did you include complete proofs of all theoretical results?
		\answerYes{}
	\end{enumerate}

	\item If you ran experiments...
	\begin{enumerate}
		\item Did you include the code, data, and instructions needed to reproduce the main experimental results (either in the supplemental material or as a URL)?
		\answerYes{}
		\item Did you specify all the training details (e.g., data splits, hyperparameters, how they were chosen)?
		\answerYes{}
		\item Did you report error bars (e.g., with respect to the random seed after running experiments multiple times)?
		\answerYes{}
		\item Did you include the total amount of compute and the type of resources used (e.g., type of GPUs, internal cluster, or cloud provider)?
		\answerYes{ See Appendix \ref{detail}.}
	\end{enumerate}

	\item If you are using existing assets (e.g., code, data, models) or curating/releasing new assets...
	\begin{enumerate}
  \item If your work uses existing assets, did you cite the creators?
    \answerYes{See Table \ref{dataset}.}
  \item Did you mention the license of the assets?
    \answerYes{Licenses are available referring to the provided links.}
		\item Did you include any new assets either in the supplemental material or as a URL?
		\answerYes{}
		\item Did you discuss whether and how consent was obtained from people whose data you're using/curating?
		\answerNA{}
		\item Did you discuss whether the data you are using/curating contains personally identifiable information or offensive content?
		\answerNA{}
	\end{enumerate}

\item If you used crowdsourcing or conducted research with human subjects...
\begin{enumerate}
  \item Did you include the full text of instructions given to participants and screenshots, if applicable?
    \answerNA{This paper does not use crowdsourcing.}
  \item Did you describe any potential participant risks, with links to Institutional Review Board (IRB) approvals, if applicable?
    \answerNA{This paper does not use crowdsourcing.}
  \item Did you include the estimated hourly wage paid to participants and the total amount spent on participant compensation?
    \answerNA{This paper does not use crowdsourcing.}
\end{enumerate}

\end{enumerate}

\clearpage
\appendix
{\Large \textbf{Appendix}}

\section{Supplementary Notes for Section \ref{tracking}}
\label{details}
In this section, we present some additional discussions and theoretical results for Section \ref{tracking}.
\subsection{Explicit Expression}
The goal of this section is to obtain precise and explicit expressions of the Jacobian matrices in (\ref{mainODE}).

Firstly, the $\boldsymbol{J}_{\boldsymbol{w}, \boldsymbol{t}_{\mathcal{R}}, \boldsymbol{t}_{\ell}}^{-1}$ has the form of
\begin{equation}\label{jacobian}
   \boldsymbol{J}_{\boldsymbol{w}, \boldsymbol{t}_{\mathcal{R}}, \boldsymbol{t}_{\ell}}^{-1}=  \left(\begin{array}{cc}
     \tilde{\boldsymbol{F}}_{\boldsymbol{w}}& \tilde{\boldsymbol{F}}_{\boldsymbol{t}} \\
    \left(\begin{array}{c}
        \tilde{\boldsymbol{D}}    \\
         \boldsymbol{O}
    \end{array}\right) & 
    \left(\begin{array}{c}
        \boldsymbol{O}     \\
         \tilde{\boldsymbol{I}}
    \end{array}\right)
\end{array}\right)^{-1},
\end{equation}
in which
\begin{align*}
    &\tilde{\boldsymbol{F}}_{\boldsymbol{w}}=	\sum_{j=1}^{m} \sum_{k \in I_{\mathcal{R}_{j}}^{a}(\hat{\boldsymbol{w}})} \alpha_{j} \hat{t}_{\mathcal{R}_{j}}^{k}(\hat{\boldsymbol{w}}) \nabla^{2} D_{\mathcal{R}_{j}}^{k}(\hat{\boldsymbol{w}})+\sum_{J \in \mathcal{P}} \sum_{i \in J} \sum_{k \in I_{\ell_{i}}^{a}(\hat{\boldsymbol{w}})} v^{*}_i\left(\ell_{i}(\hat{\boldsymbol{w}}), \lambda\right) \hat{t}_{\ell_{i}}^{k}(\hat{\boldsymbol{w}}) \nabla^{2} D_{\ell_{i}}^{k}(\hat{\boldsymbol{w}}),\\
    &\tilde{\boldsymbol{F}}_{\boldsymbol{t}}=\left(
     \begin{array}{cc}
     \mdoubleplus_{1\leq j\leq m,k\in I_{\mathcal{R}_{j}}^{a}}\nabla D_{\mathcal{R}_{j}}^{k}(\hat{\boldsymbol{w}})& 
      \mdoubleplus_{\mathcal{J}\in\mathcal{P},j\in\mathcal{J},k \in I_{\ell_{i}}^{a}}\alpha_{j} \nabla D_{\ell_{j}}^{k}(\hat{\boldsymbol{w}})
     \end{array}
     \right),\\
     &\tilde{\boldsymbol{D}}=
      \left(\begin{array}{c}
      \left(\mdoubleplus_{1\leq j\leq m,k\in I_{\mathcal{R}_{j}}^{a}\backslash\{r_{j}\}}\nabla D_{\mathcal{R}_{j}}^{k}(\hat{\boldsymbol{w}})-\nabla D_{\mathcal{R}_{j}}^{r_{j}}(\hat{\boldsymbol{w}})\right)^{T} \\
      \left(\mdoubleplus_{\mathcal{J}\in\mathcal{P},j\in\mathcal{J},k \in I_{\ell_{i}}^{a}\backslash\{l_{i}\}}\nabla D_{\ell_{i}}^{k}(\hat{\boldsymbol{w}})-\nabla
      D_{\ell_{i}}^{l_{i}}(\hat{\boldsymbol{w}})\right)^{T}
      \end{array}
      \right),\\
     & \tilde{\boldsymbol{I}}=\left(
    \begin{array}{c}
    \left(\mdoubleplus_{i=1}^{n}\left(\begin{array}{c}
        \boldsymbol{1}_{I_{\mathcal{R}_{j}}^{a}}   \\
        \boldsymbol{0}_{\bar{I}_{\mathcal{R}_{j}}^{a}}
    \end{array}\right)\right)^{T}  
    \\
    \left(\mdoubleplus_{\mathcal{J}\in\mathcal{P},j\in\mathcal{J}}\left(\begin{array}{c}
        \boldsymbol{1}_{I_{\ell_{i}}^{a}}   \\
        \boldsymbol{0}_{\bar{I}_{\ell_{i}}^{a}}
    \end{array}\right)
    \right)^{T}
\end{array}
\right).
\end{align*}
The symbol $\mdoubleplus$ denotes column matrix concatenation, following the convention in Haskell Language.

Similarly, $\boldsymbol{J}_{\lambda}$ is given by
\begin{equation}
    \boldsymbol{J}_{\lambda} = \left(\begin{array}{c}
         \tilde{\boldsymbol{F}}_{\lambda}  \\
         \boldsymbol{0} 
    \end{array}\right),
\end{equation}
where $$\tilde{\boldsymbol{F}}_{\lambda} =  \sum_{J \in \mathcal{P}} \sum_{i \in J} \sum_{k \in I_{\ell_{i}}^{a}(\hat{\boldsymbol{w}})} \dfrac{\partial v^{*}_i\left(\ell_{i}(\hat{\boldsymbol{w}}), \lambda\right)}{\partial \lambda} \hat{t}_{\ell_{i}}^{k}(\hat{\boldsymbol{w}}) \nabla D_{\ell_{i}}^{k}(\hat{\boldsymbol{w}}).
$$

As an outcome, (\ref{mainODE}) can be explicitly expressed via 
\begin{equation}\label{explicitMainOde}
	\frac{d\left(\begin{array}{c}
		\hat{\boldsymbol{w}} \\
		\hat{\boldsymbol{t}_{\mathcal{R}}} \\
		\hat{\boldsymbol{t}_{\ell}}
	\end{array}\right)}{\rule{0pt}{10pt}d\lambda} = \left(\begin{array}{cc}
     \tilde{\boldsymbol{F}}_{\boldsymbol{w}}& \tilde{\boldsymbol{F}}_{\boldsymbol{t}} \\
    \left(\begin{array}{c}
        \tilde{\boldsymbol{D}}    \\
         \boldsymbol{O}
    \end{array}\right) & 
    \left(\begin{array}{c}
        \boldsymbol{O}     \\
         \tilde{\boldsymbol{I}}
    \end{array}\right)
\end{array}\right)^{-1}
\left(\begin{array}{c}
         \tilde{\boldsymbol{F}}_{\lambda}  \\
         \boldsymbol{0} 
    \end{array}\right).
\end{equation}
\subsection{On the Numerical ODEs Solving}
\paragraph{Matrix Inversion.} Here we give some relative discussions to support the assumption in  Section \ref{path} that $\boldsymbol{J}_{\boldsymbol{w},\boldsymbol{t}_{\mathcal{R}},\boldsymbol{t}_{\boldsymbol{\ell}}}$ ($\boldsymbol{J}$ for short) is non-singular. First 

note that $\boldsymbol{J}$ is singular if and only if the value of its determinant $|\boldsymbol{J}|$ is zero, where $|\boldsymbol{J}|$ is indeed a polynomial w.r.t. the uncertain elements in $\boldsymbol{J}$. Denote the number of these unknown components as $q$, then the probability that $\boldsymbol{J}$ is singular can be somewhat equivalently seen as the measure of the hypersurface $\mathcal{S}=\{\boldsymbol{x} \in \mathbb{R}^{q}: |\boldsymbol{J}(\boldsymbol{x})|=0\}$ in $\mathbb{R}^{q}.$ For the polynomial function $|\boldsymbol{J}|,$ it's easy to prove that $\mathcal{S}$ is a zero-measured set in $\mathbb{R}^{q}$, which indicates the probability of $\boldsymbol{J}$ being non-invertible is zero. This result shows the non-singularity assumption fits the common situation in practice. Secondly, there are some general cases that $\boldsymbol{J}$ is guaranteed to be 
invertible. We refer some of these claims to \cite{gebken2021structure}. Moreover, during the extensive empirical studies, none of the singular $\boldsymbol{J}$ is observed, which 
again validates the rationality of the assumption. 
\paragraph{Robustness.}
In practice, we avoid directly computing the inverse of $\boldsymbol{J}$ by the consideration of robustness. Instead, we adopt the Moore-Penrose inverse \cite{Ben1974inverse} in implementation. The Moore-Penrose inverse exists for any matrix $X$ even if the matrix owns singularity, which guarantees our algorithm to be robust.
\paragraph{Complexity.} Our approach utilizes the singular value decomposition (SVD) \cite{Ben1974inverse} when solving the Moore-Penrose inverse in (\ref{explicitMainOde}), which is demonstrated to be a state-of-the-art technique via a computationally simple and precise way \cite{Horn1990matrix}. Consequently, the computational efficiency as well as the accuracy of our algorithm is guaranteed.
\paragraph{Stability.} Our algorithm uses ODE solvers from the LSODE package \cite{Radhakrishnan1994LSODE} when solving the initial value problem (\ref{mainODE}). The solver will automatically select a proper method to solve different initial value problems which guarantees the general performance of our algorithm. Especially, when the problem tends to be unstable, the solver adopts the backward differentiation formula (BDF) method \cite{Watt1972ode} to avoid extremely small step sizes while preserving the stability and accuracy of output solutions.

\section{Proofs}
\label{proofs}
In this section, we give complete proofs to all the theorems and properties stated in the main article.
\subsection{Proof of \textbf{Theorem} \ref{conver2}}

Prior to the proof, we first review the relevant background about latent SPL loss \cite{ma2018convergence}. Regarding the unconstrained learning problem (\ref{obj}), its latent SPL objective is defined as\footnote{Note that the objective function in (\ref{obj}) is indeed the same as what in \cite{ma2018convergence} despite minor differences in notations, so we keep the manner in this paper for the sake of consistency.}
\begin{equation}\label{implicit}
    G_{\lambda}(\boldsymbol{w}):= \sum_{j=1}^{m} \alpha_{j} \mathcal{R}_{j}(\boldsymbol{w})+\sum_{i=1}^{n} F_{\lambda}\left(\ell_{i}(\boldsymbol{w})\right),
\end{equation}
where $F_{\lambda}(\ell)=\int_{0}^{\ell} v_{\lambda}^{*}(\tau) ~d \tau$. 
\begin{theorem}\label{conver3}\cite{ma2018convergence}
	In the SPL objective (\ref{spl}), suppose $\ell$ is bounded below, $\boldsymbol{w} \mapsto$ $\ell(\cdot)$ is continuously differentiable, $v_{\lambda}^{*}(\cdot)$ is continuous, and $\sum_{j=1}^{m} \alpha_{j} \mathcal{R}_{j}$ is coercive and lower semi-continuous. Then for any initial parameter $\boldsymbol{w}^{0}$, every cluster point of the produced sequence $\left\{\boldsymbol{w}^{k}\right\}$, obtained by the ACS algorithm on solving (\ref{spl}), is a critical point of the implicit objective $G_{\lambda}$ (\ref{implicit}).
\end{theorem}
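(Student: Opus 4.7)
The plan is to exploit the tight link between the joint objective $\mathcal{L}(\boldsymbol{w},\boldsymbol{v})$ and the implicit objective $G_\lambda(\boldsymbol{w})$. First I would observe that because the $\boldsymbol{v}$-subproblem decouples coordinatewise and admits the closed-form minimizer $v_i=v^*_\lambda(\ell_i(\boldsymbol{w}))$, the partial minimum obeys $\min_{\boldsymbol{v}\in[0,1]^n}\mathcal{L}(\boldsymbol{w},\boldsymbol{v})=G_\lambda(\boldsymbol{w})+C(\lambda)$, where $C(\lambda)$ depends on $\lambda$ only. The displayed identity comes from applying the envelope theorem to the convex-in-$v$ map $v\mapsto v\ell+f(v,\lambda)$, which gives $\int_0^\ell v^*_\lambda(\tau)\,d\tau=\min_v(v\ell+f(v,\lambda))-\min_v f(v,\lambda)$. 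Consequently, the Fermat/Clarke critical point condition of $G_\lambda$ reads $\mathbf{0}\in \sum_j\alpha_j\partial\mathcal{R}_j(\boldsymbol{w})+\sum_i v^*_\lambda(\ell_i(\boldsymbol{w}))\nabla\ell_i(\boldsymbol{w})$, which is exactly the inclusion I will aim to recover at a cluster point $\bar{\boldsymbol{w}}$.

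Next I would set up monotone descent and compactness. Each ACS half-step minimizes one block, so $\mathcal{L}(\boldsymbol{w}^{k+1},\boldsymbol{v}^{k+1})\le \mathcal{L}(\boldsymbol{w}^{k+1},\boldsymbol{v}^k)\le \mathcal{L}(\boldsymbol{w}^k,\boldsymbol{v}^k)$; together with boundedness below of $\ell$ and $v_i\in[0,1]$, the value sequence is non-increasing and bounded below, hence convergent. Coercivity of $\sum_j\alpha_j\mathcal{R}_j$ then confines $\{\boldsymbol{w}^k\}$ to a compact sublevel set, so cluster points exist. Fix such a $\bar{\boldsymbol{w}}$ with $\boldsymbol{w}^{k_j}\to\bar{\boldsymbol{w}}$ and let $\bar{\boldsymbol{v}}:=v^*_\lambda(\ell(\bar{\boldsymbol{w}}))$; continuity of $v^*_\lambda$ and $\ell_i$ yields $\boldsymbol{v}^{k_j+1}\to\bar{\boldsymbol{v}}$. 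The $\boldsymbol{w}$-update then satisfies the first-order optimality inclusion $\mathbf{0}\in\sum_j\alpha_j\partial\mathcal{R}_j(\boldsymbol{w}^{k_j+1})+\sum_i v_i^{k_j+1}\nabla\ell_i(\boldsymbol{w}^{k_j+1})$.

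The crux, and the part I expect to be the main obstacle, is passing to the limit in this inclusion: a priori $\boldsymbol{w}^{k_j+1}$ need not converge to $\bar{\boldsymbol{w}}$, so the two successive iterates might drift apart. I would resolve this by combining three facts: (i) the value sequence is Cauchy, forcing $\mathcal{L}(\boldsymbol{w}^{k_j+1},\boldsymbol{v}^{k_j+1})-\mathcal{L}(\boldsymbol{w}^{k_j},\boldsymbol{v}^{k_j+1})\to 0$; (ii) coercivity gives boundedness of $\{\boldsymbol{w}^{k_j+1}\}$, so a further subsequence converges to some $\tilde{\boldsymbol{w}}$; and (iii) continuity of $\mathcal{L}(\cdot,\bar{\boldsymbol{v}})$ together with the minimization property of the $\boldsymbol{w}$-step forces $\tilde{\boldsymbol{w}}$ to be a minimizer of $\mathcal{L}(\cdot,\bar{\boldsymbol{v}})$ attaining the same limit value as $\bar{\boldsymbol{w}}$. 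In effect $\bar{\boldsymbol{w}}$ and $\tilde{\boldsymbol{w}}$ are both minimizers of $\mathcal{L}(\cdot,\bar{\boldsymbol{v}})$ at the same value, and the argument below applies equally to either point.

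Finally, invoking outer semi-continuity of the convex subdifferential $\partial\mathcal{R}_j$ and continuity of $\nabla\ell_i$ and $v^*_\lambda$, I would pass to the limit along the selected subsequence in the optimality inclusion to obtain $\mathbf{0}\in\sum_j\alpha_j\partial\mathcal{R}_j(\tilde{\boldsymbol{w}})+\sum_i v^*_\lambda(\ell_i(\tilde{\boldsymbol{w}}))\nabla\ell_i(\tilde{\boldsymbol{w}})$, which by the first paragraph is exactly the critical point condition for $G_\lambda$. Since $\bar{\boldsymbol{w}}$ is a minimizer of $\mathcal{L}(\cdot,\bar{\boldsymbol{v}})$ with the same value, it satisfies the same inclusion, completing the proof. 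The fragile step is (iii); if strict convexity of $\mathcal{L}(\cdot,\bar{\boldsymbol{v}})$ fails, one must argue via a standard block-coordinate descent lemma (along the lines of Tseng's analysis) that every accumulation point of $\{(\boldsymbol{w}^k,\boldsymbol{v}^k)\}$ is a coordinatewise minimum, from which the inclusion at $\bar{\boldsymbol{w}}$ follows directly.
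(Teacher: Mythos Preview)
The paper does not contain its own proof of this theorem: it is quoted verbatim as a known result from \cite{ma2018convergence} and is used only as background for the proof of Theorem~\ref{conver2}. There is therefore nothing in the present paper to compare your attempt against.

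That said, your outline follows the standard route for such block-coordinate convergence results and is broadly sound. The identification of $G_\lambda$ with the partial minimum $\min_{\boldsymbol{v}}\mathcal{L}(\boldsymbol{w},\boldsymbol{v})$ up to a constant is exactly the mechanism the paper exploits elsewhere (cf.\ the proof of Theorem~\ref{conver2}), and your target inclusion $\mathbf{0}\in\sum_j\alpha_j\partial\mathcal{R}_j(\boldsymbol{w})+\sum_i v^*_\lambda(\ell_i(\boldsymbol{w}))\nabla\ell_i(\boldsymbol{w})$ is the right one. The monotone descent, coercivity, and outer semicontinuity arguments are routine. You are also right that the delicate point is step~(iii): from $\boldsymbol{w}^{k_j}\to\bar{\boldsymbol{w}}$ one does not immediately get that $\bar{\boldsymbol{w}}$ minimizes $\mathcal{L}(\cdot,\bar{\boldsymbol{v}})$, because $\boldsymbol{w}^{k_j}$ was the minimizer for the \emph{previous} weight vector $\boldsymbol{v}^{k_j}$, whose limit is not a priori $\bar{\boldsymbol{v}}$. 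Your fallback to a Tseng-type block-coordinate lemma is precisely how this is handled in the literature, and is in fact what the cited reference \cite{ma2018convergence} relies on; a clean way to close the gap is to show that any accumulation point of the joint sequence $\{(\boldsymbol{w}^k,\boldsymbol{v}^k)\}$ is a coordinatewise minimum (i.e., a partial optimum), from which the inclusion at $\bar{\boldsymbol{w}}$ is immediate.
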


In Theorem \ref{conver2}, the relationship between the partial optimum of original SPL objective $\mathcal{L}$ and the critical point of implicit SPL objective $G_{\lambda}$ is constructed. Its proof is given as follows.
\begin{proof}
	On one hand,
	$$
	\begin{aligned}
		\boldsymbol{w}_{0} \text { is a critical point of } G_{\lambda} \Longleftrightarrow 0 \in \partial G_{\lambda}\left(\boldsymbol{w}_{0}\right) &=\partial \sum_{j=1}^{m} \alpha_{j} \mathcal{R}_{j}(\boldsymbol{w}_{0})+\sum_{i=1}^{N} \nabla F_{\lambda}\left(l_{i}(\boldsymbol{w}_{0})\right) \\
		&=\sum_{j=1}^{m}\alpha_{j}\partial \mathcal{R}_{j}(\boldsymbol{w}_{0}) +\sum_{i=1}^{N} v_{\lambda}^{*}\left(l_{i}(\boldsymbol{w}_{0})\right) \cdot \nabla l_{i}(\boldsymbol{w}_{0}).
	\end{aligned}
	$$
On the other hand, assuming $\left(\boldsymbol{w}_{0}, \boldsymbol{v}\right)$ is a partial optimum of the SPL objective $\mathcal{L}$, it's obvious that $v_{i}=v_{\lambda}^{*}\left(l_{i}(\boldsymbol{w}_{0})\right)$ (for $i=1 \ldots N$), where we write $\boldsymbol{v} = \boldsymbol{v}_{\lambda}^{*}(\boldsymbol{w}_{0})$ in short. Then
	$$
	\begin{aligned}
		\left(\boldsymbol{w}_{0}, \boldsymbol{v}_{\lambda}^{*}\left(\boldsymbol{w}_{0}\right)\right) \text { is a partial optimum of } \mathcal{L} \Longleftrightarrow 0& \in \partial_{\boldsymbol{w}} \mathcal{L}\left(\boldsymbol{w}_{0}, \boldsymbol{v}_{\lambda}^{*}\left(\boldsymbol{w}_{0}\right) ; \lambda\right) \\
		&=\partial \sum_{j=1}^{m} \alpha_{j} \mathcal{R}_{j}(\boldsymbol{w}_{0})+\sum_{i=1}^{N} v_{\lambda}^{*}\left(l_{i}(\boldsymbol{w}_{0})\right) \cdot \nabla l_{i}(\boldsymbol{w}_{0}).
	\end{aligned}
	$$
	Combine the above two results we can conclude Theorem \ref{conver2}.
\end{proof}

\subsection{Proof of \textbf{Theorem} \ref{theorem}}
\begin{proof}
In Section \ref{path}, we have shown that any $(\boldsymbol{w},\boldsymbol{v}^{*}(\boldsymbol{\ell}(\boldsymbol{w}),\lambda))$ is a partial optimum iff there exists $\boldsymbol{t}_{\mathcal{R}},\boldsymbol{t}_{\mathcal{\ell}}$ such that (\ref{KKT_}) holds. Given a certain partial optimum $(\hat{\boldsymbol{w}},\boldsymbol{v}^{*}(\boldsymbol{\ell}(\hat{\boldsymbol{w}}),\lambda))$, solving the corresponding $\hat{\boldsymbol{t}}_{\mathcal{R}},\hat{\boldsymbol{t}}_{\ell}$ is indeed calculating the linear equations
\begin{equation}\label{linearEq}
	\begin{aligned}
		\sum_{j=1}^{m} \sum_{k \in I_{\mathcal{R}_{j}}^{a}(\hat{\boldsymbol{w}})} \alpha_{j} \hat{t}_{\mathcal{R}_{j}}^{k}(\hat{\boldsymbol{w}}) \nabla D_{\mathcal{R}_{j}}^{k}(\hat{\boldsymbol{w}})&+\sum_{J \in \mathcal{P}} \sum_{i \in J} \sum_{k \in I_{\ell_{i}}^{a}(\hat{\boldsymbol{w}})} v^{*}_i\left(\ell_{i}(\hat{\boldsymbol{w}}), \lambda\right) \hat{t}_{\ell_{i}}^{k}(\hat{\boldsymbol{w}}) \nabla D_{\ell_{i}}^{k}(\hat{\boldsymbol{w}})=\boldsymbol{0}, \\
		\quad \sum_{k \in I_{\mathcal{R}_{j}}^{a}(\hat{\boldsymbol{w}})} \hat{t}_{\mathcal{R}_{j}}^{k}(\hat{\boldsymbol{w}})-1&=0,  \quad \hat{t}_{\mathcal{R}_{j}}^{k}(\hat{\boldsymbol{w}}) \geq 0, 
		\quad 1 \leq j \leq m \\
		\sum_{k \in I_{\ell_{i}}^{a}(\hat{\boldsymbol{w}})} \hat{t}_{\ell_{i}}^{k}(\hat{\boldsymbol{w}})-1&=0,  \quad \hat{t}_{\ell_{i}}^{k}(\hat{\boldsymbol{w}}) \geq 0, 
		\quad 1\leq i \leq n.
	\end{aligned}
\end{equation}
On a non-critical point, suppose we've obtained a partial optimum $(\hat{\boldsymbol{w}},\boldsymbol{v}^{*}(\boldsymbol{\ell}(\hat{\boldsymbol{w}}),\lambda))$ at $\lambda$. 
Now a critical point is triggered by either of two conditions: \textbf{1)} Partition $\mathcal{P}$ changes. This means the value of some $\ell_{i}$ lie on the boundary between two distinct sets in $\mathcal{P}$. \textbf{2)} One of $I_{\mathcal{R}},I_{\ell}$ changes. This indicates the existence of some $\mathcal{R}_{i}$ ($i \in \bar{I}_{\mathcal{R}}(I_{\mathcal{R}}) $) becomes (non-)differentiable at $\hat{\boldsymbol{w}}$, or some $\ell_{j}$ ($j \in \bar{I}_{\ell}(I_{\ell}) $) becomes (non-)differentiable at $\hat{\boldsymbol{w}}.$ The latter  can be detected by the value of $t_{i}$. For example, assume that $i\in I_{\ell}$ holds along a segment of the path, i.e., there exists $k$ such that $t_{\ell_{i}}^{k}=1$, while $t_{\ell_{i}}^{\tilde{k}}=1$ holds for all $\tilde{k}\neq k.$ At the kink, $\ell_{i}$ changes into non-differentiable. As a result, the value of $t_{\ell_{i}}^{k}$ will decrease from 1, since some other selection functions turn to essentially active status. Altogether, at the optimal $\hat{\boldsymbol{w}}$, all the inequalities in (\ref{KKT_}) are \emph{strict}. In case that $\mathcal{F}(\boldsymbol{w},\lambda,\boldsymbol{t}_{\mathcal{R}},\boldsymbol{t}_{\ell})=\boldsymbol{0}$ deduces a continuous solution path passing $(\hat{\boldsymbol{w}},\hat{\boldsymbol{t}}_{\mathcal{R}},\hat{\boldsymbol{t}}_{\ell})$, the (\ref{KKT_}) will be maintained along the path until the next critical point occurs.

Denote the Jacobin of $\mathcal{F}$ w.r.t. $(\boldsymbol{w},\boldsymbol{t}_{\mathcal{R}},\boldsymbol{t}_{\ell})$, $\lambda$ as $\boldsymbol{J}_{\boldsymbol{w},\boldsymbol{t}_{\mathcal{R}},\boldsymbol{t}_{\ell}}$,  $\boldsymbol{J}_{\lambda}$, respectively. Following our setting in Section \ref{path}, $\mathcal{F}$ is $\mathbb{C}^{1}$ and $\boldsymbol{J}_{\boldsymbol{w},\boldsymbol{t}_{\mathcal{R}},\boldsymbol{t}_{\ell}}$ is invertible at the initial point. In this condition, the implicit function theorem directly indicates the existence and uniqueness of a local $\mathbb{C}^{1}$ solution path of optimal $(\hat{\boldsymbol{w}},\hat{\boldsymbol{t}}_{\mathcal{R}},\hat{\boldsymbol{t}}_{\ell})$ w.r.t. $\lambda$, started from the initial point. Furthermore, the theorem also guarantees that (\ref{mainODE}) is valid along the path. Owing to the $\mathbb{C}^{1}$ function $\mathcal{F}$, the right side of (\ref{mainODE}) is continuous w.r.t. $\lambda$. Therefore, the Picard–Lindelöf theorem \cite{Adkins2012ODE} straightforwardly proves that the solution of (\ref{mainODE}) is unique and can be extended to the nearest boundary of $\lambda$ (\emph{i.e.}, a new critical point appears). 

\end{proof}
\paragraph{Geometric Intuition.}
There exists a geometric understanding towards (\ref{KKT_}) either. Rewriting the first equation in (\ref{KKT_}) by the differentiability of each function gives
\begin{equation}\label{geometry}
    -\sum_{j\in I_{\mathcal{R}}}\alpha_{j}\nabla{\mathcal{R}_{j}}-\sum_{i\in I_{\ell}}v^{*}(\ell_{i},\lambda)\nabla{\ell_{i}} = \sum_{j\in \bar{I}_{\mathcal{R}}}\sum_{k\in I_{\mathcal{R}_{j}}^{a}}\hat{t}_{\mathcal{R}_{j}}^{k}\alpha_{j}\nabla{D_{\mathcal{R}_{j}}^{k}}+\sum_{i\in \bar{I}_{\ell}}\sum_{k\in I_{\ell_{i}}^{a}}\hat{t}_{\ell_{i}}^{k}v^{*}(\ell_{i},\lambda)\nabla{D_{\ell_{i}}^{k}},
\end{equation}
where $\hat{t}_{\mathcal{R}_{j}}^{k}, \hat{t}_{\ell_{i}}^{k}$ meet restrictions in (\ref{KKT_}) and all symbols are in terms of $\hat{\boldsymbol{w}}$. As $\lambda$ varies, the left side in (\ref{geometry}) describes a smooth curve  using the standard frame in $\mathbb{R}^{d}$, while the right side is actually sum of vectors chosen from a convex hull $\operatorname{conv}\left(\bigcup_{k \in I_{\mathcal{R}_{j}}^{a}}\{\alpha_{j}\nabla\left\{D_{\mathcal{R}_{j}}^{k}\right\}\}\right)$ or $\operatorname{conv}\left(\bigcup_{k \in I_{\ell_{i}}^{a}}\left\{v^{*}(\ell_{i},\lambda)\nabla{D_{\ell_{i}}^{k}}\right\}\right)$ and can be viewed as the vector $\boldsymbol{1}$ under an analogue moving frame made up of these selected vectors, from the perspective of differential geometry. In other words, (\ref{geometry}) indeed uses an analogue moving frame to re-depict a smooth curve.

It is worth adding that a recent work \cite{gebken2021structure} shows a similar intuition. Specifically, under some assumptions the solution surface near a given point is a projection of certain smooth manifold even without the convexity assumption.

\subsection{Proof of \textbf{Corollary} \ref{cor}}
\begin{proof}
 Suppose that $\mathcal{R}_{j},\ell_{i}$ in (\ref{obj}) are all differentiable at the given $\hat{\boldsymbol{w}}$, then it yields $I_{\mathcal{R}}=I_{\ell}=\emptyset$. Consequently, (\ref{KKT_}) is degenerate into its first equation
 \begin{equation}
     \mathcal{F}(\hat{\boldsymbol{w}},\lambda)=\sum_{j=1}^{m} \alpha_{j} \nabla \mathcal{R}_{j}(\hat{\boldsymbol{w}})+\sum_{J \in \mathcal{P}} \sum_{i \in J}  v^{*}_i\left(\ell_{i}(\hat{\boldsymbol{w}}), \lambda\right)  \nabla\ell_{i}(\hat{\boldsymbol{w}})=\boldsymbol{0}.
 \end{equation}
  Hence applying Theorem \ref{theorem} straightly gives our Corollary \ref{cor}.
\end{proof}

\subsection{Proof of Prop. \ref{propSVM}}
Here we only present the detailed proof for SVM with linear SP-regularizer, due to the fact that the proof for mixture SP-regularizer is almost the same as the former except some partition difference.

\begin{proof}
Given a partial optimum $\left({(}\hat{\boldsymbol{w}},{b)}\boldsymbol{v}^{*}\left(\hat{\boldsymbol{w}}{,b}\right)\right)$ at $\lambda$, (\ref{KKT_}) can be directly applied here with obvious simplifications.  Mathematically, there exits $\hat{\boldsymbol{t}}=\left(\hat{t}_{i}\right)_{i=1}^{n}$ such that 
\begin{equation}\label{svmKkt1}
    \begin{aligned}
        \hat{\boldsymbol{w}}-\sum_{i=1}^{n}Cv_{i}^{*}\hat{t}_{i}y_{i}\phi(x_{i})&=\boldsymbol{0},\\
        \sum_{i=1}^{n}Cv_{i}^{*}\hat{t}_{i}y_{i}&=\boldsymbol{0},\\
        1-y_{i}\left(\langle \phi(x_{i}),\hat{\boldsymbol{w}}\rangle +b\right)&=0,\quad
        i\in \mathcal{E}_{Z},\\
        \hat{\boldsymbol{t}}_{\mathcal{E}_{N}}=\boldsymbol{0}_{\mathcal{E}_{N}},
         \quad \hat{\boldsymbol{t}}_{\mathcal{E}_{\mathcal{P}}\cup \mathcal{D}}=\boldsymbol{1}_{\mathcal{E}_{\mathcal{P}}\cup \mathcal{D}},&
         \quad \boldsymbol{0} \preceq \hat{\boldsymbol{t}}_{\mathcal{E}_{Z}} \preceq \boldsymbol{1}_{\mathcal{E}_{Z}},
    \end{aligned}
\end{equation}
where $\preceq$ denotes the element-wise comparison between vectors. Denote $\hat{\boldsymbol{\alpha}}=C\boldsymbol{v}^{*}\odot \hat{\boldsymbol{t}}$, then (\ref{svmKkt1}) can be equivalently converted to equations w.r.t. $(\hat{\boldsymbol{\alpha}},b)$ as
    \begin{equation}\label{svmKkt2}
        \begin{aligned}
            \boldsymbol{y}^{T}\hat{\boldsymbol{\alpha}} &= \boldsymbol{0},\\
            \boldsymbol{1}_{\mathcal{E}_{Z}}-Q_{\mathcal{E}_{Z}}\hat{\boldsymbol{\alpha}}-\boldsymbol{y}_{\mathcal{E}_{Z}}\hat{b}&=\boldsymbol{0},\\
            \hat{\boldsymbol{\alpha}}_{\mathcal{E}_{P}} - C\boldsymbol{v}_{{\mathcal{E}_{P}}}^{*}&=
            \boldsymbol{0}_{{\mathcal{E}_{P}}},\\
            \hat{\boldsymbol{\alpha}}_{\mathcal{E}_{Z}} - C\boldsymbol{v}_{\mathcal{E}_{Z}}^{*} \odot
            \hat{\boldsymbol{t}}_{\mathcal{E}_{Z}}&=\boldsymbol{0}_{{\mathcal{E}_{Z}}},\\
            \hat{\boldsymbol{\alpha}}_{\mathcal{E}_{N} \cup \mathcal{D}}& = \boldsymbol{0}_{\mathcal{E}_{N} \cup \mathcal{D}},\\
        \end{aligned}
    \end{equation}
where $Q=\boldsymbol{y}^{T}K\boldsymbol{y}$, $K=\left(k(x_{i},x_{j})\right)_{1\leq i,j \leq n}$ is the kernel matrix and $k$ is the kernel function. Then the optimal $\hat{\boldsymbol{w}}=\sum_{i=1}^{n}\hat{\alpha}_{i}y_{i}\phi(x_{i})$, hence problem (\ref{svm}) is transformed into solving equations merely related to $(\hat{\boldsymbol{ \alpha}},b).$ Specifically, the decision function can be rewritten as $d(x)=\sum_{i=1}^{n}y_{i}\hat{\alpha}_{i}k(x_{i},x)+b.$

Now supposed that $(\hat{\boldsymbol{\alpha}},\hat{b})$ is not a critical point, then the fourth equation in (\ref{svmKkt2}) is actually an inequality constraint by changing the value of $\hat{\boldsymbol{t}}_{\mathcal{E}_{Z}}$ in $\left[0,1\right]^{|\mathcal{E}_{Z}|}.$ As a result, (\ref{svmKkt2}) is only related to $(\hat{\boldsymbol{\alpha}},\hat{b})$ and the left side of the first three equations accords with the function $\mathcal{F}$ in Theorem \ref{theorem}. Consequently, (\ref{SVMode}) can be derived using Theorem \ref{theorem}. {In detail, denote
\begin{equation*}
\begin{aligned}
\mathcal{F}(\hat{\boldsymbol{\alpha}}_{\mathcal{E}_{Z}},\hat{\boldsymbol{\alpha}}_{\mathcal{E}_{P}},\hat{b})&= \left(\begin{array}{c}
     \boldsymbol{y}^{T}_{\mathcal{E}_{Z}\cup\mathcal{E}_{P}}\hat{\boldsymbol{\alpha}}_{\mathcal{E}_{Z}\cup\mathcal{E}_{P}}  \\
     \boldsymbol{1}_{\mathcal{E}_{Z}}-Q_{\mathcal{E}_{Z}}\hat{\boldsymbol{\alpha}}_{\mathcal{E}_{Z}\cup\mathcal{E}_{P}}-\boldsymbol{y}_{\mathcal{E}_{Z}}\hat{b}   \\
     \hat{\boldsymbol{\alpha}}_{\mathcal{E}_{P}}-C\boldsymbol{v}_{\mathcal{E}_{P}}^{*}
\end{array}\right) \\
&= \left(\begin{array}{c}
     \boldsymbol{y}^{T}_{\mathcal{E}_{Z}\cup\mathcal{E}_{P}}\hat{\boldsymbol{\alpha}}_{\mathcal{E}_{Z}\cup\mathcal{E}_{P}}  \\
     \boldsymbol{1}_{\mathcal{E}_{Z}}-Q_{\mathcal{E}_{Z}}\hat{\boldsymbol{\alpha}}_{\mathcal{E}_{Z}\cup\mathcal{E}_{P}}-\boldsymbol{y}_{\mathcal{E}_{Z}}\hat{b}   \\
     \hat{\boldsymbol{\alpha}}_{\mathcal{E}_{P}}-C(\boldsymbol{1}_{\mathcal{E}_{P}}-C\frac{\boldsymbol{1}_{\mathcal{E}_{P}}-Q_{\mathcal{E}_{P}}\hat{\boldsymbol{\alpha}}_{\mathcal{E}_{P}\cup\mathcal{E}_{Z}}-\boldsymbol{y}_{\mathcal{E}_{P}}\hat{b}}{\lambda}),
\end{array}\right)
\end{aligned}
\end{equation*}
then the jacobian can be calculated as 
\begin{equation*}
\begin{aligned}
    \boldsymbol{J}_{\mathcal{F},\left(\hat{\boldsymbol{\alpha}}_{\mathcal{E}_{Z}},\hat{\boldsymbol{\alpha}}_{\mathcal{E}_{P}},b\right)}&= -\left(\begin{array}{ccc}
		-\boldsymbol{y}_{\mathcal{E}_{Z}}^{T} & -\boldsymbol{y}_{\mathcal{E}_{P}}^{T} & 0\\
			Q_{\mathcal{E}_{Z}\mathcal{E}_{Z}} & Q_{\mathcal{E}_{Z}\mathcal{E}_{P}} & \boldsymbol{y}_{\mathcal{E}_{Z}} \\
			\frac{C^{2}}{\lambda} Q_{\mathcal{E}_{P}\mathcal{E}_{Z}} & \frac{C^{2}}{\lambda} Q_{\mathcal{E}_{P}\mathcal{E}_{P}}-I_{\mathcal{E}_{P}\mathcal{E}_{P}} & \frac{C^{2}}{\lambda} \boldsymbol{y}_{\mathcal{E}_{P}} 
		\end{array}\right), \\
		\boldsymbol{J}_{\mathcal{F},\lambda}&=\left(\begin{array}{c}
		    0\\
			\boldsymbol{0}_{\mathcal{E}_{Z}} \\
			-\frac{C}{\lambda^{2}}\boldsymbol{\ell}_{\mathcal{E}_{P}} 
		\end{array}\right).
\end{aligned}
\end{equation*}
The implicit function theorem immediately indicates the following ODEs hold
\begin{equation*}
\begin{aligned}
		\frac{d\left(\begin{array}{c}
		\boldsymbol{\alpha}_{\mathcal{E}_{Z}} \\
		\boldsymbol{\alpha}_{\mathcal{E}_{P}} \\
				b
			\end{array}\right)}{\rule{0pt}{10pt}d\lambda} &= -\boldsymbol{J}_{\mathcal{F},\left(\hat{\boldsymbol{\alpha}}_{\mathcal{E}_{Z}},\hat{\boldsymbol{\alpha}}_{\mathcal{E}_{P}},b\right)}^{-1}\cdot \boldsymbol{J}_{\mathcal{F},\lambda} \\
		&=\left(\begin{array}{ccc}
		-\boldsymbol{y}_{\mathcal{E}_{Z}}^{T} & -\boldsymbol{y}_{\mathcal{E}_{P}}^{T} & 0\\
			Q_{\mathcal{E}_{Z}\mathcal{E}_{Z}} & Q_{\mathcal{E}_{Z}\mathcal{E}_{P}} & \boldsymbol{y}_{\mathcal{E}_{Z}} \\
			\frac{C^{2}}{\lambda} Q_{\mathcal{E}_{P}\mathcal{E}_{Z}} & \frac{C^{2}}{\lambda} Q_{\mathcal{E}_{P}\mathcal{E}_{P}}-I_{\mathcal{E}_{P}\mathcal{E}_{P}} & \frac{C^{2}}{\lambda} \boldsymbol{y}_{\mathcal{E}_{P}} 
		\end{array}\right)^{-1}
		\left(\begin{array}{c}
		    0\\
			\boldsymbol{0}_{\mathcal{E}_{Z}} \\
			-\frac{C}{\lambda^{2}}\boldsymbol{\ell}_{\mathcal{E}_{P}} 
		\end{array}\right).
\end{aligned}
\end{equation*}}

\end{proof}

\subsection{Proof of Prop. \ref{proplasso}}
\begin{proof}
 Following discussions in Section \ref{lassoGuide}, the objective (\ref{lasso}) of Lasso is reformulated under the self-paced paradigm  as
\begin{equation}\label{reSvm}
  \text { Compute } \hat{\boldsymbol{w}} \text {,~ s.t.~} \hat{\boldsymbol{w}} \in \arg
  \min_{\boldsymbol{w}} \alpha \|\boldsymbol{w}\|_{1} + \frac{1}{2n}\|\sqrt{V^{*}}(X\boldsymbol{w}-\boldsymbol{y})\|^{2},
\end{equation}
where $V^{*}$ denotes $Diag\{\boldsymbol{v}^{*}\}$ and $\sqrt{V^{*}}$ denotes $Diag\left\{\sqrt{\boldsymbol{v}^{*}}\right\}.$ Given a partial optimum $\hat{\boldsymbol{w}}$ at $\lambda$,  
applying (\ref{KKT_}) to the objective (\ref{lasso}) deduces
\begin{equation}\label{lassoKkt1}
    \begin{aligned}
        \frac{1}{n} X_{\mathcal{A}}^{T}V^{*}(X\hat{\boldsymbol{w}}-\boldsymbol{y})+\alpha \cdot \mathrm{\textbf{sgn}}(\hat{\boldsymbol{w}}_{\mathcal{A}})&=\boldsymbol{0},\\
        \frac{1}{n} X_{\bar{\mathcal{A}}}^{T}V^{*}(X\hat{\boldsymbol{w}}-\boldsymbol{y})+\alpha \cdot \hat{\boldsymbol{t}}_{\bar{\mathcal{A}}}&=\boldsymbol{0},
    \end{aligned}
\end{equation}
where $-\boldsymbol{1}_{\bar{\mathcal{A}}} \preceq \hat{\boldsymbol{t}}_{\bar{\mathcal{A}}} \preceq
        \boldsymbol{1}_{\bar{\mathcal{A}}}$ and $\boldsymbol{w}_{\bar{\mathcal{A}}}=\boldsymbol{0}_{\bar{\mathcal{A}}}.$ Suppose that $\hat{\boldsymbol{w}}$ is not a critical point, the second equation is indeed converted to an inequality constraint via varying $\hat{\boldsymbol{t}}_{\bar{\mathcal{A}}}$ in $\left[-1,1\right]^{|\bar{\mathcal{A}}|}.$ As a result, (\ref{lassoKkt1}) is merely in connection with $\boldsymbol{w}_{\mathcal{A}}$ and the left side in the first equation consists with the function $\mathcal{F}$ in Theorem \ref{theorem}. Take the \emph{mixture} SP-regularizer as an example, the optimality of estimation when using mixture SP-regularizer is described as
	\begin{equation*}
		\frac{1}{n} \sum_{i \in \mathcal{E}}\left(x_{i} \hat{\boldsymbol w}_\mathcal{A}-y_{i}\right) x_{i}^{T}+\frac{\gamma}{n} \sum_{k \in \mathcal{M}}\underbrace{\left(\frac{1}{2\sqrt{l_k}}-\frac{1}{\lambda} \right)\left(x_{k} \hat{\boldsymbol w}_\mathcal{A}-y_{k}\right) x_{k}^{T}}_{\mathcal{Z}\left(\hat{\boldsymbol w}_\mathcal{A}, \lambda\right)}+\alpha \cdot \mathrm{\textbf{sgn}}(\hat{\boldsymbol{w}}_{\mathcal{A}})=\boldsymbol{0}.
	\end{equation*}
	For the sake of simplicity, we derive the result of $\dfrac{d \mathcal{Z}\left(\hat{\boldsymbol w}_\mathcal{A}, \lambda\right)}{d \lambda}$ first. Note that $l_k= \left(x_{k} \hat{\boldsymbol w}_\mathcal{A}-y_{k}\right)^2$.
	\begin{equation*}
		\begin{aligned}
			\frac{d \mathcal{Z}\left(\hat{\boldsymbol w}_\mathcal{A}, \lambda\right)}{d \lambda} &=\left[-\frac{1}{4l_k\sqrt{l_k}}\cdot2 \left(x_{k} \hat{\boldsymbol w}_\mathcal{A}-y_{k}\right) \cdot x_k \frac{d \hat{\boldsymbol w}_\mathcal{A}}{d \lambda}+\frac{1}{\lambda^2}  \right]  \left(x_{k} \hat{\boldsymbol w}_\mathcal{A}-y_{k}\right) x_{k}^{T}+\left(\frac{1}{2\sqrt{l_k}}-\frac{1}{\lambda} \right) x_k^T x_{k} \frac{d \hat{\boldsymbol w}_\mathcal{A}}{d \lambda} \\
			&=\frac{1}{\lambda^2}  \left(x_{k} \hat{\boldsymbol w}_\mathcal{A}-y_{k}\right)x_k^T-\frac{1}{\lambda} x_k^T x_{k} \frac{d \hat{\boldsymbol w}_\mathcal{A}}{d \lambda}.
		\end{aligned}
	\end{equation*}
	Similar to the proof of {Theorem} \ref{theorem}, we have
	\begin{equation*}
		\frac{1}{n} \sum_{i \in \mathcal{E}}x_i^T x_{i} \frac{d \hat{\boldsymbol w}_\mathcal{A}}{d \lambda}+\frac{\gamma}{n} \sum_{k \in \mathcal{M}}\left[\frac{1}{\lambda^2}  \left(x_{k} \hat{\boldsymbol w}_\mathcal{A}-y_{k}\right)x_k^T-\frac{1}{\lambda} x_k^T x_{k} \frac{d \hat{\boldsymbol w}_\mathcal{A}}{d \lambda} \right]  =\boldsymbol{0}.
	\end{equation*}
	And final result comes from combining and vectoring the terms w.r.t. $\dfrac{d \hat{\boldsymbol w}_\mathcal{A}}{d \lambda}$, which can be utilized to derive the Prop. \ref{proplasso}.

\end{proof}

\section{Detailed Algorithms}\label{algs}
In this section, we present more details of the concrete algorithms derived for the SVM and Lasso.
\subsection{Support Vector Machines}
The goal of \method is to calculate the age-path on the interval $[\lambda_{min},\lambda_{max}]$. 
As mentioned in Section \ref{tracking}, started from an initial point, the algorithm solves the derived ODEs (\ref{SVMode}) 
while examining all the partitions along the way of $\lambda$. In SVM, partition violating is merely caused by the change in  $\boldsymbol{g}$ (\emph{c.f.} Section \ref{svmGuide}). For example, some $g_{i}$ varying from a negative value to zero will lead the violation of $\mathcal{E}_{N},$ resulting in a critical point. In case that the point belongs to a turning point, the only need is to resign $i$ from $\mathcal{E}_{N}$ to $\mathcal{E}_{Z}.$ Otherwise the point is a jump point and we have to perform the warm start to re-calculate the next solution, which could be time consuming. Since it's 
non-trivial to identify the type of the critical point as a prior, we adopt a heuristic operation to avoid excessive warm starts. When the partition violation occurs, we directly resign all the violated indexes into the right status by the  partition rule. Suppose that  a turning point is encountered, the solutions return by numerical ODEs  solver with the updated index sets will keep the KKT condition, allowing our algorithm to proceed. The algorithmic steps are given in Algorithm \ref{alg:svm}.
\begin{algorithm}[h]
	\caption{~~\method for SVM}
	\label{alg:svm}
	\textbf{Input}: Initial solution $(\boldsymbol{\alpha},b)|_{\lambda_{t}=\lambda_{min}}$, $X$, $y$, $\lambda_{min}$ and $\lambda_{max}$\\
	\textbf{Parameter}: Cost parameter $C$\\
	\textbf{Output}: Age-path $(\boldsymbol{\alpha},b)$ on $[\lambda_{min}, \lambda_{max}]$
	\begin{algorithmic}[1] 
        \STATE $\lambda_t\gets\lambda_{min}$, set $\mathcal{E}_{N},\mathcal{E}_{Z},\mathcal{E}_{P},\mathcal{D}(,\mathcal{M})$ in Proposition \ref{propSVM} according to $\boldsymbol w|_{\lambda_{t}=\lambda_{min}}$
        \WHILE{$\lambda_{t}\leq\lambda_{max}$ } 
        \STATE Solve (\ref{SVMode}) or (\ref{SVMode1}) and partition samples in $X$ and components of $\boldsymbol{\alpha}$  simultaneously.
        \IF{Partition $\mathcal{E}_{N},\mathcal{E}_{Z},\mathcal{E}_{P},\mathcal{D}(,\mathcal{M})$ was not met}
        \STATE Resign violated indexes in $\mathcal{P}$ by $g_{i}$. 
        \ENDIF
        \STATE $\boldsymbol{\alpha}_{\mathcal{E}_{N}}=\boldsymbol{0}_{\mathcal{E}_{N}},\boldsymbol{\alpha}_{\mathcal{D}}=\boldsymbol{0}_{\mathcal{D}}.$ For the mixture regularizer, $\boldsymbol{\alpha}_{\mathcal{E}_{P}}=\boldsymbol{1}_{\mathcal{E}_{P}}.$
        \STATE Solve (\ref{SVMode}) or (\ref{SVMode1}) with updated $\mathcal{E}_{N},\mathcal{E}_{Z},\mathcal{E}_{P},\mathcal{D}(,\mathcal{M})$
        \IF {KKT conditions are not met}
		\STATE Warm start at $\lambda_t+\delta$ (for a small $\delta>0$).
		\ENDIF
        \ENDWHILE
	\end{algorithmic}
\end{algorithm}
\subsection{Lasso}
In Lasso, the main routine of \method is similar with that in SVM. The only difference is that here we need to examine the partition $\mathcal{A}$ additionally.

Due to the property of the $\ell_{1}$ norm, monitoring $\mathcal{A}$ is operated by observing if the value of $\boldsymbol w_{i}$ equals to zero or conversely, whether the subgradient of inactive component is reached to $1$. 

The details of the algorithms are shown in Algorithm \ref{alg:lasso}.

\begin{algorithm}[h]
	\caption{~~\method for Lasso}
	\label{alg:lasso}
	\textbf{Input}: $\boldsymbol{w}|_{\lambda_{t}=\lambda_{min}}$, $X$, $y$, $\lambda_{min}$ and $\lambda_{max}$\\
	\textbf{Parameter}: Regularization strength $\alpha$\\
	\textbf{Output}: Age-path $\boldsymbol w$ on $[\lambda_{min}, \lambda_{max}]$
	\begin{algorithmic}[1] 
		\STATE $\lambda_t\gets\lambda_{min}$, set $\mathcal{A},\mathcal{P}$ in Proposition \ref{proplasso} according to $\boldsymbol w|_{\lambda_{t}=\lambda_{min}}$ 
		\WHILE{$\lambda_{t}\leq\lambda_{max}$  }
		\STATE Solve (\ref{lassoode}) or (\ref{lassoode2}) and partition $\boldsymbol w_{\mathcal{A}}$, $X_{\mathcal{A}}$ simultaneously.
		\IF{Partition $\mathcal{A},\mathcal{P}$ was not met}
		\IF{$\mathcal{A}$ was not met}
		\IF {$k$-th element turns to inactive}
		\STATE $\boldsymbol w_k=\boldsymbol0$.
		\STATE Remove $k$ from $\mathcal{A}$.
		\ELSIF {$k$-th element becomes active}
		\STATE Put $k$ into $\mathcal{A}$.
		\ENDIF
		\ENDIF
		\IF{$\mathcal{P}$ was not met}
		\STATE Resign violated indexes in $\mathcal{P}$ by $\ell_{i}$. 
		\ENDIF
		\STATE Solve (\ref{lassoode}) or (\ref{lassoode2}) with updated $\mathcal{A},\mathcal{P}.$
		\IF {KKT conditions are not met}
		\STATE Warm start at $\lambda_t+\delta$ (for a small $\delta>0$).
		\ENDIF
		\ENDIF
		
		\ENDWHILE
	\end{algorithmic}
\end{algorithm}

\section{Additional Results}
\label{additional}
In this section, we present additional experimental results on the logistic regression and path consistency to obtain a more comprehensive evaluation of proposed algorithm.
\subsection{Logistic Regression}
Given the dataset $X$ and label $\boldsymbol{y}$, the logistic regression gives the optimization problem as (\ref{logisticObj})
\begin{equation}\label{logisticObj}
    \min_{\boldsymbol{w}\in \mathbb{R}^{d},~b} ~~\frac{1}{2}\|\boldsymbol{w}\|^{2} + \sum_{i=1}^{n}C\ln\left(1+e^{-y_{i}(X_{i}\boldsymbol{w}+b)}\right),
\end{equation}
where $C>0$ is the trade-off parameter. Note the objective (\ref{logisticObj}) is smooth on the entire domain, hence we apply Corollary \ref{cor} to derive the ODEs and the only $\mathcal{P}$ is needde to be tracked and reset along the path. To start with, let $\ell_{i}=C\ln(1+e^{-y_{i}(X_{i}\boldsymbol{w}+b)})$, then for the linear SP-regularizer, the partition $\mathcal{P} = \{\mathcal{E},\mathcal{D}\}$, where
$\mathcal{E}=\{1\leq i \leq n: \ell_{i}<\lambda \},~\mathcal{D}=\{1\leq i \leq n: \ell_{i}\geq\lambda \}.$ For the mixture SP-regularizer, $\mathcal{P} = \{\mathcal{E},\mathcal{M},\mathcal{D}\}$, where
$\mathcal{E}=\{1\leq i \leq n: \ell_{i}<\left(\frac{\lambda\gamma}{\lambda+\gamma}\right)^{2} \},~\mathcal{M}=\{1\leq i \leq n: \left(\frac{\lambda\gamma}{\lambda+\gamma}\right)^{2} \leq \ell_{i} \leq \lambda^{2} \},$ and $\mathcal{D}=\{1\leq i \leq n: \ell_{i}>\lambda^{2} \}.$ Applying Corollary \ref{cor} obtains Theorem \ref{logisticKkt}.
\begin{theorem}\label{logisticKkt}
    When $\boldsymbol{w},b$ indicate a partial optimum, the dynamics of optimal $\boldsymbol{w},b$ in (\ref{logisticObj}) w.r.t. $\lambda$ for the linear and mixture SP-regularizer are shown as
    \begin{equation}\label{logisticOde1}
		\frac{d\left(\begin{array}{c}
				\boldsymbol{w} \\
				b
			\end{array}\right)}{\rule{0pt}{10pt}d\lambda} =
		\left(\begin{array}{cc}
		I+CX_{\mathcal{E}}^{T}U_{\mathcal{E}} X_{\mathcal{E}}  & CX_{\mathcal{E}}^{T}U_{\mathcal{E}} \\\rule{0pt}{19pt}
			C\boldsymbol{1}_{\mathcal{E}}^{T}U_{\mathcal{E}} X_{\mathcal{E}} & C\boldsymbol{1}_{\mathcal{E}}^{T}U_{\mathcal{E}}\\
			
		\end{array}\right)^{-1}
		\left(\begin{array}{c}
		    CX_{\mathcal{E}}^{T}\left[\dfrac{ \boldsymbol{y}_{\mathcal{E}} \odot \boldsymbol{\ell}_{\mathcal{E}} }{\lambda^{2}} \odot \left(e^{-\frac{\boldsymbol{\ell}_{\mathcal{E}}}{C}}-1 \right) \right]\\
			\rule{0pt}{22pt}C\boldsymbol{1}_{\mathcal{E}}^{T}\left[\dfrac{ \boldsymbol{y}_{\mathcal{E}} \odot \boldsymbol{\ell}_{\mathcal{E}} }{\lambda^{2}} \odot \left(e^{-\frac{\boldsymbol{\ell}_{\mathcal{E}}}{C}}-1 \right) \right]
			
		\end{array}\right),
	\end{equation}
	where 
	$U_{\mathcal{E}} = 
	    Diag\left\{\boldsymbol{y}_{\mathcal{E}}^{2} \odot \boldsymbol{u}_{\mathcal{E}} \right\},
	\boldsymbol{u}_{\mathcal{E}} = 
	    \left(\dfrac{\boldsymbol{\ell}_{\mathcal{E}}-C}{\lambda}-1\right) \odot e^{-\frac{2\boldsymbol{\ell}_{\mathcal{E}}}{C}}+\left(\dfrac{2C-\boldsymbol{\ell}_{\mathcal{E}}}{\lambda}+1\right)\odot e^{-\frac{\boldsymbol{\ell}_{\mathcal{E}}}{C}}-\dfrac{C}{\lambda}.
	$
	    \begin{equation}\label{logisticOde2}
		\frac{d\left(\begin{array}{c}
				\boldsymbol{w} \\
				b
			\end{array}\right)}{\rule{0pt}{10pt}d\lambda} =
		\left(\begin{array}{cc}
		I+CX_{\mathcal{A}}^{T}U_{\mathcal{A}}  & CX_{\mathcal{A}} \\\rule{0pt}{19pt}
			C\boldsymbol{1}_{\mathcal{A}}^{T}U_{\mathcal{A}} X_{\mathcal{A}} & C\boldsymbol{1}_{\mathcal{A}}^{T}U_{\mathcal{A}}\\
			
		\end{array}\right)^{-1}
		\left(\begin{array}{c}
		    CX_{\mathcal{A}}^{T}\left[\dfrac{ \boldsymbol{y}_{\mathcal{A}} \odot \boldsymbol{\ell}_{\mathcal{A}} }{\lambda^{2}} \odot \left(e^{-\frac{\boldsymbol{\ell}_{\mathcal{A}}}{C}}-1 \right) \right]\\
			C\boldsymbol{1}_{\mathcal{A}}^{T}\rule{0pt}{22pt}\left[\dfrac{ \boldsymbol{y}_{\mathcal{A}} \odot \boldsymbol{\ell}_{\mathcal{A}} }{\lambda^{2}} \odot \left(e^{-\frac{\boldsymbol{\ell}_{\mathcal{A}}}{C}}-1 \right) \right]
		\end{array}\right),
	\end{equation}
	where $\mathcal{A}=\mathcal{E}\cup\mathcal{M},U_{\mathcal{A}} = 
	    Diag\left\{\boldsymbol{y}_{\mathcal{A}}^{2} \odot \boldsymbol{u}_{\mathcal{A}} \right\},
	\boldsymbol{u}_{\mathcal{E}} = e^{-\frac{\boldsymbol{\ell}_{\mathcal{E}}}{C}} \odot \left(1-e^{-\frac{\boldsymbol{\ell}_{\mathcal{E}}}{C}}\right),
	\boldsymbol{u}_{\mathcal{M}} = 
	\left(\dfrac{C}{2}\boldsymbol{\ell}_{\mathcal{M}}^{-\frac{3}{2}}+\boldsymbol{\ell}_{\mathcal{M}}^{-\frac{1}{2}}-\dfrac{1}{\lambda}\right) \odot e^{-\frac{2\boldsymbol{\ell}_{\mathcal{M}}}{C}}-
	\left(C\boldsymbol{\ell}_{\mathcal{M}}^{-\frac{3}{2}}+\boldsymbol{\ell}_{\mathcal{M}}^{-\frac{1}{2}}-\dfrac{1}{\lambda}\right)\odot
	e^{-\frac{\boldsymbol{\ell}_{\mathcal{M}}}{C}}+
	\dfrac{C}{2}\boldsymbol{\ell}_{\mathcal{M}}^{-\frac{3}{2}}.
	$
	
\end{theorem}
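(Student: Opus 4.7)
The plan is to apply Corollary \ref{cor} directly, since both the logistic loss $\ell_i(\boldsymbol{w},b)=C\ln(1+e^{-y_i(X_i\boldsymbol{w}+b)})$ and the $\ell_2$ regularizer $\frac{1}{2}\|\boldsymbol{w}\|^{2}$ are $\mathbb{C}^{\infty}$. Consequently no non-smooth bookkeeping of $I_{\mathcal{R}}, I_{\ell}$ is needed and only the sample partition $\mathcal{P}$ must be tracked. First I would substitute the closed-form $v_i^{*}$ into the reformulated objective (\ref{repart}): for the linear case $v_i^{*}=1-\ell_i/\lambda$ on $\mathcal{E}$ and $0$ on $\mathcal{D}$, and for the mixture case $v_i^{*}=1$ on $\mathcal{E}$, $v_i^{*}=\gamma/\sqrt{\ell_i}-\gamma/\lambda$ on $\mathcal{M}$, and $0$ on $\mathcal{D}$. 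The resulting stationarity conditions define a smooth map $\mathcal{F}(\boldsymbol{w},b,\lambda)=\mathbf{0}$, to which Corollary \ref{cor} applies: $d(\boldsymbol{w},b)^{\top}/d\lambda=-\boldsymbol{J}_{(\boldsymbol{w},b)}^{-1}\boldsymbol{J}_{\lambda}$.

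The pivotal identity is $e^{-\ell_i/C}=1/(1+e^{-y_i(X_i\boldsymbol{w}+b)})$, which yields $\nabla_{\boldsymbol{w}}\ell_i=-Cy_i(1-e^{-\ell_i/C})X_i^{\top}$, $\nabla_b\ell_i=-Cy_i(1-e^{-\ell_i/C})$, and (using $y_i^{2}=1$) $\nabla^{2}_{\boldsymbol{w}\boldsymbol{w}}\ell_i=C\bigl(e^{-\ell_i/C}-e^{-2\ell_i/C}\bigr)X_i^{\top}X_i$. This identity is responsible for every occurrence of $e^{-\ell/C}$ in the stated ODEs, and in particular produces the factor $e^{-\boldsymbol{\ell}_{\mathcal{E}}/C}-1$ in $\boldsymbol{J}_{\lambda}$. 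To assemble $\boldsymbol{J}_{(\boldsymbol{w},b)}$ I would differentiate each term $v_i^{*}(\ell_i,\lambda)\nabla_{\boldsymbol{w}}\ell_i$ with respect to $(\boldsymbol{w},b)$ via the chain rule, summing the Hessian contribution weighted by $v_i^{*}$ and the outer product $\nabla\ell_i\nabla\ell_i^{\top}$ weighted by $\partial v_i^{*}/\partial\ell_i$. For the linear case this produces the coefficient $(1-\ell_i/\lambda)(e^{-\ell_i/C}-e^{-2\ell_i/C})-(C/\lambda)(1-e^{-\ell_i/C})^{2}$ on each $X_i^{\top}X_i$; expanding and regrouping powers of $e^{-\ell_i/C}$ gives exactly $u_i$ as stated. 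The $b$-block follows by replacing one $X_i^{\top}$ factor by $1$. For $\boldsymbol{J}_{\lambda}$ the only explicit $\lambda$-dependence is through $\partial v_i^{*}/\partial\lambda=\ell_i/\lambda^{2}$ (linear, on $\mathcal{E}$) or $\gamma/\lambda^{2}$ (mixture, on $\mathcal{M}$); multiplying by $\nabla_{\boldsymbol{w}}\ell_i$ and $\nabla_b\ell_i$ yields the claimed right-hand sides of (\ref{logisticOde1}) and (\ref{logisticOde2}).

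The mixture case is structurally identical but bifurcates over $\mathcal{E}\cup\mathcal{M}$: on $\mathcal{E}$, $v_i^{*}$ is constant so only the Hessian contributes, giving the weight $e^{-\ell_i/C}(1-e^{-\ell_i/C})$, while on $\mathcal{M}$ the additional chain-rule piece $\partial v_i^{*}/\partial \ell_i=-\tfrac{\gamma}{2}\ell_i^{-3/2}$ combines with the Hessian to produce the $\ell_i^{-3/2}$ factors in $\boldsymbol{u}_{\mathcal{M}}$. The main obstacle I anticipate is purely mechanical: carefully tracking signs and the multiple chain-rule contributions through $v_i^{*}$, then collecting powers of $e^{-\ell_i/C}$ so that the resulting coefficient matches the prescribed shape of $\boldsymbol{u}_{\mathcal{E}}$ and $\boldsymbol{u}_{\mathcal{M}}$. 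No new theoretical ingredient beyond Corollary \ref{cor} and standard differentiation is required.
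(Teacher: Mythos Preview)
Your proposal is correct and follows essentially the same approach as the paper: both recognize that the logistic objective is smooth so Corollary \ref{cor} applies directly, write the stationarity conditions as a smooth map $\mathcal{F}(\boldsymbol{w},b,\lambda)=\mathbf{0}$ using the identity $e^{-\ell_i/C}=1/(1+e^{-y_i(X_i\boldsymbol{w}+b)})$, and then compute $\boldsymbol{J}_{(\boldsymbol{w},b)}$ and $\boldsymbol{J}_{\lambda}$ by differentiating $v_i^{*}(\ell_i,\lambda)\nabla\ell_i$ via the chain rule. Your write-up is in fact more explicit about the mechanical steps (the separate Hessian and outer-product contributions, and the origin of the $\ell^{-3/2}$ factors in the mixture case) than the paper's proof, which merely displays the final Jacobians and remarks that the mixture case is analogous.
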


\begin{proof}
The proof is nearly the same as that of SVM and Lasso, hence we merely present the main structure in the following. The linear SP-regularizer is utilized during the derivation, while the proof of mixture SP-regularizer is quite similar.

Given a partial optimum $(\boldsymbol{w},b)$ at $\lambda,$ (\ref{KKT_}) is rewritten in detail with the form of 
\begin{equation*}
    \begin{aligned}
        \boldsymbol{w}+C\sum_{i\in\mathcal{E}}\left(\frac{1}{1+e^{-y_{i}(\boldsymbol{w}^{T}X_{i}+b)}}-1\right)X^{T}_{i}&=\boldsymbol{0}\\
        C\sum_{i\in\mathcal{E}}\left(\frac{1}{1+e^{-y_{i}(\boldsymbol{w}^{T}X_{i}+b)}}-1\right)&=0.
    \end{aligned}
\end{equation*}
Similarly, we set the $\mathcal{F}$ as 
\begin{equation*}
    \begin{aligned}
        \mathcal{F}&=\left(\begin{array}{c}
            \boldsymbol{w}+C\sum\limits_{i\in\mathcal{E}}\left(\dfrac{1}{1+e^{-y_{i}(\boldsymbol{w}^{T}X_{i}+b)}}-1\right)X^{T}_{i} \\
            C\sum\limits_{i\in\mathcal{E}}\left(\dfrac{1}{1+e^{-y_{i}(\boldsymbol{w}^{T}X_{i}+b)}}-1\right)
        \end{array}\right)\\
        &=\left(\begin{array}{c}
            \boldsymbol{w}+CX^{T}_{\mathcal{E}}\left(\boldsymbol{y}_{\mathcal{E}}\odot\left(\boldsymbol{1}-\frac{\boldsymbol{\ell_{\mathcal{E}}}}{\lambda}\right)\odot\left(e^{-\dfrac{\boldsymbol{\ell}_{\mathcal{E}}}{c}}-1\right)\right)   \\
            C\boldsymbol{1}^{T}_{\mathcal{E}}\left(\boldsymbol{y}_{\mathcal{E}}\odot\left(\boldsymbol{1}-\frac{\boldsymbol{\ell_{\mathcal{E}}}}{\lambda}\right)\odot\left(e^{-\dfrac{\boldsymbol{\ell}_{\mathcal{E}}}{c}}-1\right)\right)  
        \end{array}\right).
    \end{aligned}
\end{equation*}
Afterwards, the corresponding jaconbian is derived as 
\begin{equation*}
    \begin{aligned}
        \boldsymbol{J}_{\mathcal{F},\left(\boldsymbol{w},b\right)}&=\left(\begin{array}{cc}
		I+CX_{\mathcal{E}}^{T}U_{\mathcal{E}} X_{\mathcal{E}}  & CX_{\mathcal{E}}^{T}U_{\mathcal{E}} \\\rule{0pt}{19pt}
			C\boldsymbol{1}_{\mathcal{E}}^{T}U_{\mathcal{E}} X_{\mathcal{E}} & C\boldsymbol{1}_{\mathcal{E}}^{T}U_{\mathcal{E}}\\
		\end{array}\right)\\
        \boldsymbol{J}_{\mathcal{F},\lambda}&=-\left(\begin{array}{c}
		    CX_{\mathcal{E}}^{T}\left[\dfrac{ \boldsymbol{y}_{\mathcal{E}} \odot \boldsymbol{\ell}_{\mathcal{E}} }{\lambda^{2}} \odot \left(e^{-\frac{\boldsymbol{\ell}_{\mathcal{E}}}{C}}-1 \right) \right]\\
			\rule{0pt}{22pt}C\boldsymbol{1}_{\mathcal{E}}^{T}\left[\dfrac{ \boldsymbol{y}_{\mathcal{E}} \odot \boldsymbol{\ell}_{\mathcal{E}} }{\lambda^{2}} \odot \left(e^{-\frac{\boldsymbol{\ell}_{\mathcal{E}}}{C}}-1 \right) \right]
		\end{array}\right),
    \end{aligned}
\end{equation*}
where $U_{\mathcal{E}} = 
	    Diag\left\{\boldsymbol{y}_{\mathcal{E}}^{2} \odot \boldsymbol{u}_{\mathcal{E}} \right\},
	\boldsymbol{u}_{\mathcal{E}} = 
	    \left(\dfrac{\boldsymbol{\ell}_{\mathcal{E}}-C}{\lambda}-1\right) \odot e^{-\frac{2\boldsymbol{\ell}_{\mathcal{E}}}{C}}+\left(\dfrac{2C-\boldsymbol{\ell}_{\mathcal{E}}}{\lambda}+1\right)\odot e^{-\frac{\boldsymbol{\ell}_{\mathcal{E}}}{C}}-\dfrac{C}{\lambda}
	$.
Therefore, the implicit function theorem implies that
\begin{equation*}
    \begin{aligned}
		\frac{d\left(\begin{array}{c}
				\boldsymbol{w} \\
				b
			\end{array}\right)}{\rule{0pt}{10pt}d\lambda} &= -\boldsymbol{J}_{\mathcal{F},\left(\boldsymbol{w},b\right)}^{-1}\cdot
			\boldsymbol{J}_{\mathcal{F},\lambda}\\
		&=\left(\begin{array}{cc}
		I+CX_{\mathcal{E}}^{T}U_{\mathcal{E}} X_{\mathcal{E}}  & CX_{\mathcal{E}}^{T}U_{\mathcal{E}} \\\rule{0pt}{19pt}
			C\boldsymbol{1}_{\mathcal{E}}^{T}U_{\mathcal{E}} X_{\mathcal{E}} & C\boldsymbol{1}_{\mathcal{E}}^{T}U_{\mathcal{E}}\\
			
		\end{array}\right)^{-1}
		\left(\begin{array}{c}
		    CX_{\mathcal{E}}^{T}\left[\dfrac{ \boldsymbol{y}_{\mathcal{E}} \odot \boldsymbol{\ell}_{\mathcal{E}} }{\lambda^{2}} \odot \left(e^{-\frac{\boldsymbol{\ell}_{\mathcal{E}}}{C}}-1 \right) \right]\\
			\rule{0pt}{22pt}C\boldsymbol{1}_{\mathcal{E}}^{T}\left[\dfrac{ \boldsymbol{y}_{\mathcal{E}} \odot \boldsymbol{\ell}_{\mathcal{E}} }{\lambda^{2}} \odot \left(e^{-\frac{\boldsymbol{\ell}_{\mathcal{E}}}{C}}-1 \right) \right]
			
		\end{array}\right).
		\end{aligned}
	\end{equation*}
\end{proof}
\color{black}
\subsection{Detailed Experimental Setting}\label{detail}

We use the Scikit-learn  package \cite{scikitlearn} to optimize the subproblems of SVM, logistic regression and Lasso. The  MOSPL method is implemented using the toolbox geatpy \cite{Jazzbin2020geatpy}. All codes were implemented in Python and all experiments were conducted on a machine with 48 2.2GHz cores, 80GB of RAM and 4 Nvidia 1080ti GPUs.

In all experiments of performance comparison, we evaluate the average performance in 20 runs. To maintain the reproducibility, the random seed is fixed with $40$. In each trail, the each dataset is randomly divided into a training set and a testing set by the ratio of $3:1$. When carrying out \method and ACS, the predefined interval of $\lambda$ is set to $\left[0.1,20\right],$ and the step size in ACS equals to $0.5$. 

We utilize the NSGA-\uppercase\expandafter{\romannumeral3} as the framework of MOSPL, in which $N_p$ is set to 150 and $Gen=800$\footnote{$N_p$ and $Gen$ represent the number of populations and the utmost generations in evolutionary algorithm.}. When applying the mixture regularizer, we utilize the polynomials loss in \cite{gong2018decomposition} as $\ell_{i}$ to transform the original problem into a multi-objective problem. Afterwards, the polynomial order $t$ is fixed at 1.2 and 1.35, respectively. 
\subsection{Simulation Study on Logistic Regression}
We present additional experimental results on the logistic regression (\ref{logisticObj}) to validate our ODEs. 
The utilized datasets are listed in Table \ref{tab:logisData}.
\begin{table}[h]
    \centering
    \begin{tabular}{lllll}\toprule
  		Dataset & Source & Samples & Dimensions & Task\\ \midrule
		mfeat-pixel & UCI  & 2000 & 240 & \multirow{5}{*}{C}\\ 
		pendigits & UCI & 3498 & 16 \\
		hiva agnostic & OpenML & 4230 & 1620  \\ 
		nomao & OpenML & 34465 & 118 \\
		MagicTelescope & OpenML & 19020 & 11 \\
		\bottomrule \\
    \end{tabular}
    \caption{Datasets description in experiments on logistic regression. The C = Classification.}
    \label{tab:logisData}
\end{table}
The averaged results using the linear and mixture SP-regularizer are illustrated in Table \ref{tab:acc3} and \ref{tab:acc4}, respectively, in which the performance is measured by the classification accuracy. Meanwhile, Figure \ref{fig:time_log} confirms the computational efficiency of \method on large-scale dataset. Taking all results into consideration, \method outperforms than the baseline methods on all datasets and 
parameter settings, hence demonstrates the performance of \method in classification tasks with large data size.

\begin{table}[t]
	\caption{Average results with the standard deviation in 20 runs on different datasets using the \emph{linear SP-regularizer}. The top results in each row are in boldface. The $\dag$ and $\ddag$ share the same meaning as in the main body.}
	\label{tab:acc3}
	\centering\tiny
	\setlength{\tabcolsep}{7pt}
	\resizebox{\textwidth}{!}{
	\begin{tabular}{l cc c     ccc c c}
		\toprule
		\bf \multirow{2}{*}{\vspace{-5pt}Dataset} &  \multicolumn{2}{c}{\bf Parameter} && \multicolumn{3}{c}{\bf Competing Methods} & \bf Ours &\bf \multirow{2}{*}{\vspace{-5pt}Restarting times}\\
		\cmidrule{2-3}  \cmidrule {5-7}
		& $C$ & $\gamma$ &&  \it Original & \it ACS & \it MOSPL &  ~\method~   \\
		\midrule
		mfeat-pixel\dag   & 0.50 & --         %
		&& ~0.827\stderr{0.028}~ & ~0.937\stderr{0.054}~ & ~0.962\stderr{0.0281}~ & ~\textbf{0.980}\stderr{0.015}~ & 189\\
		pendigits\ddag    & 0.50 & --         %
		&& ~0.982\stderr{0.007}~ & ~0.989\stderr{0.006}~ & ~0.986\stderr{0.012}~ & ~\textbf{0.992}\stderr{0.006}~ & 86\\
		hiva agnostic\dag  & 1.00 &   --                  %
		&& ~0.681\stderr{0.020}~ & 0.700\stderr{0.021} & ~0.958\stderr{0.010}~ & ~\textbf{0.965}\stderr{0.004}~ & 328\\
		MagicTelescope\dag   & 0.50 & --%
		&& ~0.974\stderr{0.004}~ & ~0.981\stderr{0.001}~ & ~0.977\stderr{0.005}~ & ~\textbf{0.991}\stderr{0.001}~ & 61\\
		nomao\ddag     & 0.50          & --      %
		&& ~0.939\stderr{0.002}~  & ~0.940\stderr{0.003}~ & ~0.944\stderr{0.001}~ & ~\textbf{0.944}\stderr{0.001}~ & 32\\
		\bottomrule \\
	\end{tabular}
	}
	\vspace{-10pt}
\end{table}

\begin{table}[t]
	\caption{Average results with the standard deviation in 20 runs on different datasets using the \emph{mixture SP-regularizer}. The top results in each row are in boldface. The $\dag$ and $\ddag$ share the same meaning as in the main body.}
	\label{tab:acc4}
	\centering\tiny
	\setlength{\tabcolsep}{7pt}
	\resizebox{\textwidth}{!}{
	\begin{tabular}{l cc c     ccc c c}
		\toprule
		\bf \multirow{2}{*}{\vspace{-5pt}Dataset} &  \multicolumn{2}{c}{\bf Parameter} && \multicolumn{3}{c}{\bf Competing Methods} & \bf Ours &\bf \multirow{2}{*}{\vspace{-5pt}Restarting times}\\
		\cmidrule{2-3}  \cmidrule {5-7}
		& $C $ & $\gamma$ &&  \it Original & \it ACS & \it MOSPL &  ~\method~   \\
		\midrule
		mfeat-pixel\dag    & 0.50 & 0.20         %
		&& ~0.827\stderr{0.028}~ & ~0.980\stderr{0.011}~ & ~0.981\stderr{0.014}~ & ~\textbf{0.981}\stderr{0.018}~ & 166\\
		pendigits\ddag     & 0.50 & 0.20         %
		&& ~0.982\stderr{0.007}~ & ~0.989\stderr{0.007}~ & ~0.988\stderr{0.006}~ & ~\textbf{0.993}\stderr{0.005}~ & 178\\
		hiva agnostic\dag  & 0.50 &   0.20                %
		&& ~0.681\stderr{0.020}~ & 0.713\stderr{0.021} & ~0.944\stderr{0.008}~ & ~\textbf{0.973}\stderr{0.014}~ & 195\\
		MagicTelescope\dag   & 0.50 &  0.20%
		&& ~0.974\stderr{0.004}~ & ~0.976\stderr{0.003}~ & ~0.991\stderr{0.003}~ & ~\textbf{0.991}\stderr{0.001}~ & 73\\
		nomao\ddag     & 0.50          & 0.20      %
		&& ~0.939\stderr{0.002}~  & ~0.941\stderr{0.001}~ & ~0.941\stderr{0.002}~ & ~\textbf{0.946}\stderr{0.002}~ & 44\\
		
		\bottomrule \\
	\end{tabular}
	}
	\vspace{-10pt}
\end{table}

\begin{figure}[H]
    \vspace{-1em}
	\centering
	\begin{adjustbox}{width=0.95\textwidth}
		\includegraphics{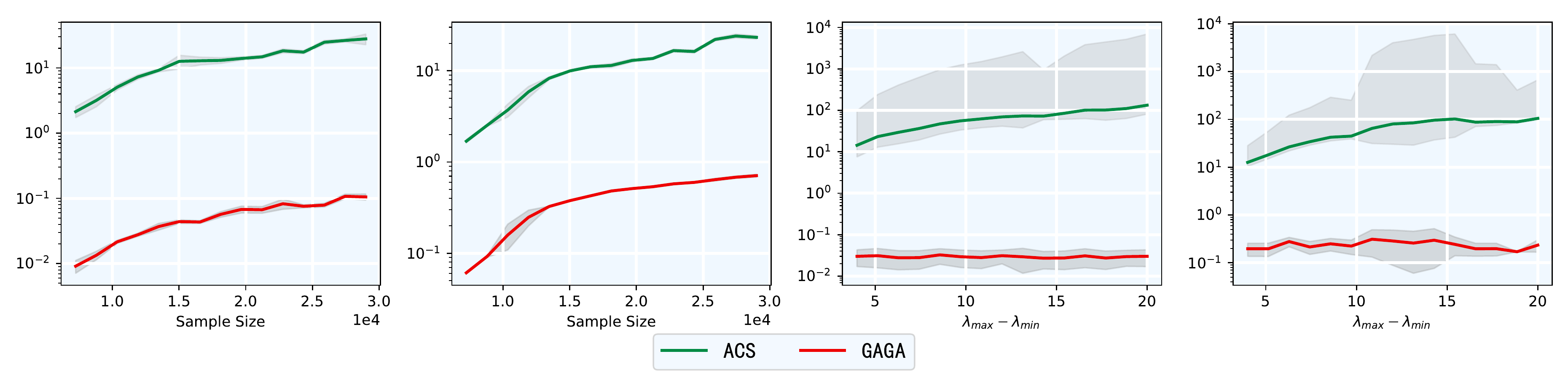}
	\end{adjustbox}
	\vspace{-1em}
	\caption{The study of efficiency comparison. $y$-axis denotes the average running time (in seconds) in 20 runs. The interval $[\lambda_{min},\lambda_{max}]$ refers to the predefined search space.}
	\label{fig:time_log}
	\vspace{-1em}
\end{figure}

\subsection{Path consistency}

In this section we illustrate that the age-path tracked by \method exactly consists with the path of real partial optimum, which is produced by the ACS algorithm. Due to the expensive computational cost of finding the partial optimum using ACS (over 30 loops on average), we choose the toy datasets from the Scikit-learn package to trace and plot the age-path. In detail, we use the Boston house and breast cancer datasets for regression tasks, and the classification is performed on the handwritten digits dataset. In order to track the exact path of partial optimum, we set the step size to be 1e-4 and 3e-1 in ACS and \method, respectively. The graphs of the tracked path are illustrated in Figure \ref{path_1}, Figure \ref{path_2} and Figure \ref{path_3}. The path of partial optimum is plotted in blue solid lines while the age-path traced by \method is marked with red dashed lines. 

This result empirically validates the path consistency between the computed age-path by \method and the ground truth age-path (\emph{i.e.} path of the partial optimum), which is stated in Theorem (\ref{theorem}).

\begin{figure}
\begin{tabular}{lll}
\toprule
      \includegraphics[width=0.33\linewidth]{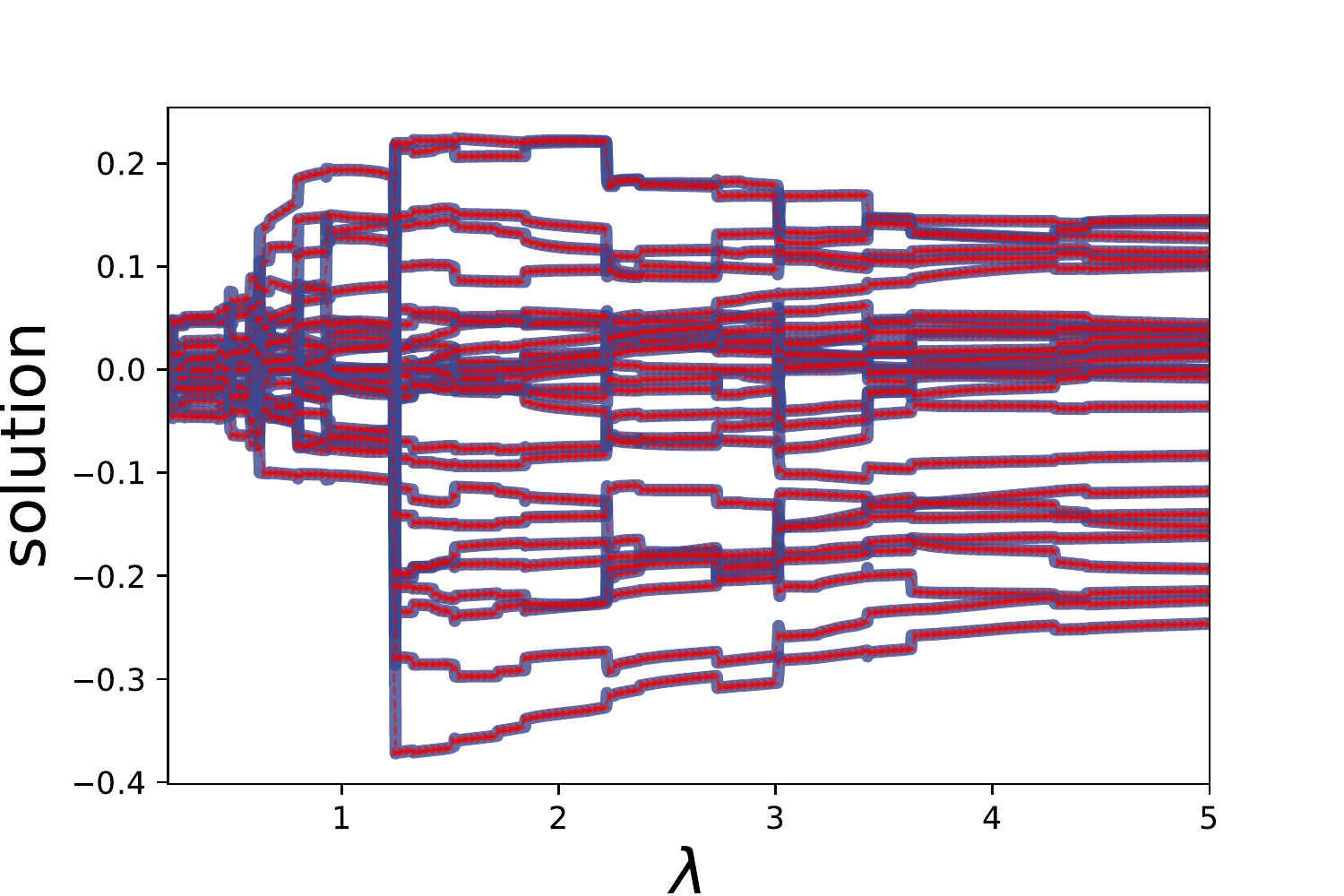} &      \includegraphics[width=0.33\linewidth]{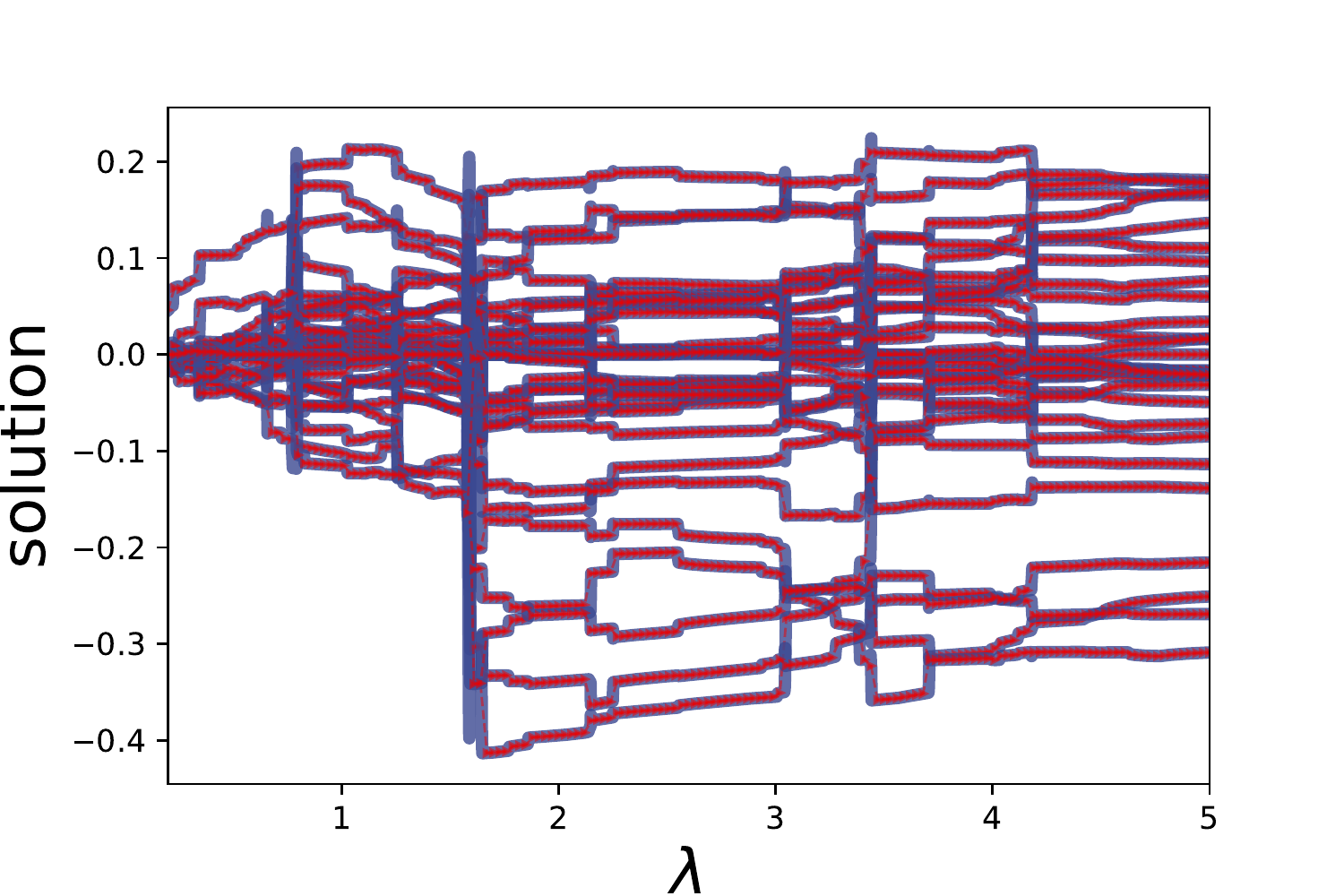} &        \includegraphics[width=0.33\linewidth]{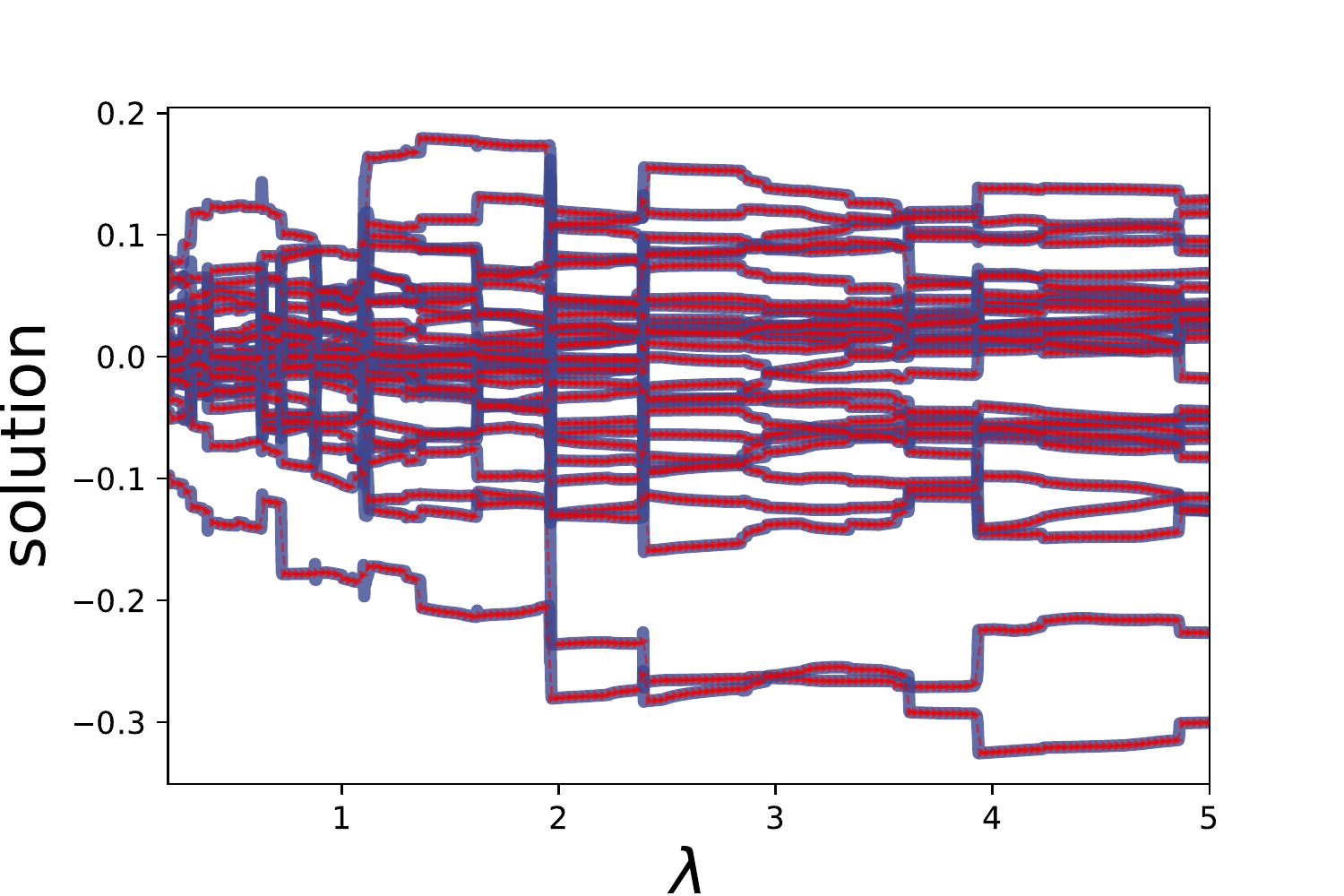} \\
       \includegraphics[width=0.33\linewidth]{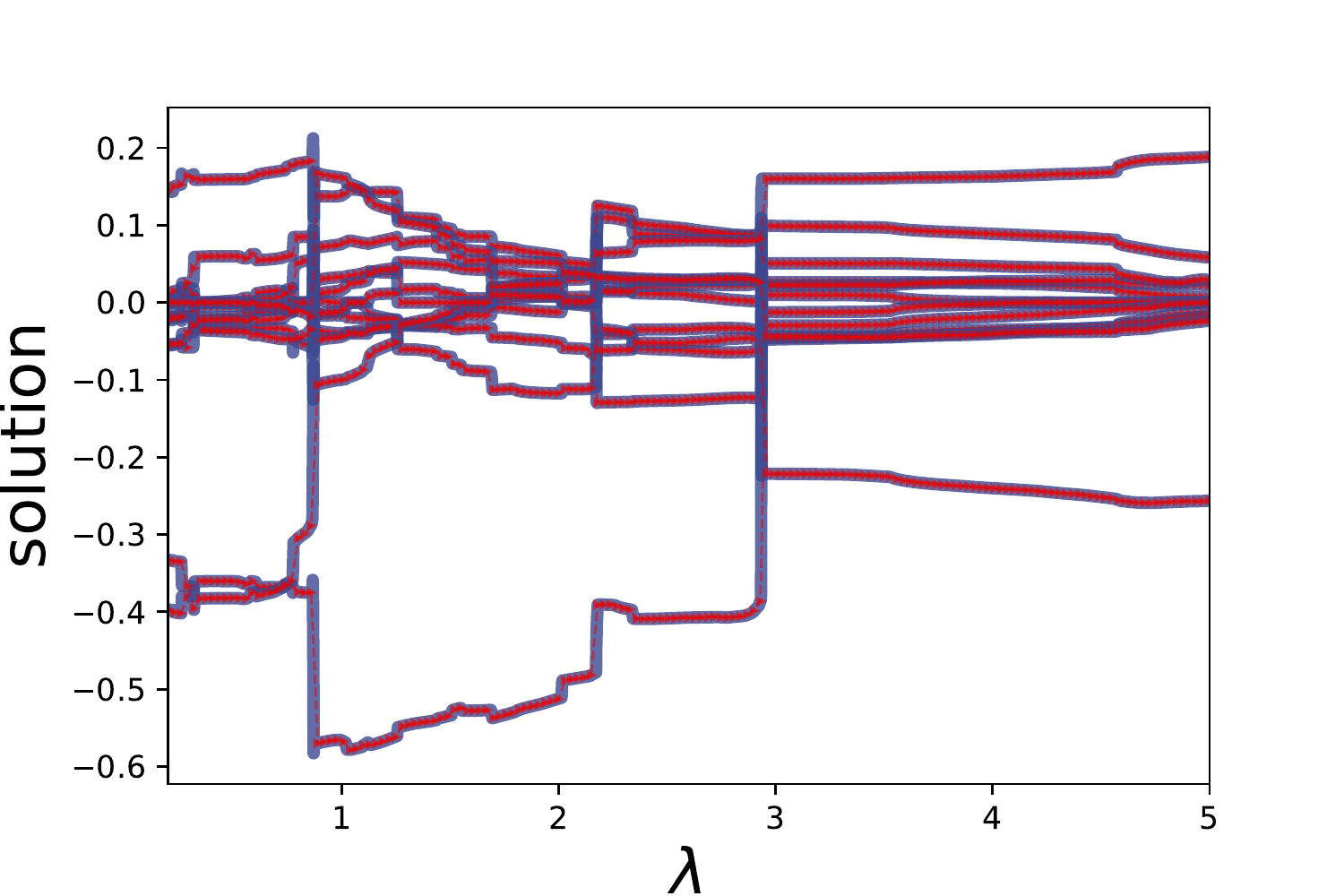} &         \includegraphics[width=0.33\linewidth]{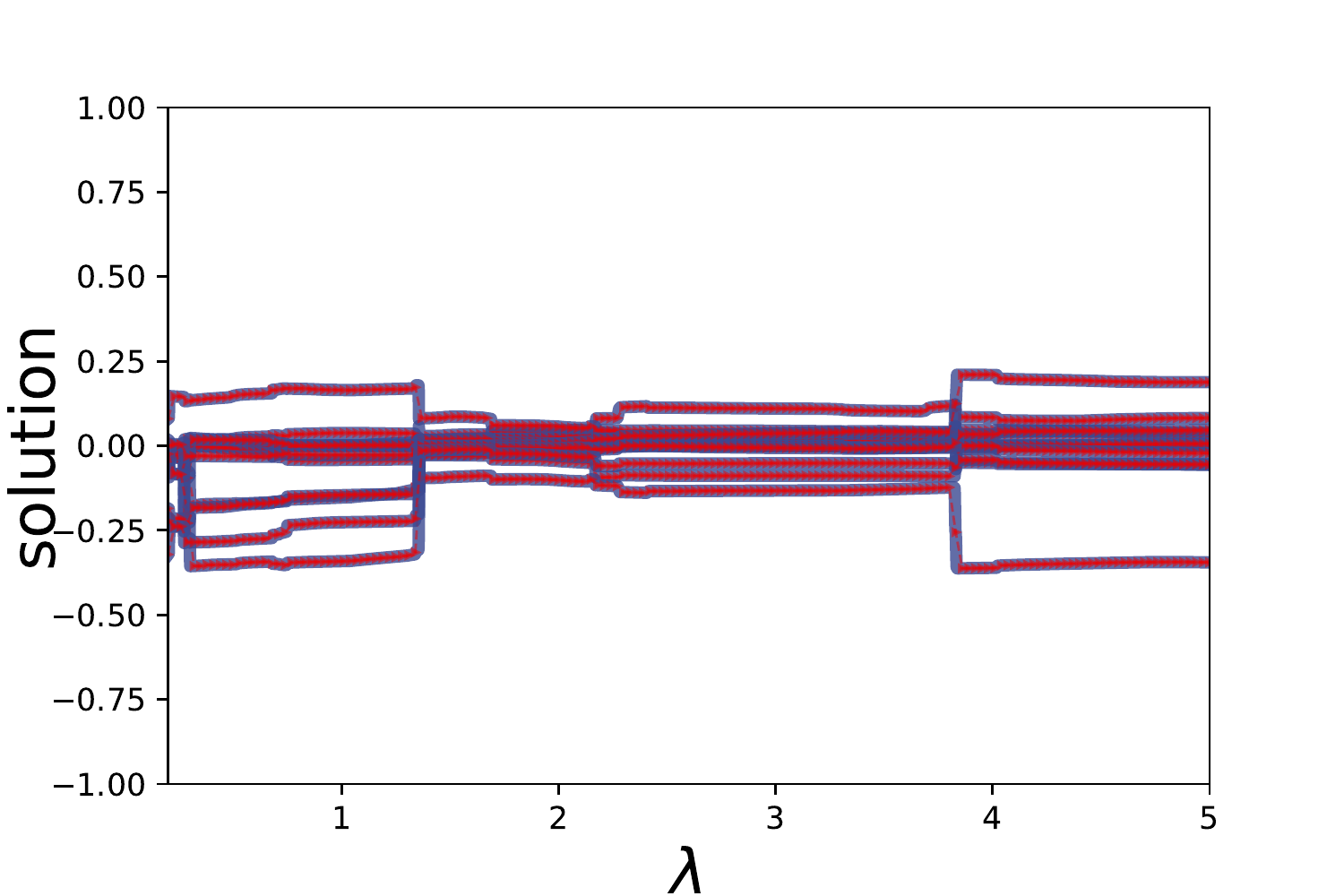} &          \includegraphics[width=0.33\linewidth]{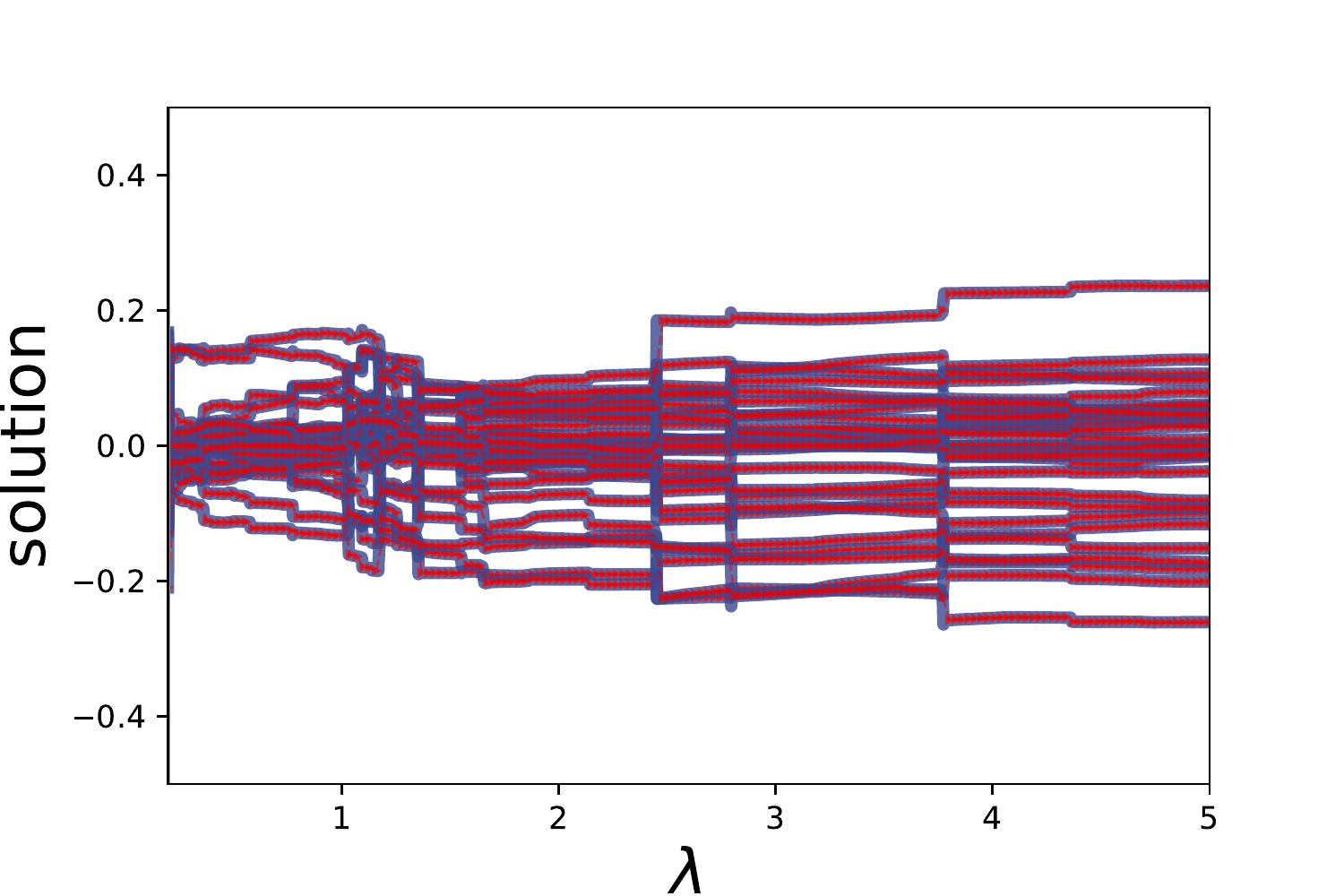} \\
       \includegraphics[width=0.33\linewidth]{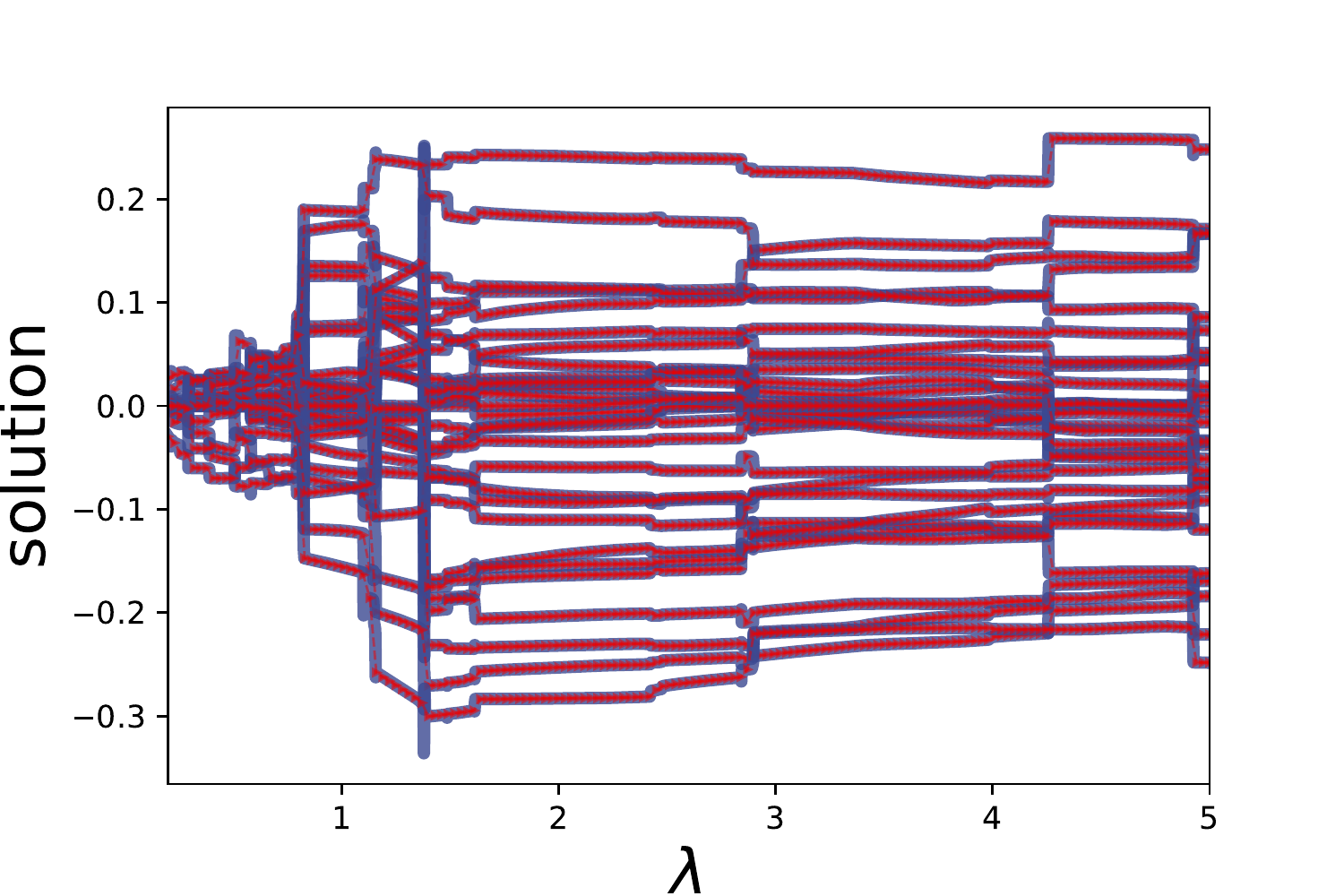} &      \includegraphics[width=0.33\linewidth]{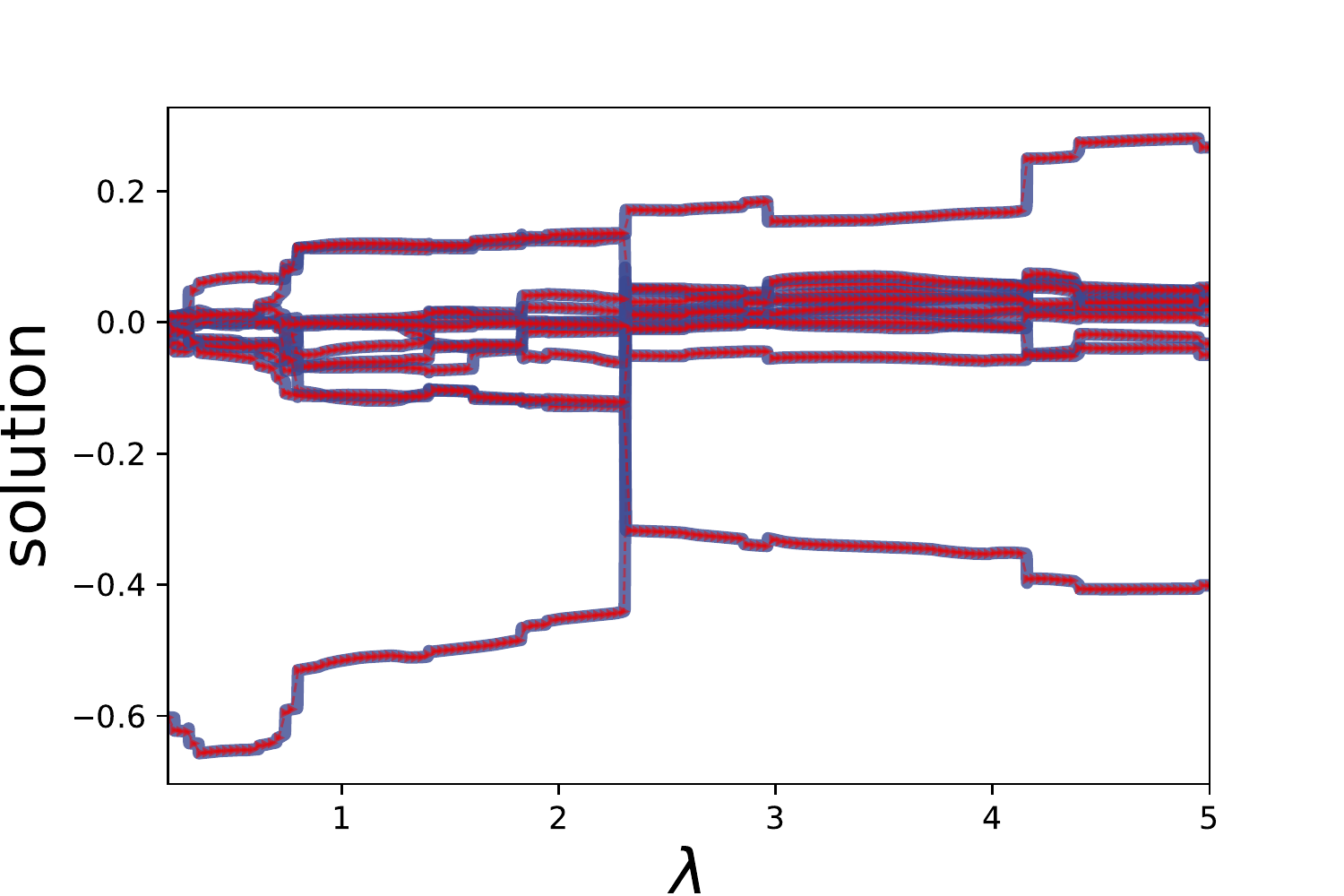} &        \includegraphics[width=0.33\linewidth]{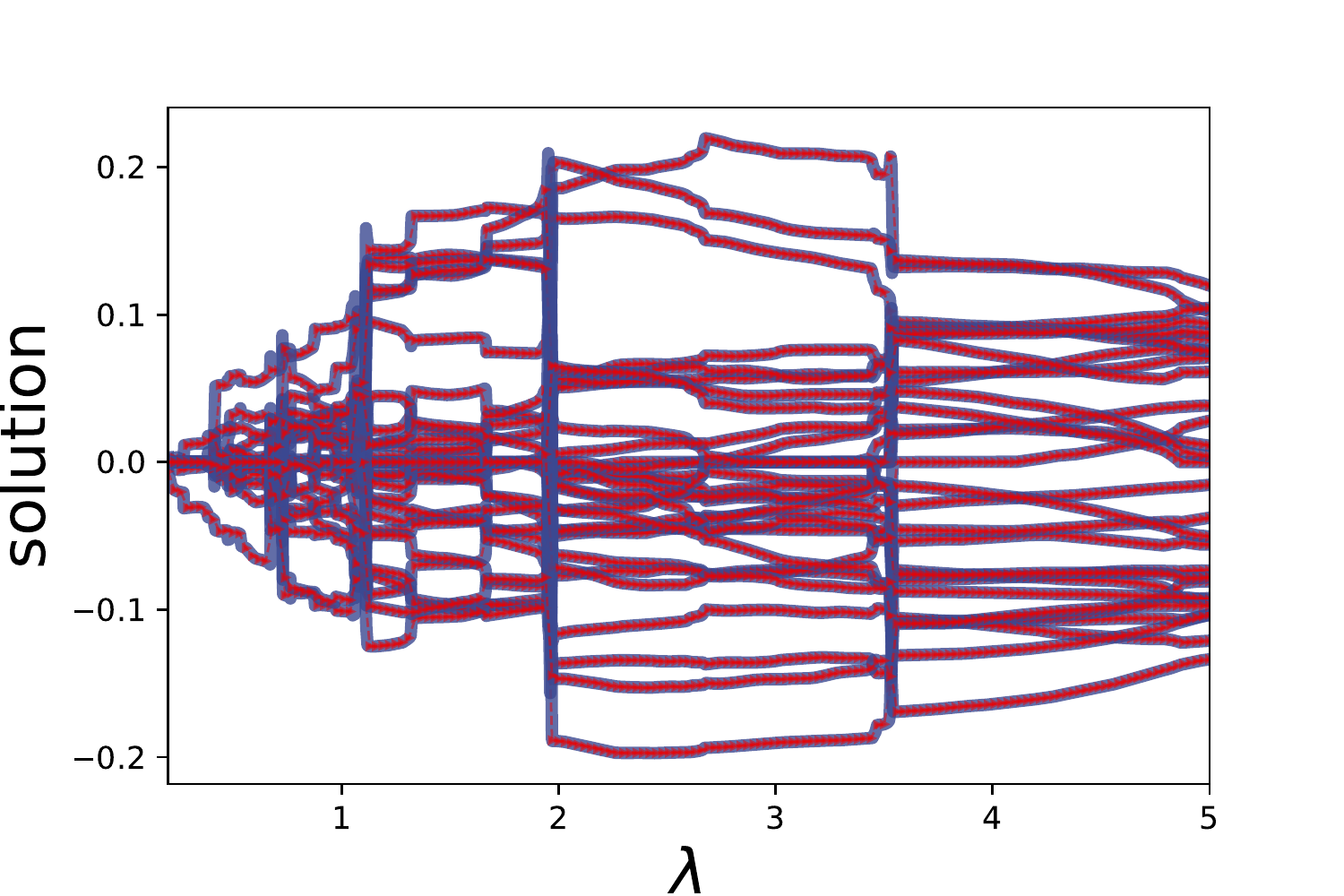} \\
       \includegraphics[width=0.33\linewidth]{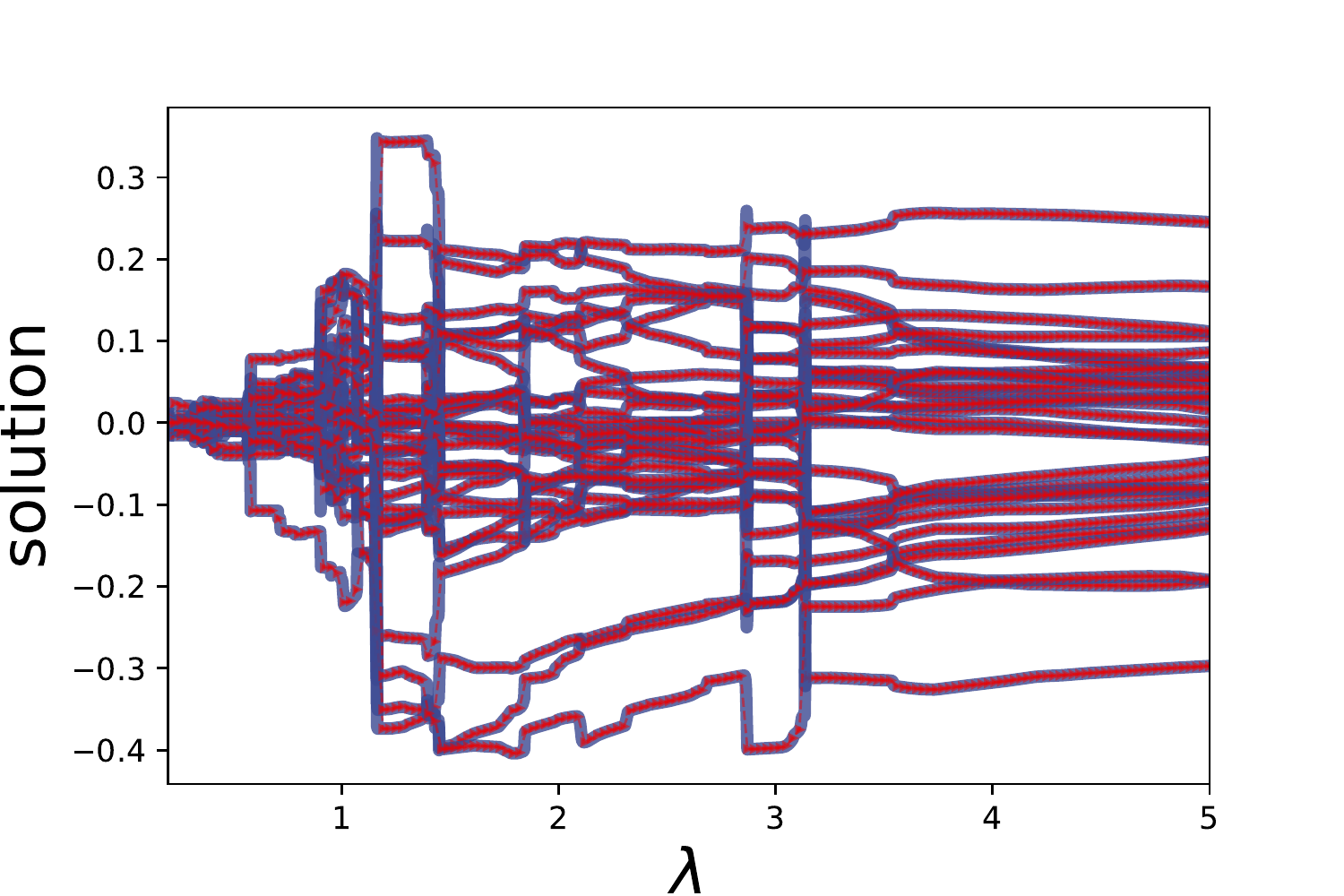} &         \includegraphics[width=0.33\linewidth]{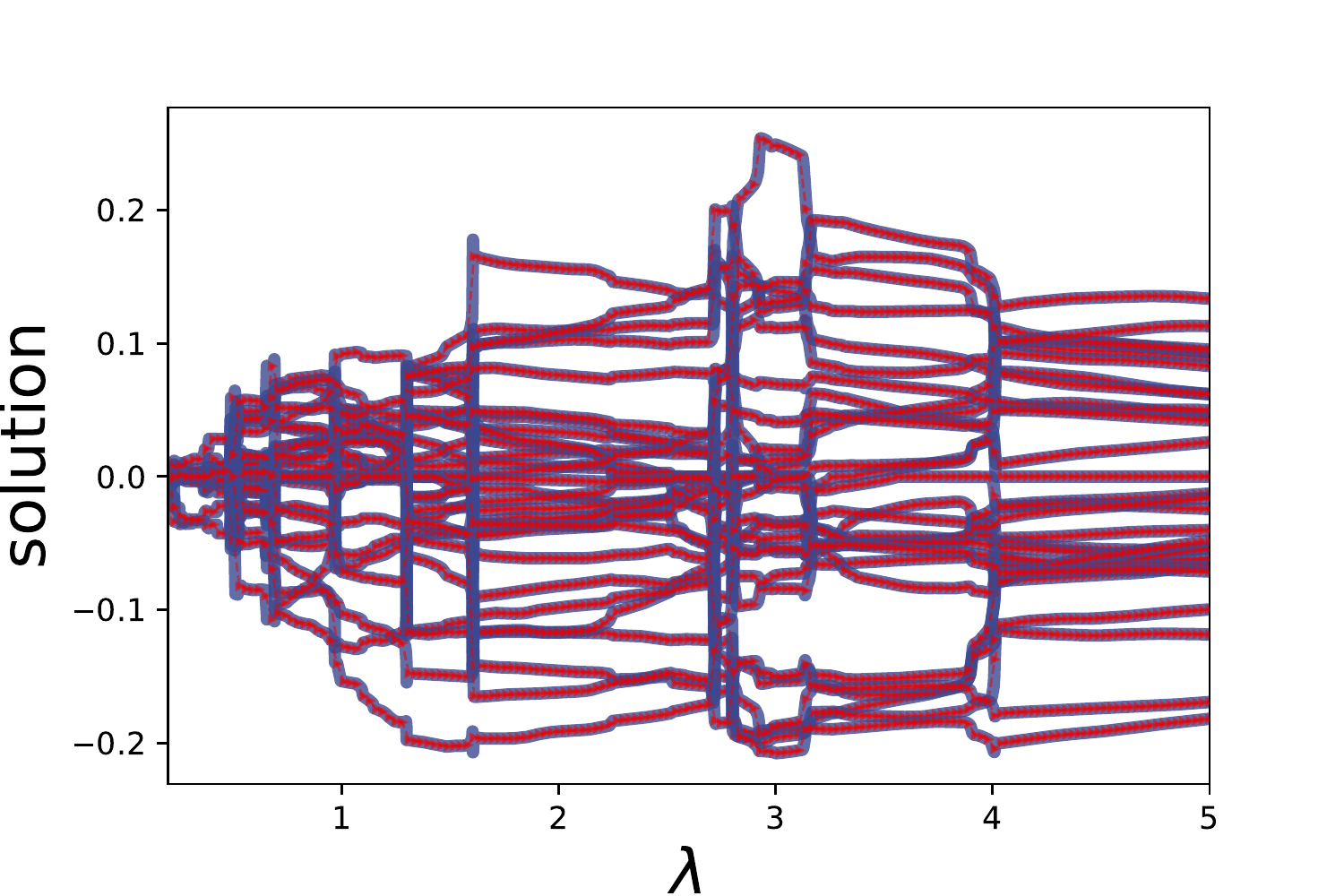} &          \includegraphics[width=0.33\linewidth]{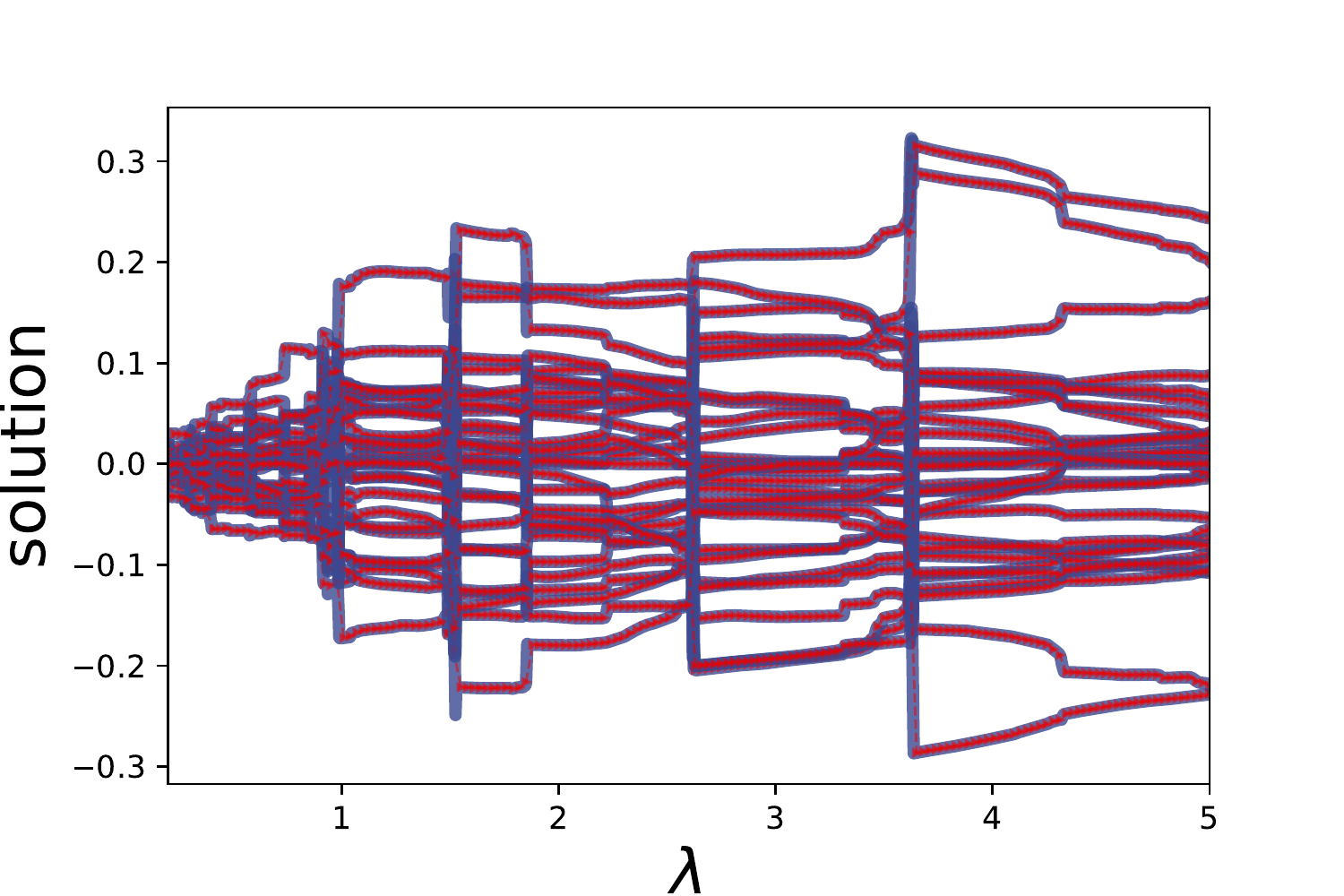} \\
\bottomrule
\end{tabular}
\caption{Age-path of SVM with different parameters and datasets. The first two rows of subfigures illustrate age-path using the linear SP-regularizer while the last two rows of subfigures show age-path using the mixture SP-regularizer. For experiments in the first and third row, the $\alpha=0.02$. For experiments in the second and fourth row, the $\alpha=0.04$.  }
\label{path_1}
\end{figure}
\begin{figure}
\begin{tabular}{lll}
\toprule
      \includegraphics[width=0.33\linewidth]{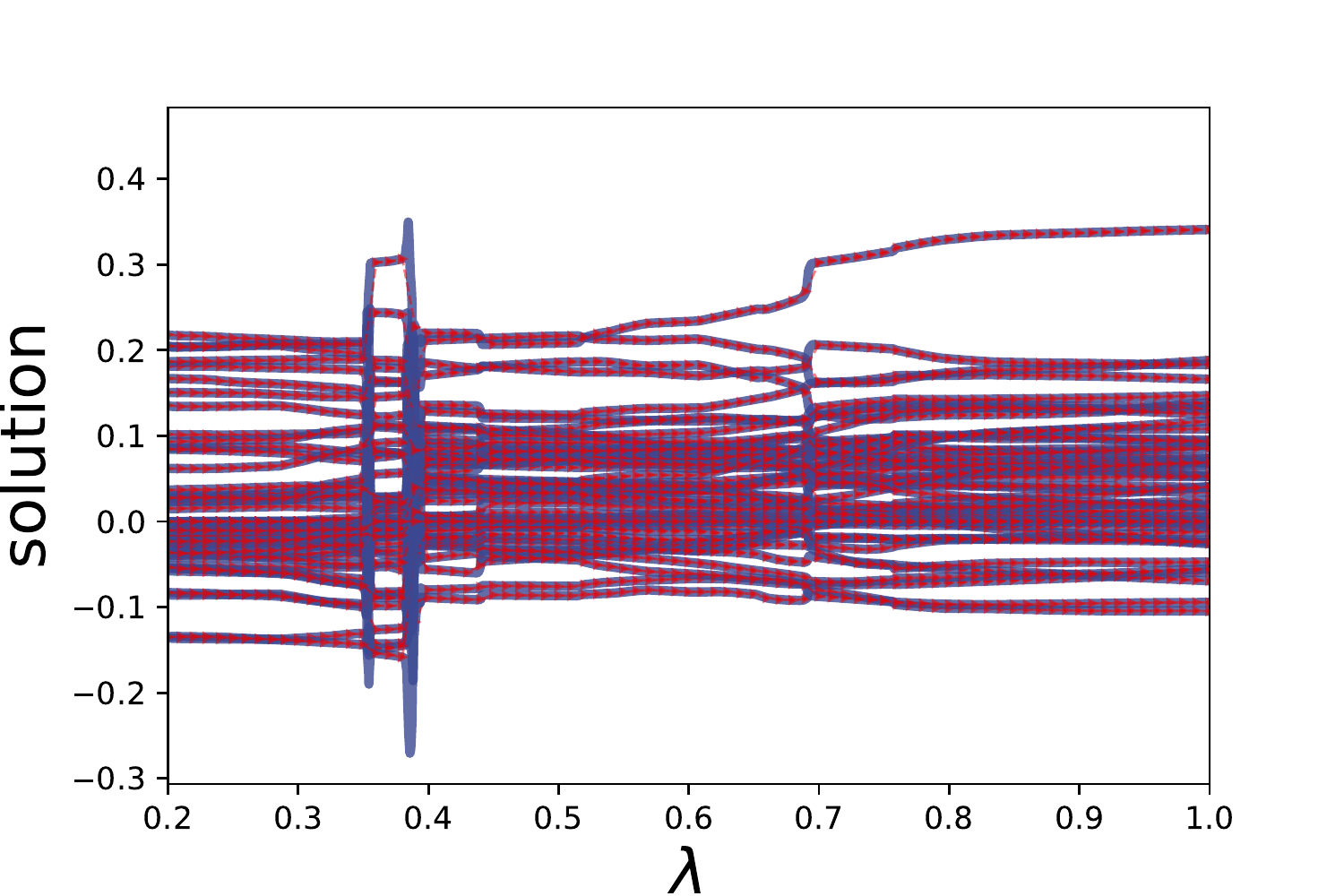} &      \includegraphics[width=0.33\linewidth]{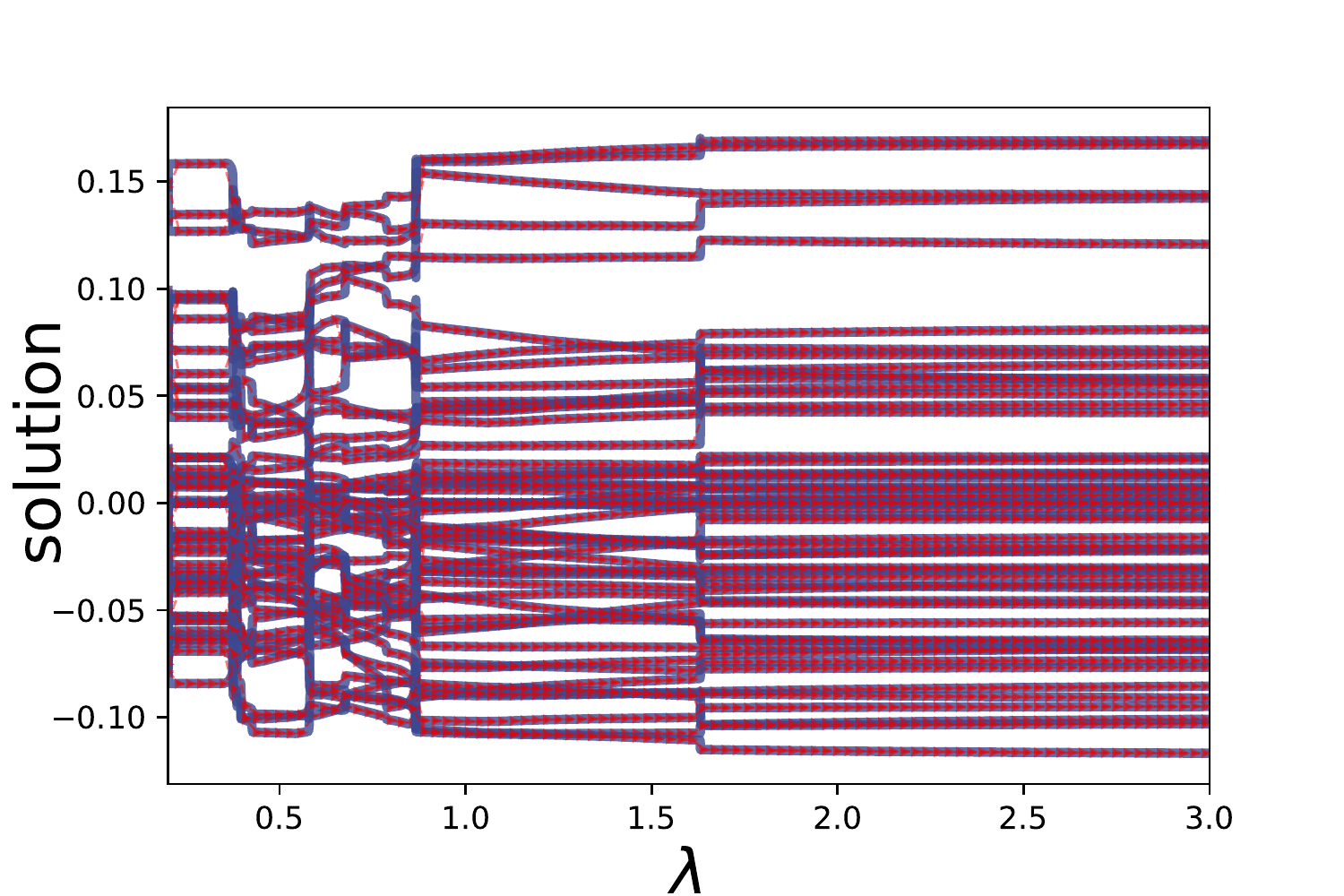} &        \includegraphics[width=0.33\linewidth]{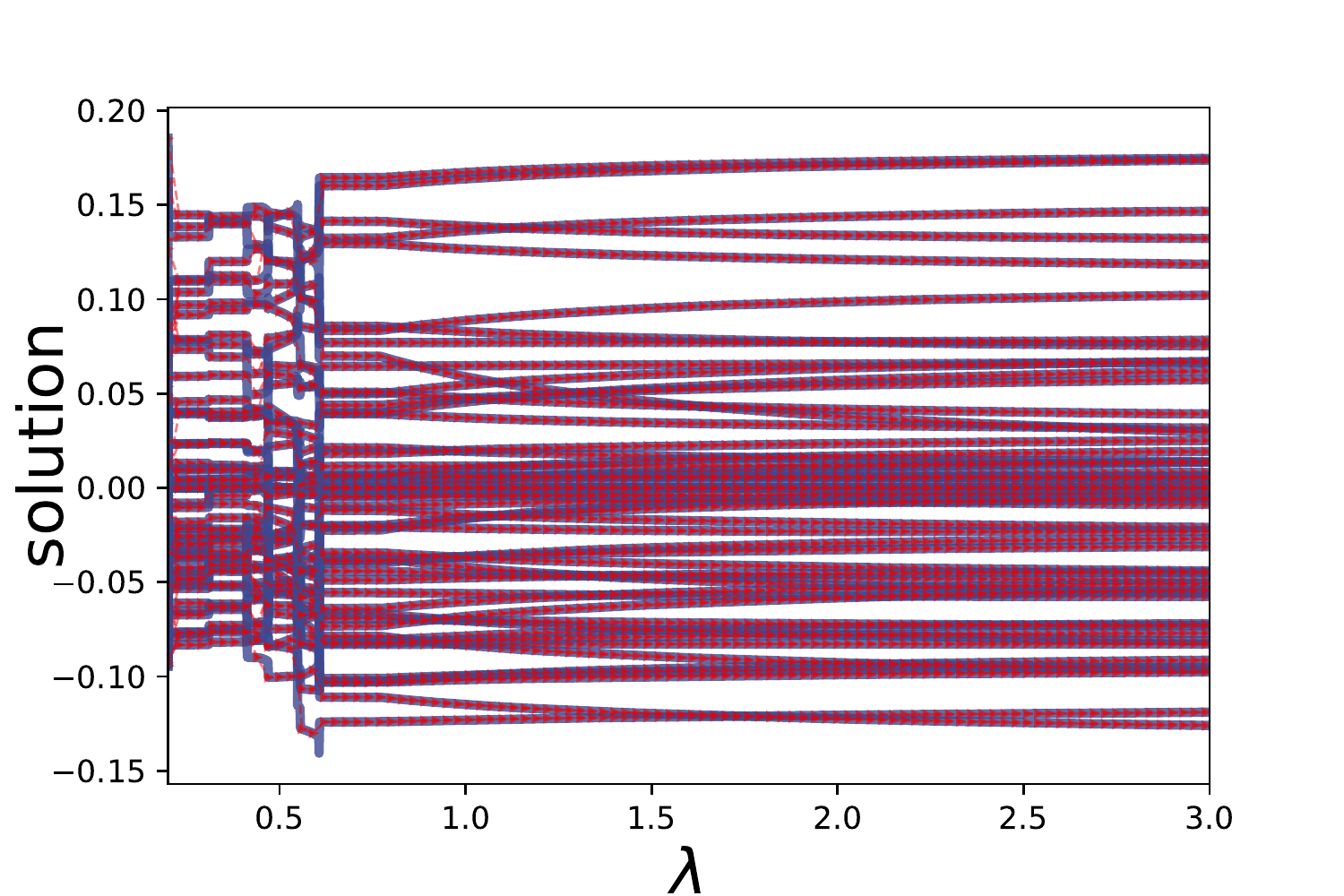} \\
       \includegraphics[width=0.33\linewidth]{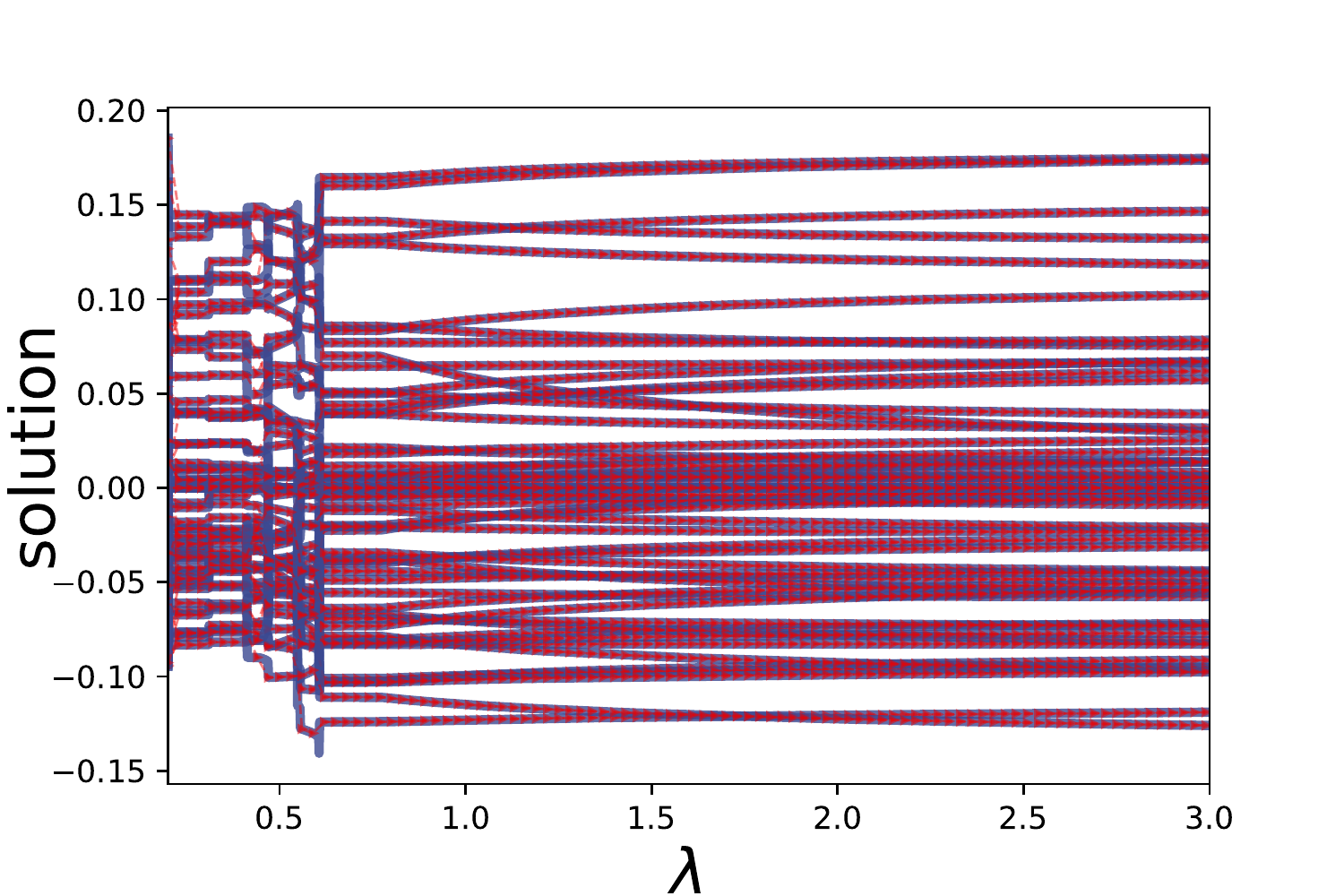} &         \includegraphics[width=0.33\linewidth]{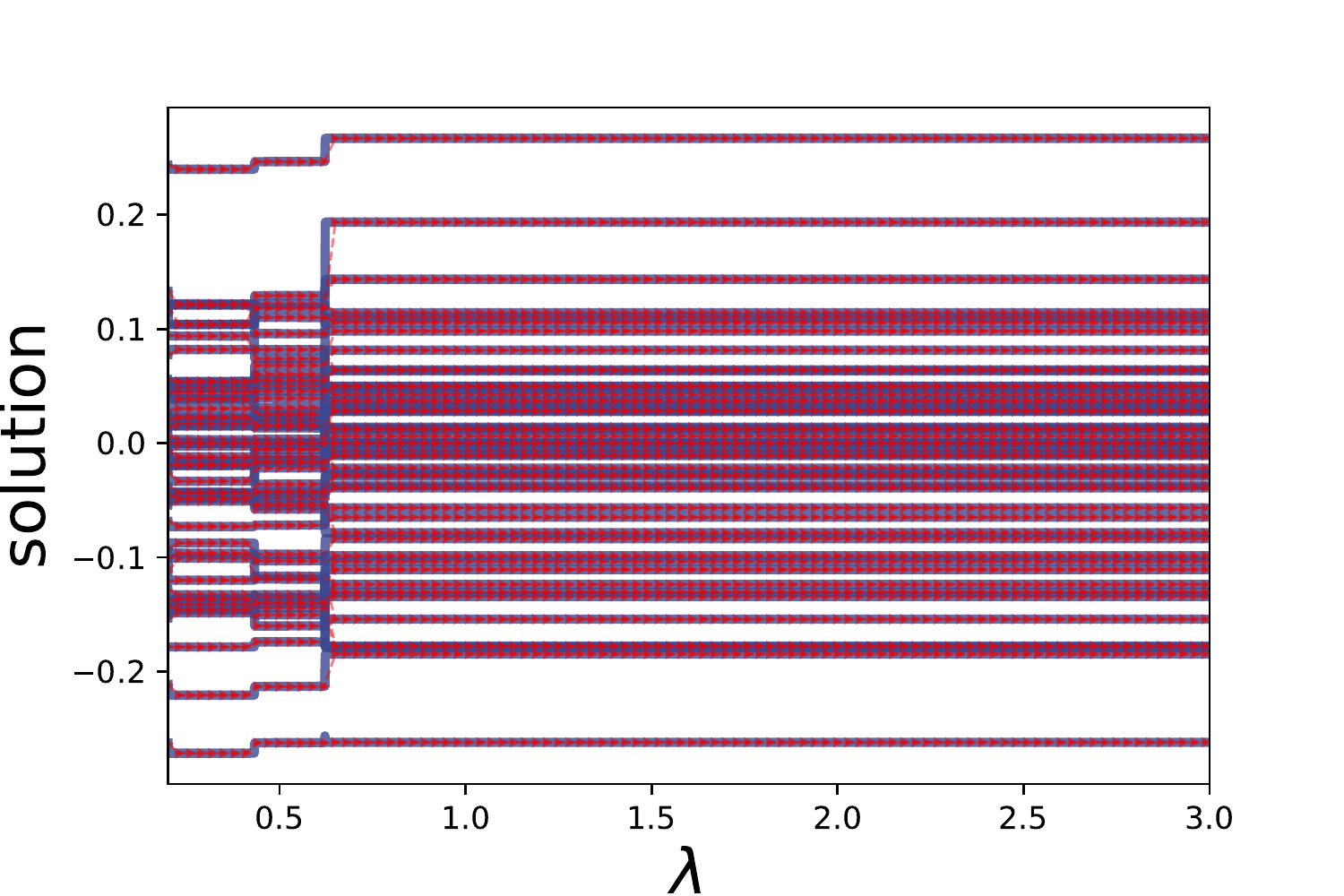} &          \includegraphics[width=0.33\linewidth]{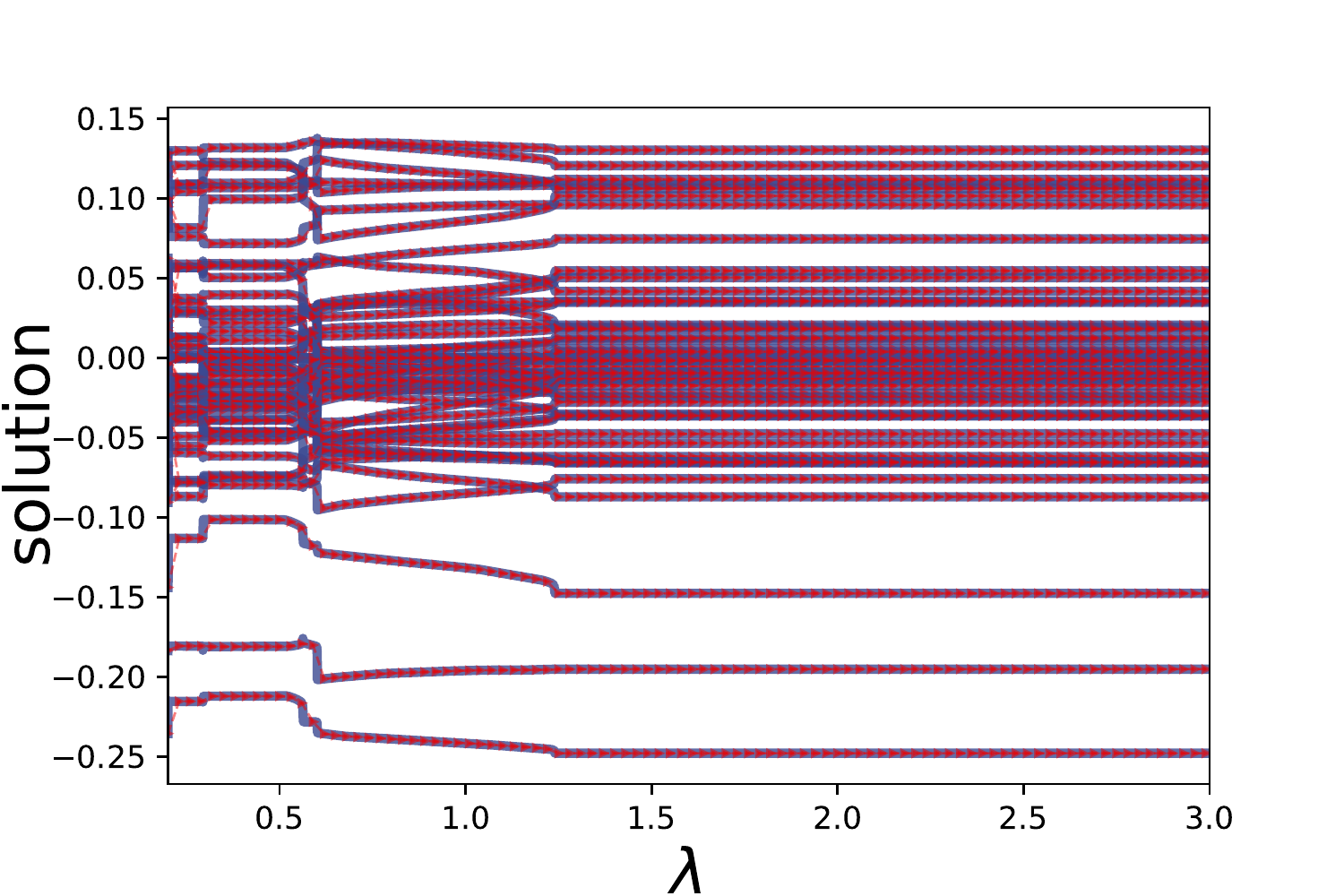} \\
       \includegraphics[width=0.33\linewidth]{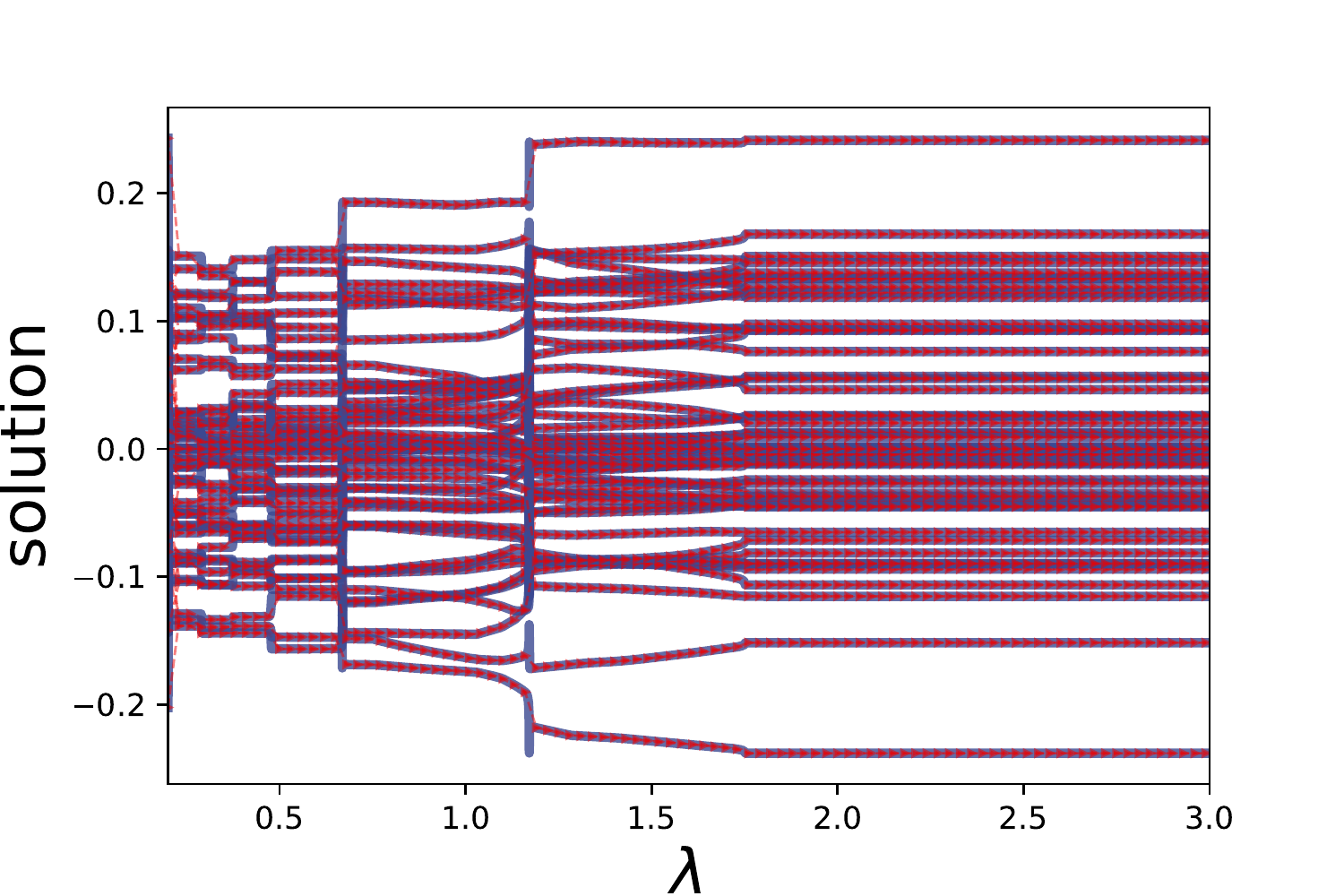} &      \includegraphics[width=0.33\linewidth]{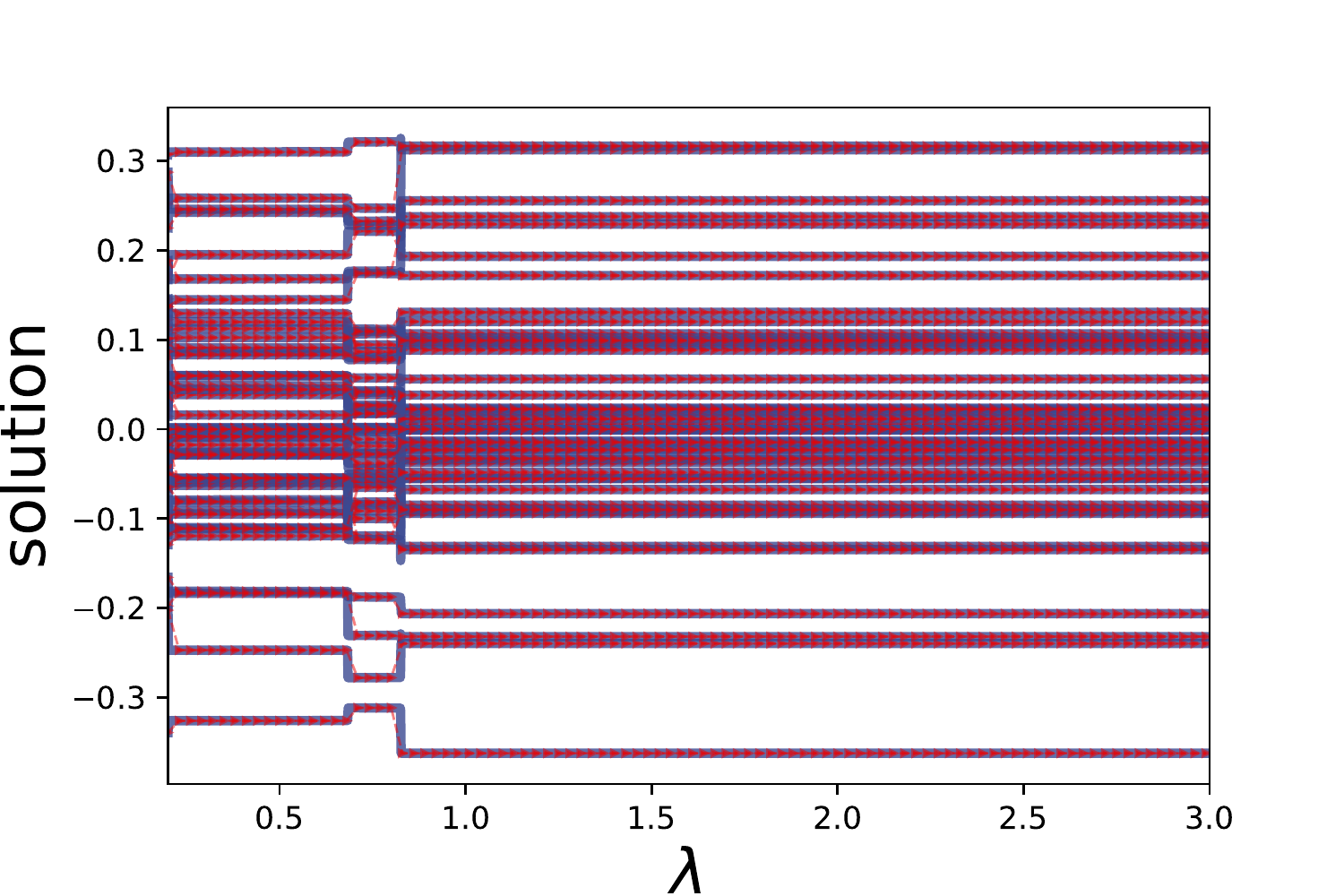} &        \includegraphics[width=0.33\linewidth]{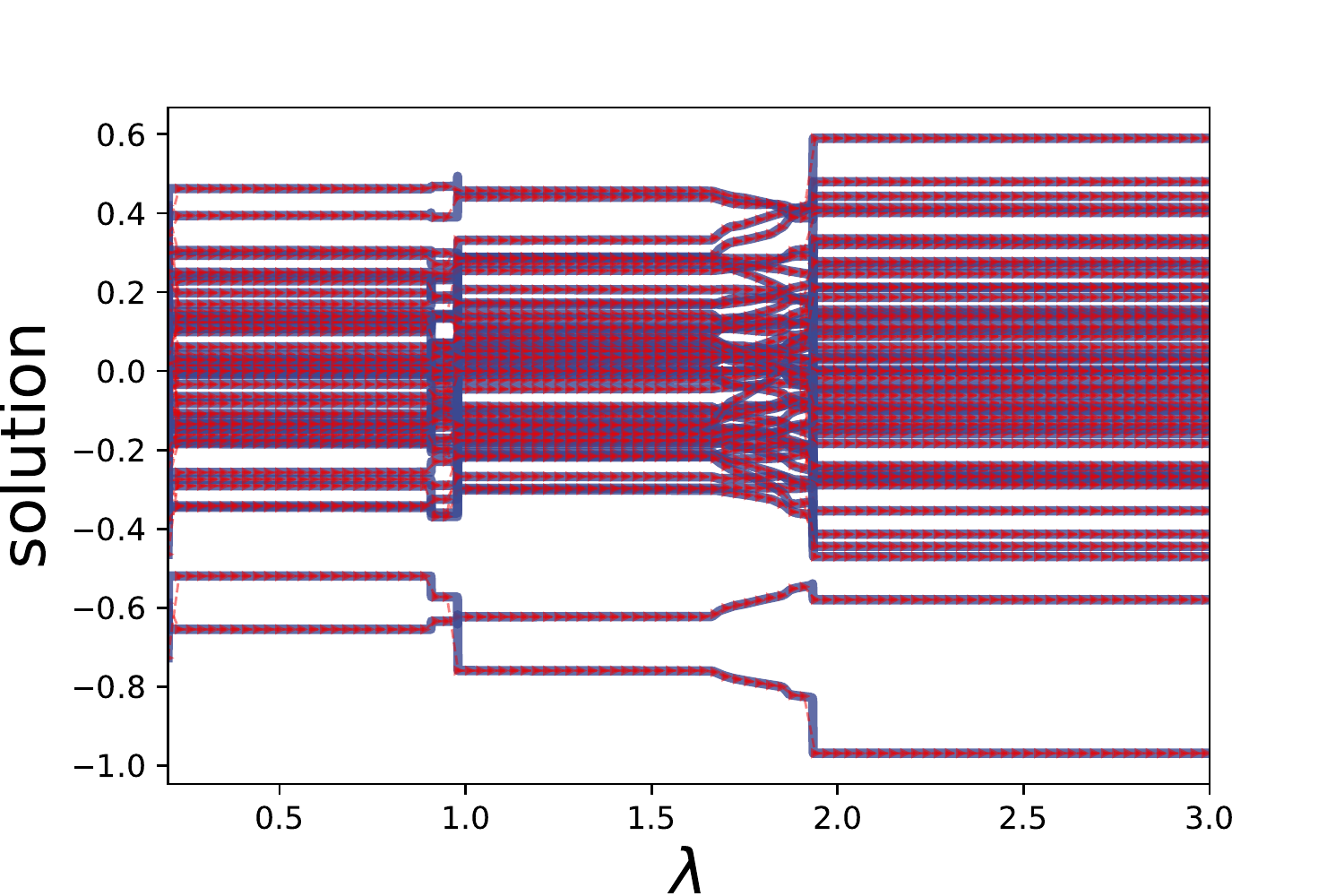} \\
       \includegraphics[width=0.33\linewidth]{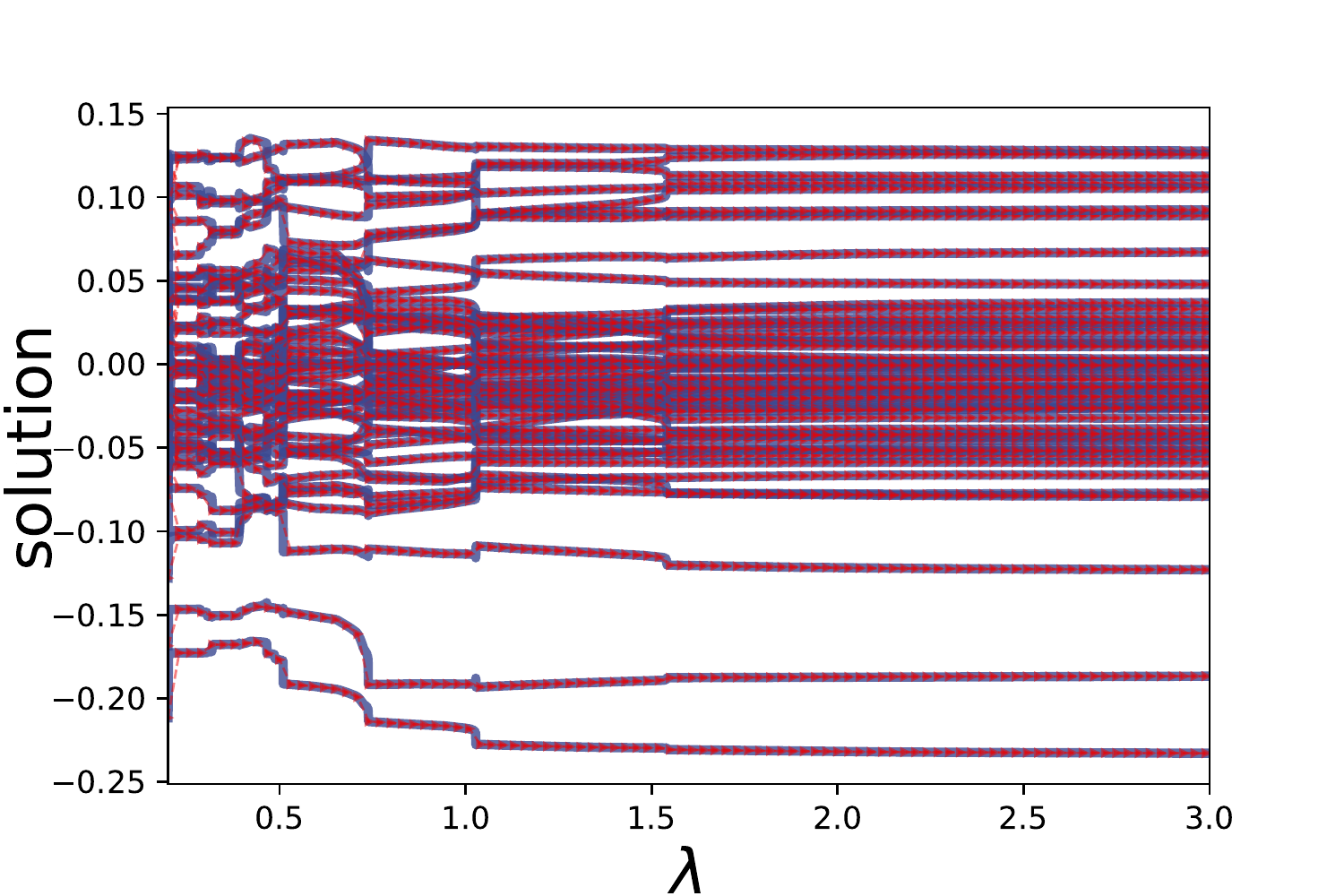} &         \includegraphics[width=0.33\linewidth]{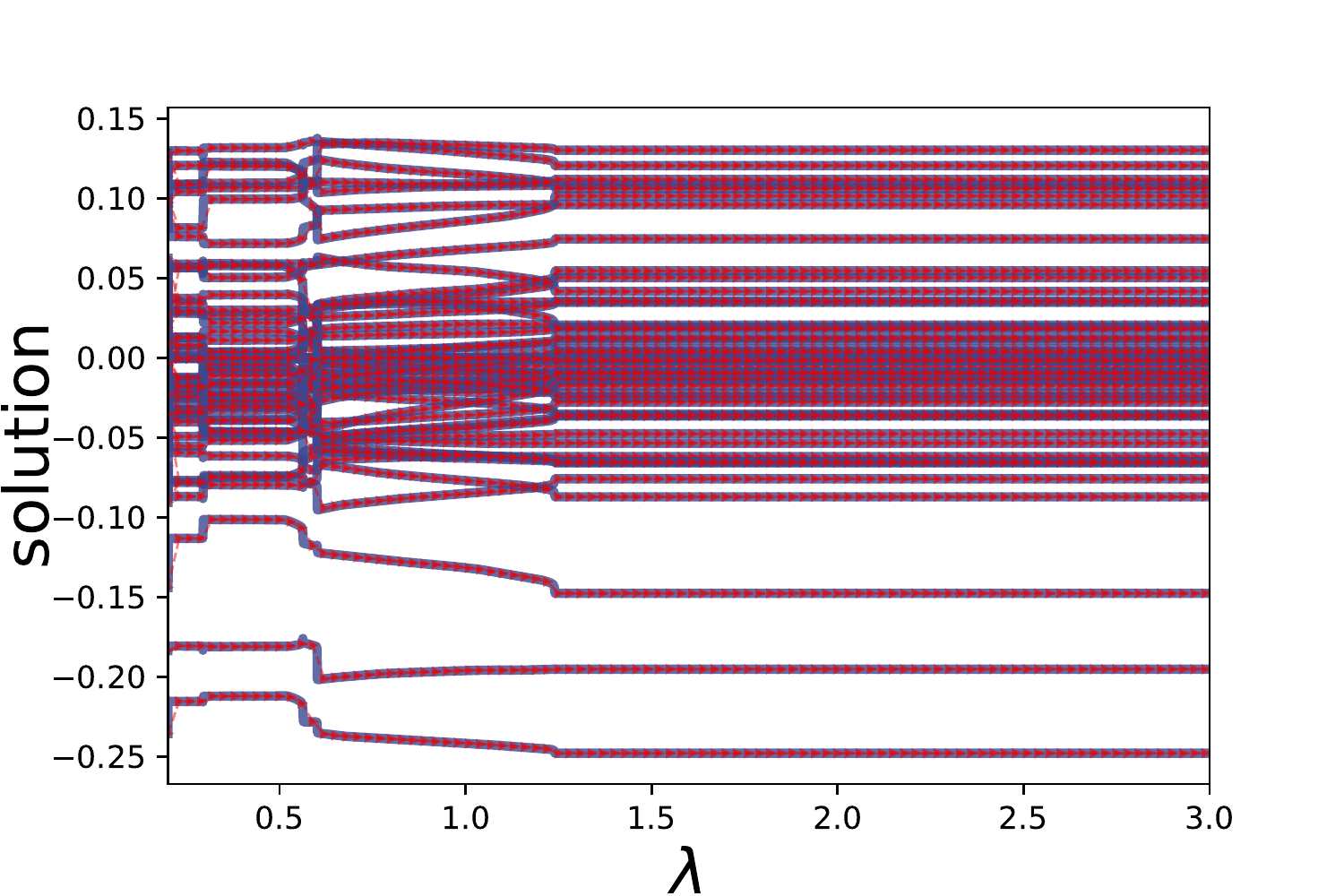} &          \includegraphics[width=0.33\linewidth]{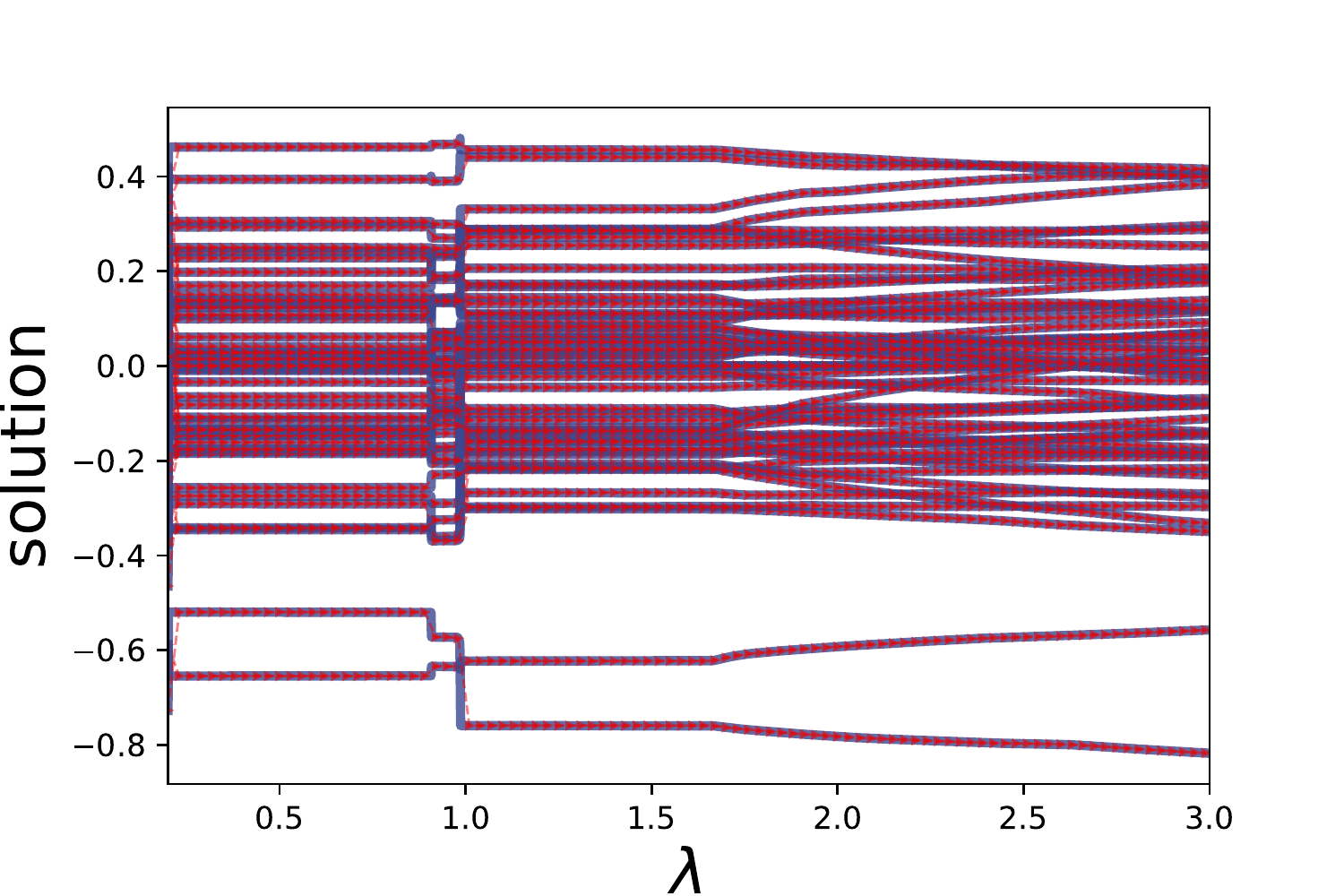} \\
\bottomrule
\end{tabular}
\caption{Age-path of Lasso with different parameters and datasets. The first two rows of subfigures illustrate age-path using the linear SP-regularizer while the last two rows of subfigures show age-path using the mixture SP-regularizer. For experiments in the first and third row, the $C=0.02$. For experiments in the second and fourth row, the $C=0.04$.}
\label{path_2}
\end{figure}
\begin{figure}
\begin{tabular}{lll}
\toprule
      \includegraphics[width=0.33\linewidth]{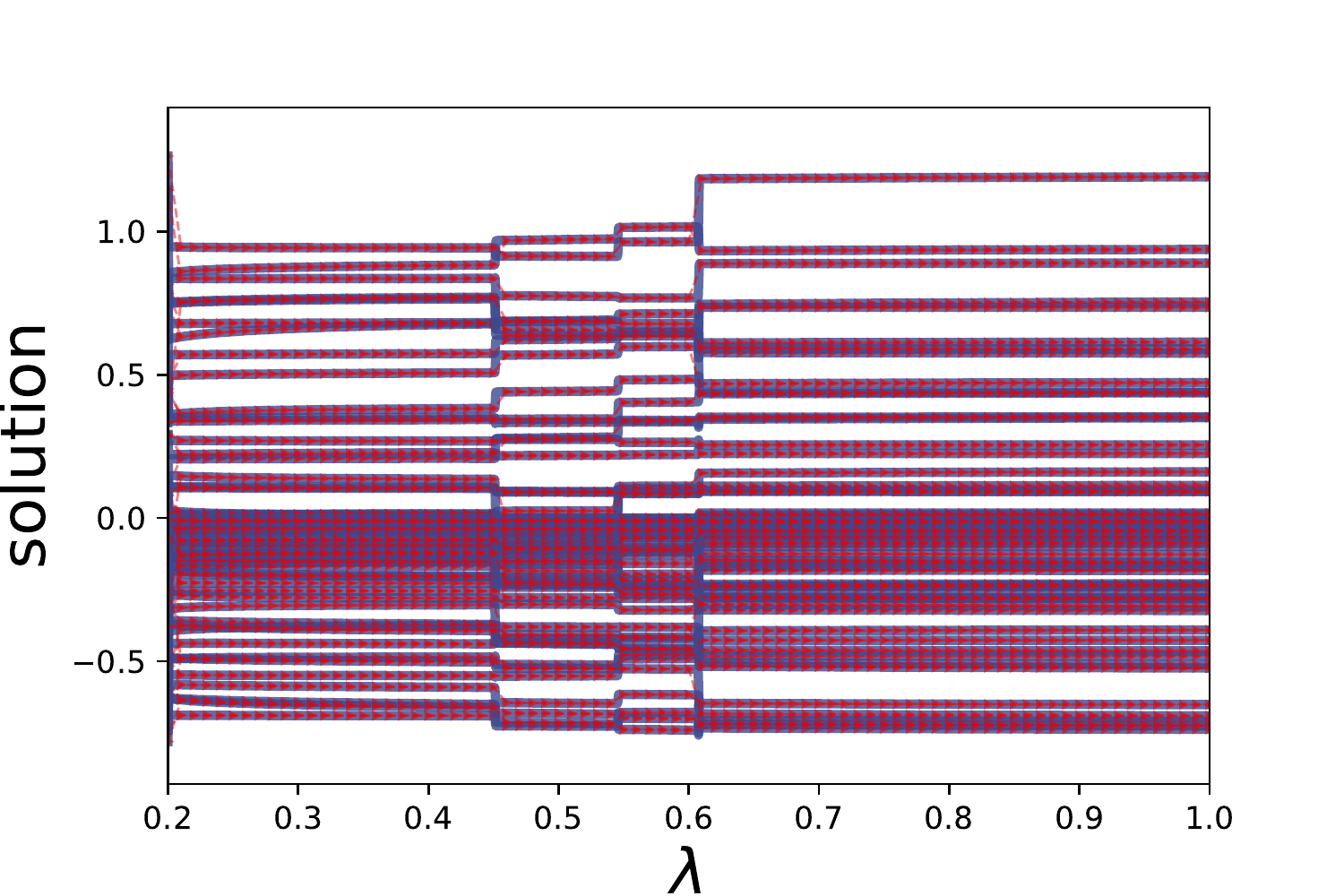} &      \includegraphics[width=0.33\linewidth]{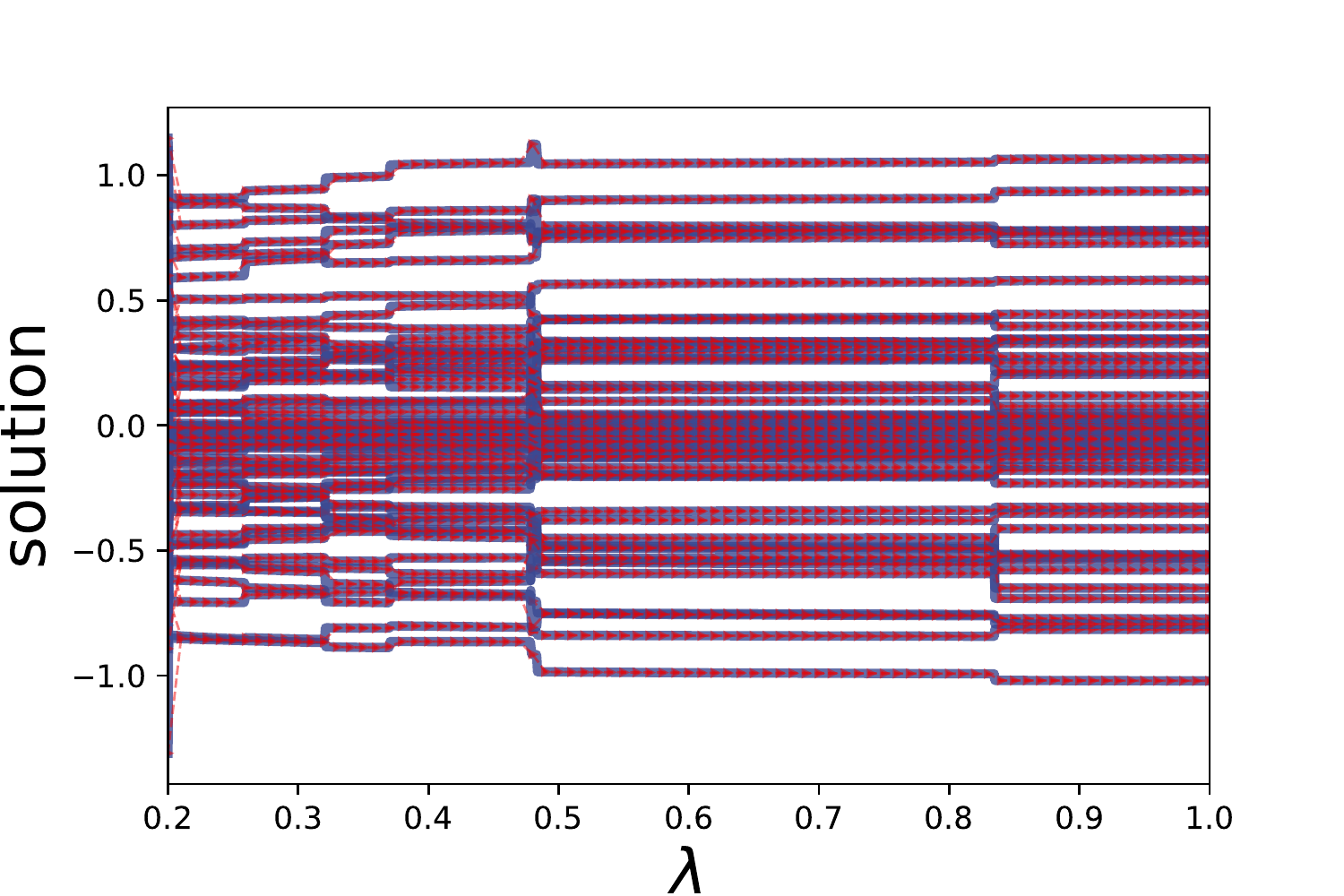} &        \includegraphics[width=0.33\linewidth]{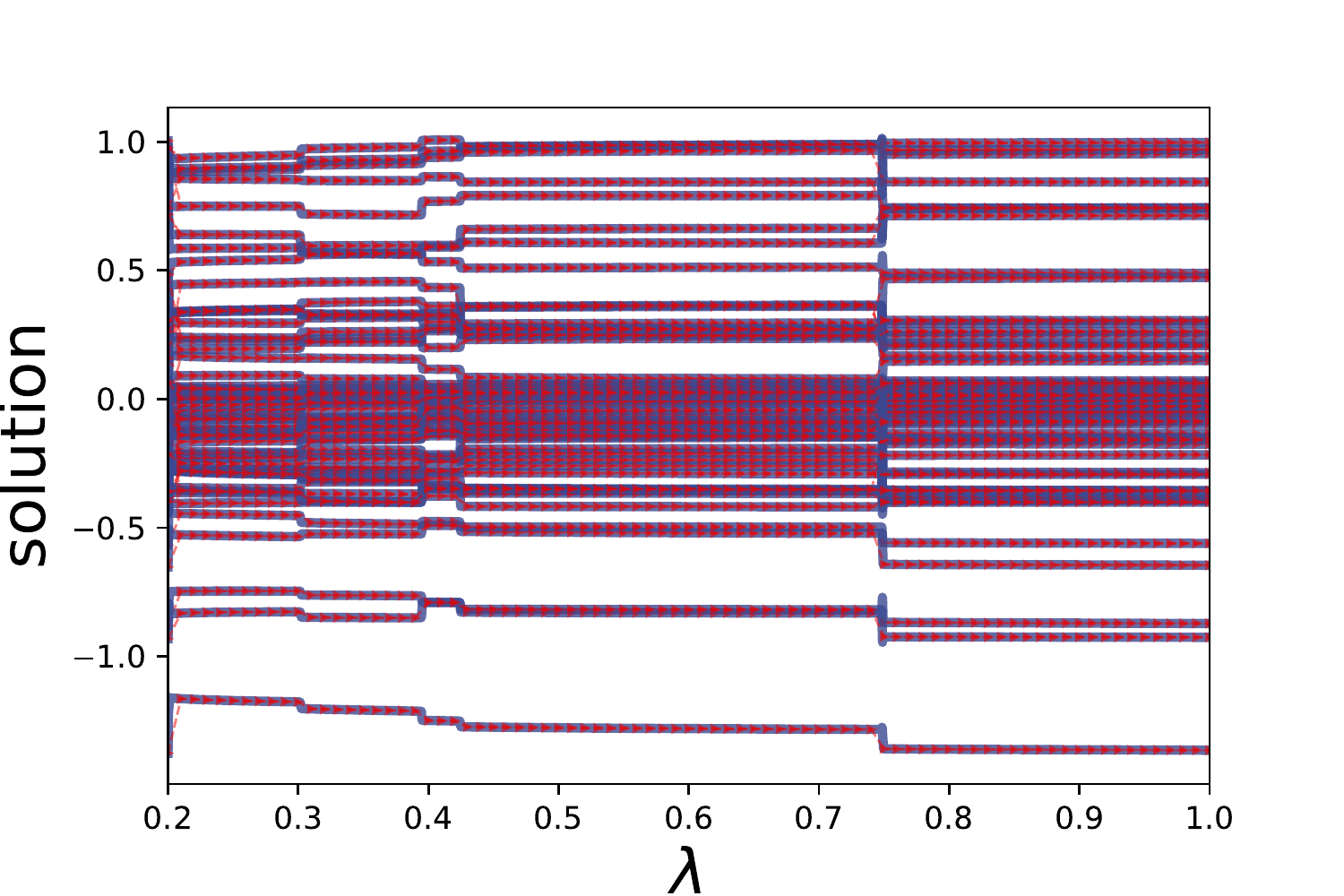} \\
       \includegraphics[width=0.33\linewidth]{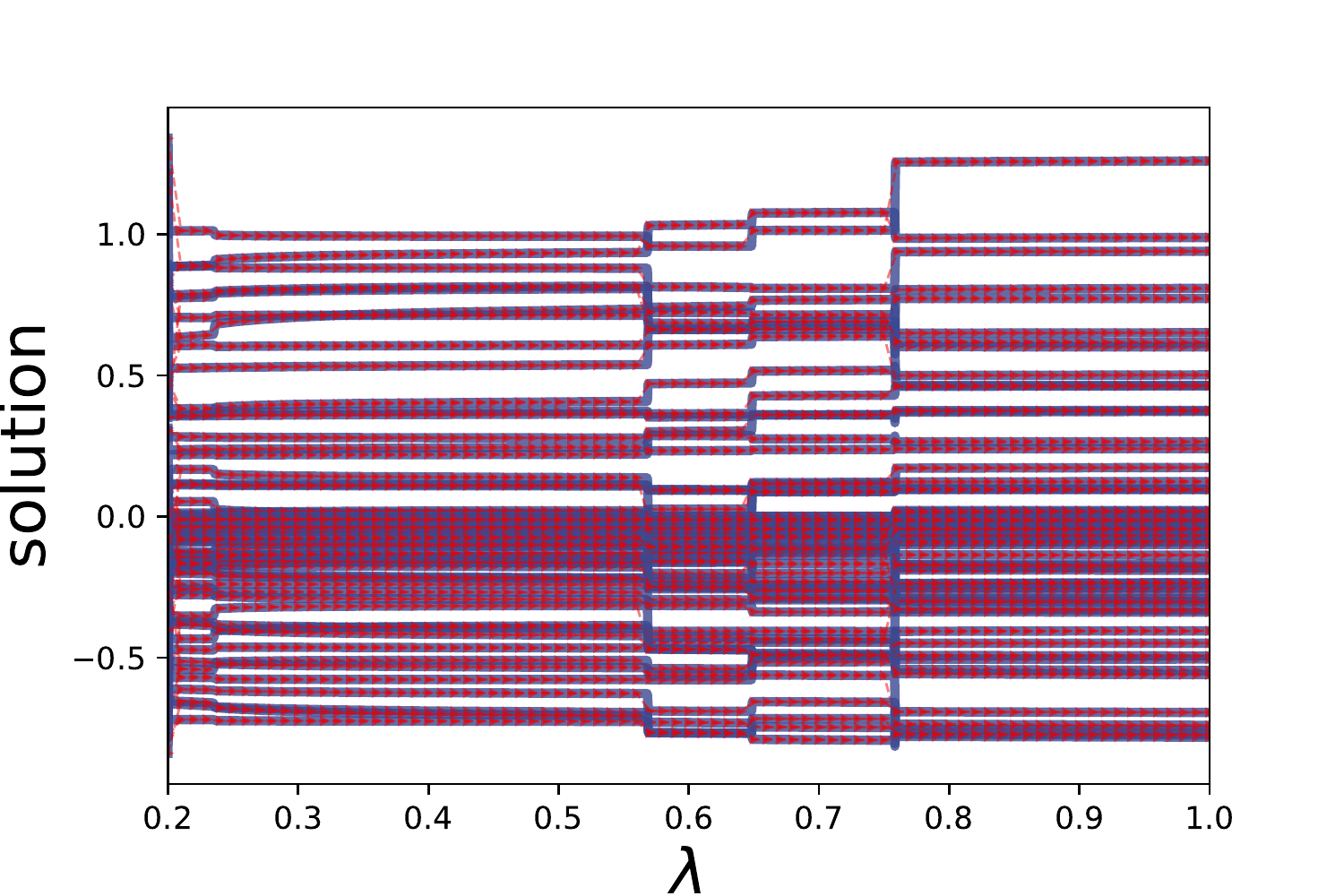} &         \includegraphics[width=0.33\linewidth]{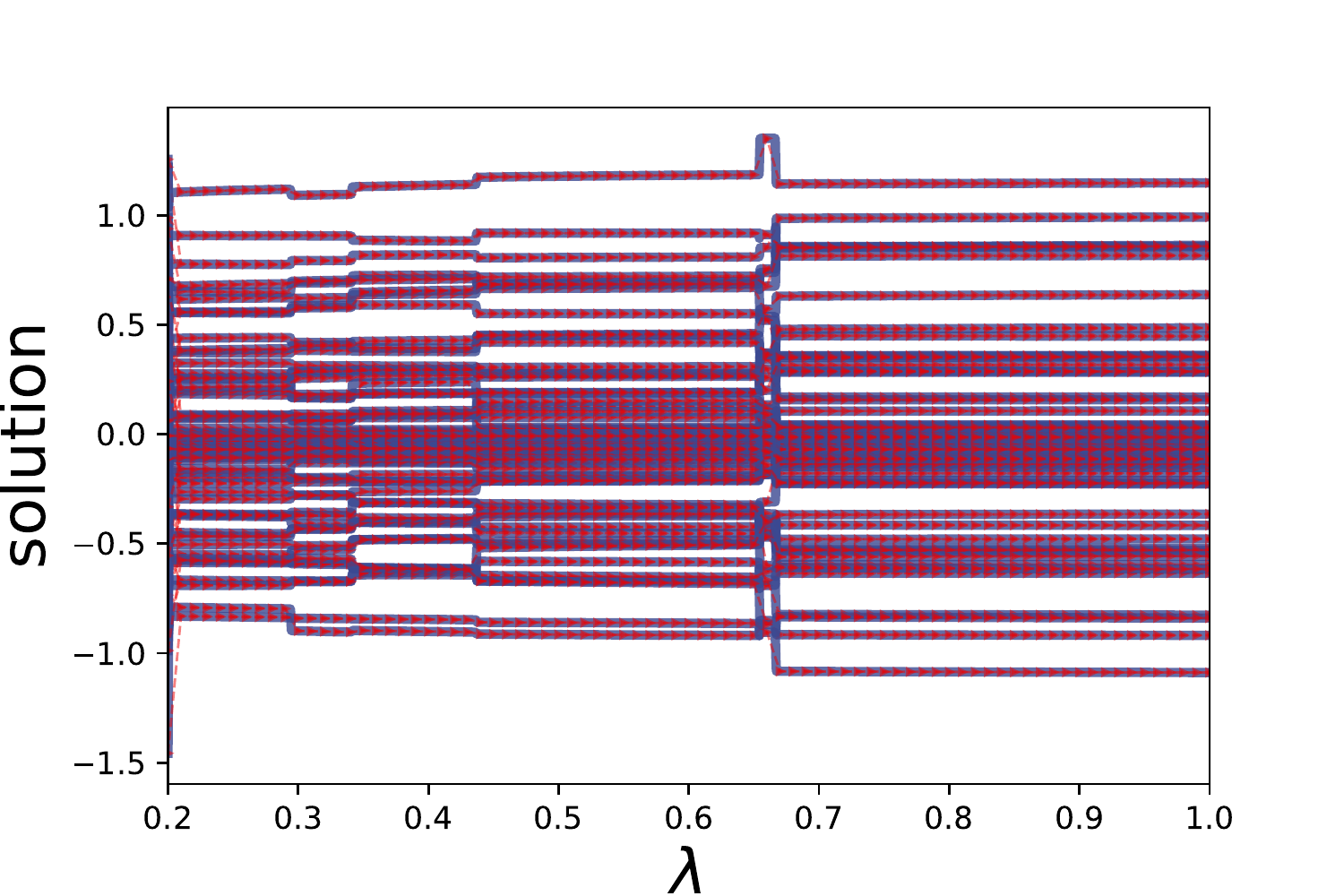} &          \includegraphics[width=0.33\linewidth]{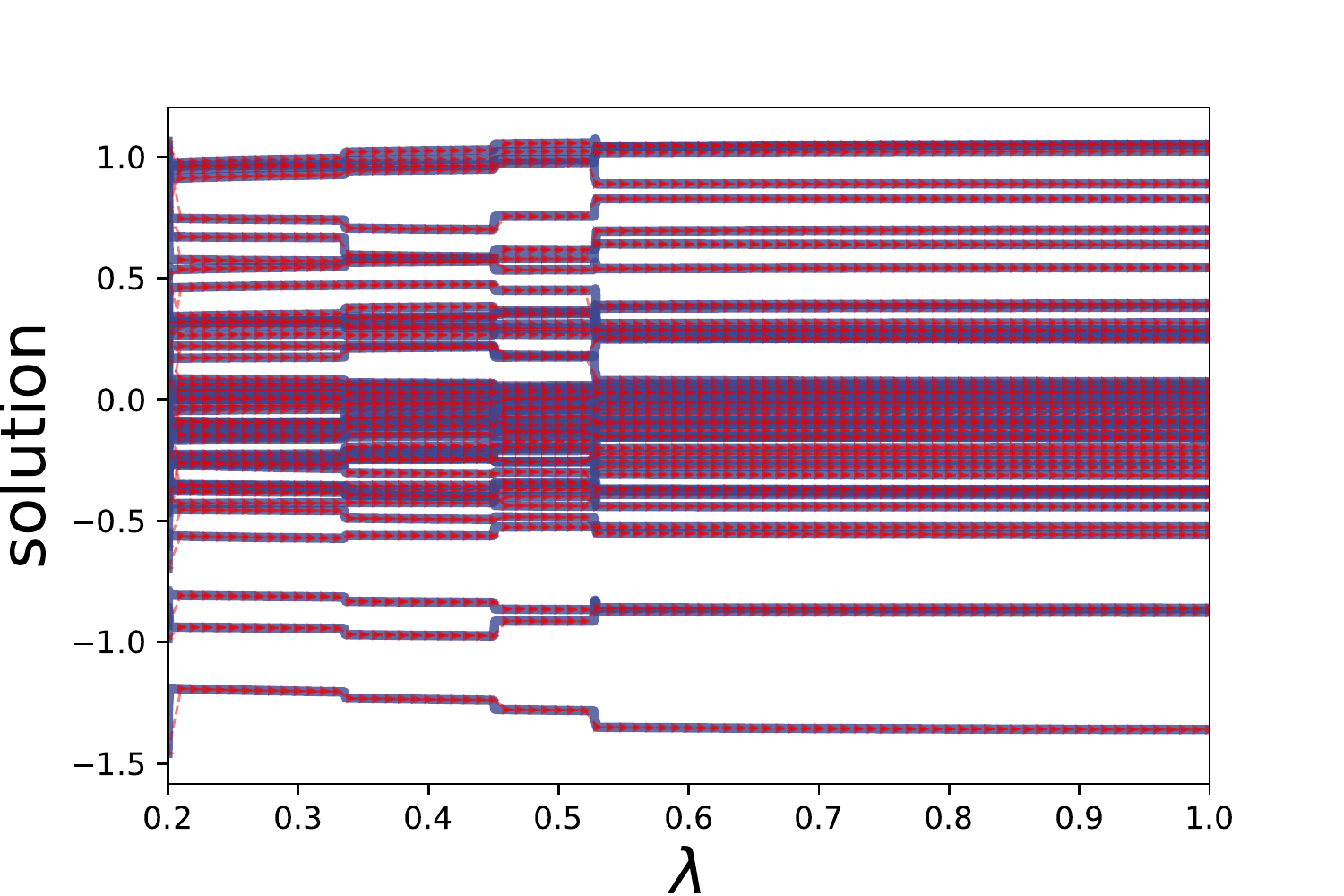} \\
       \includegraphics[width=0.33\linewidth]{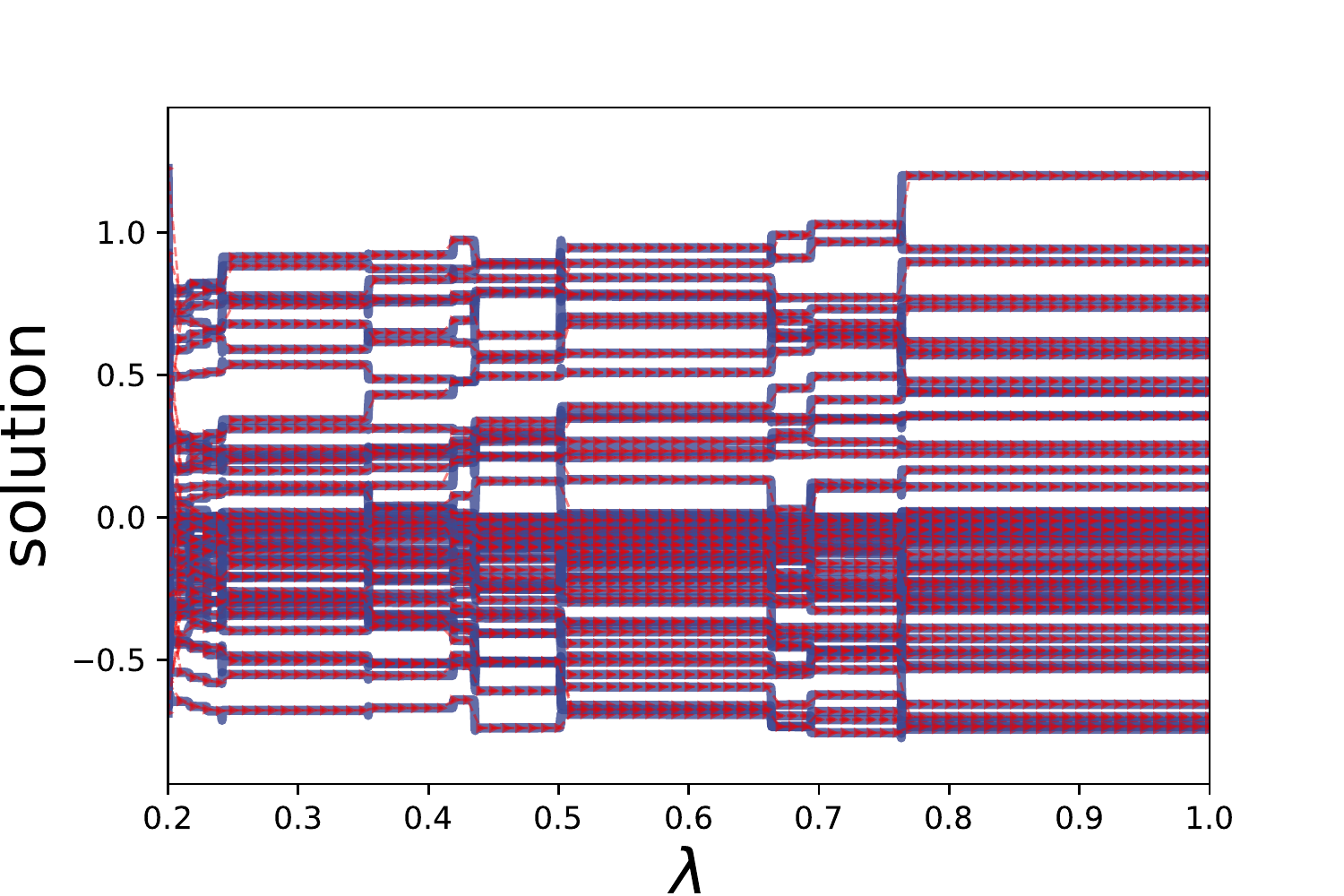} &      \includegraphics[width=0.33\linewidth]{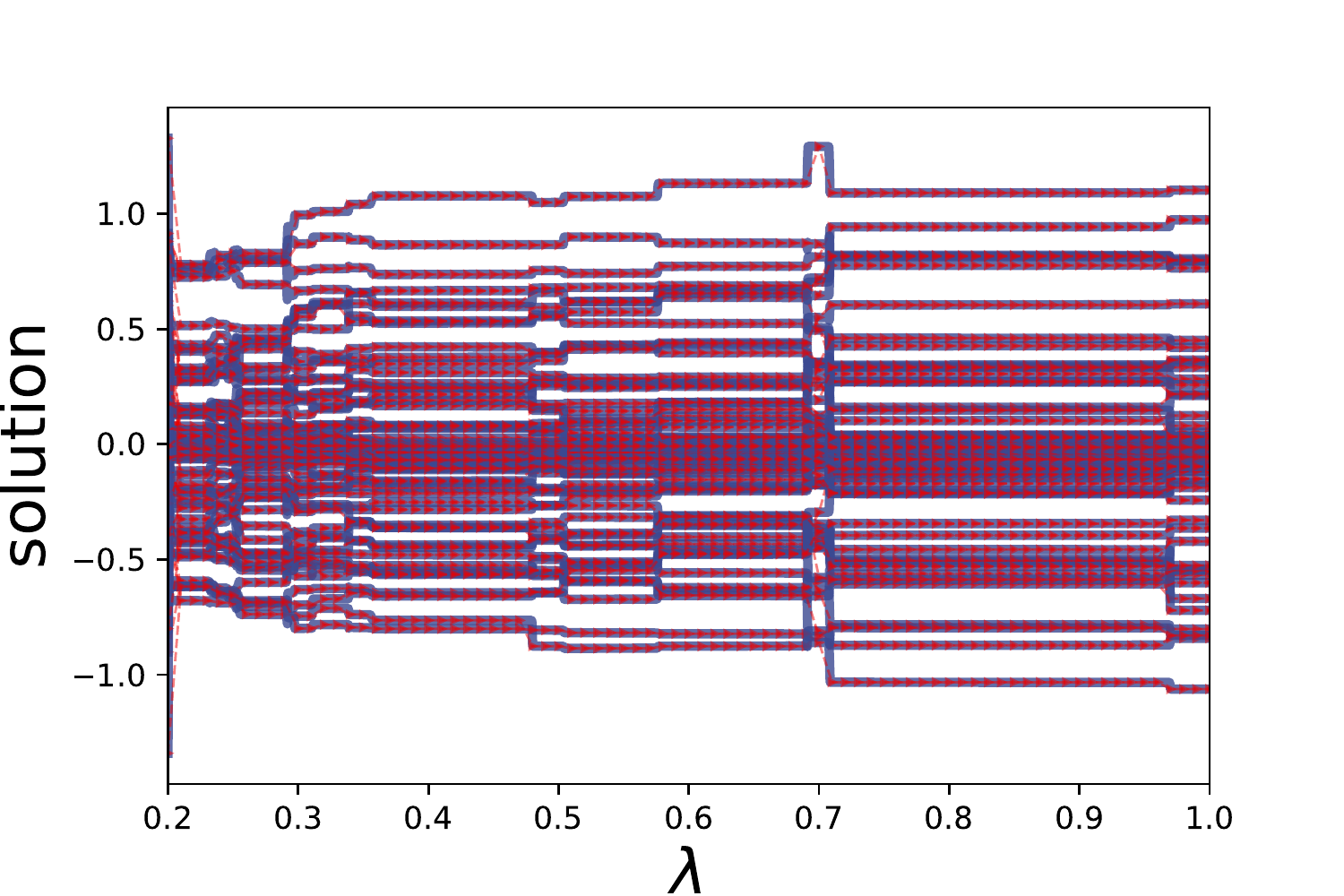} &        \includegraphics[width=0.33\linewidth]{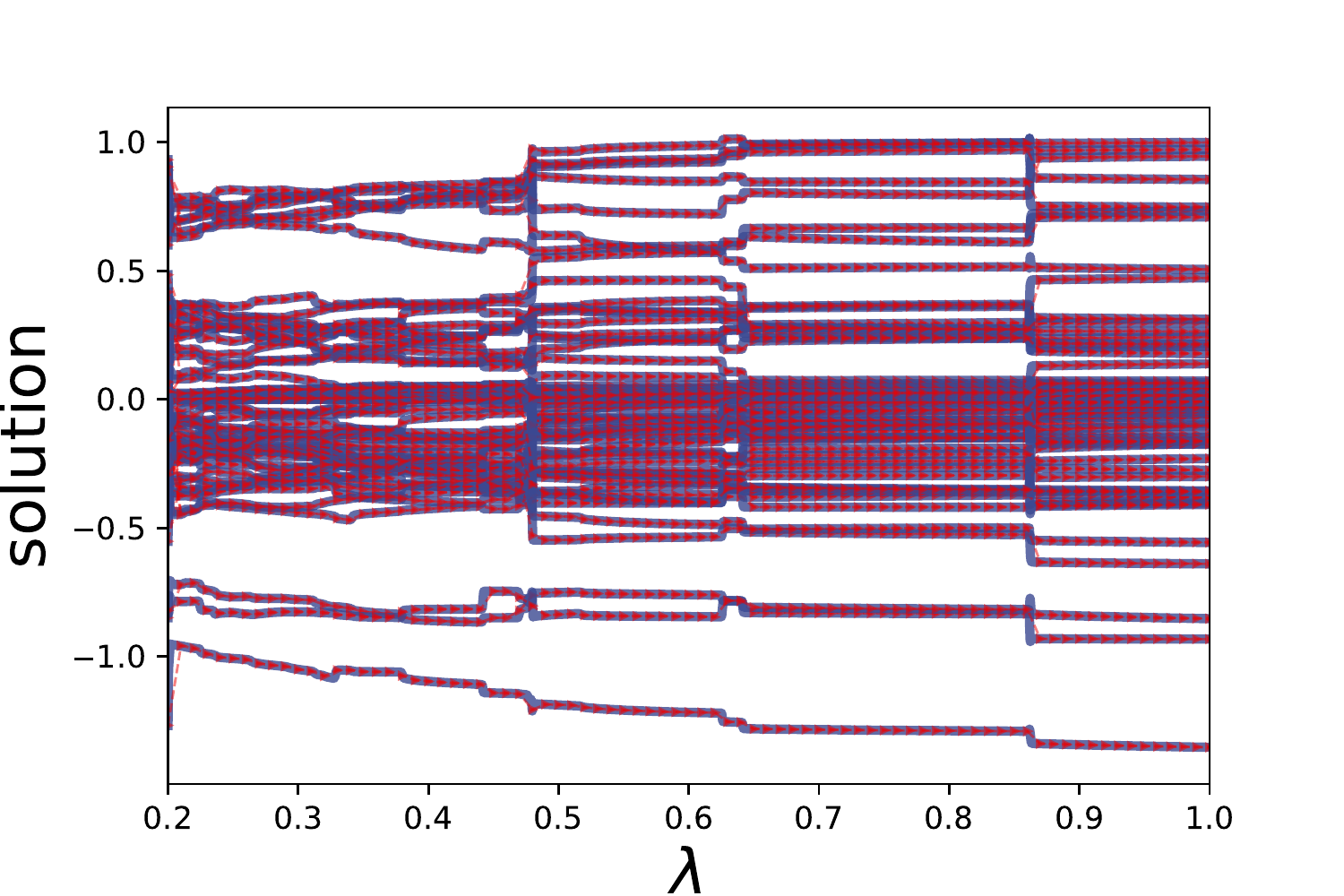} \\
       \includegraphics[width=0.33\linewidth]{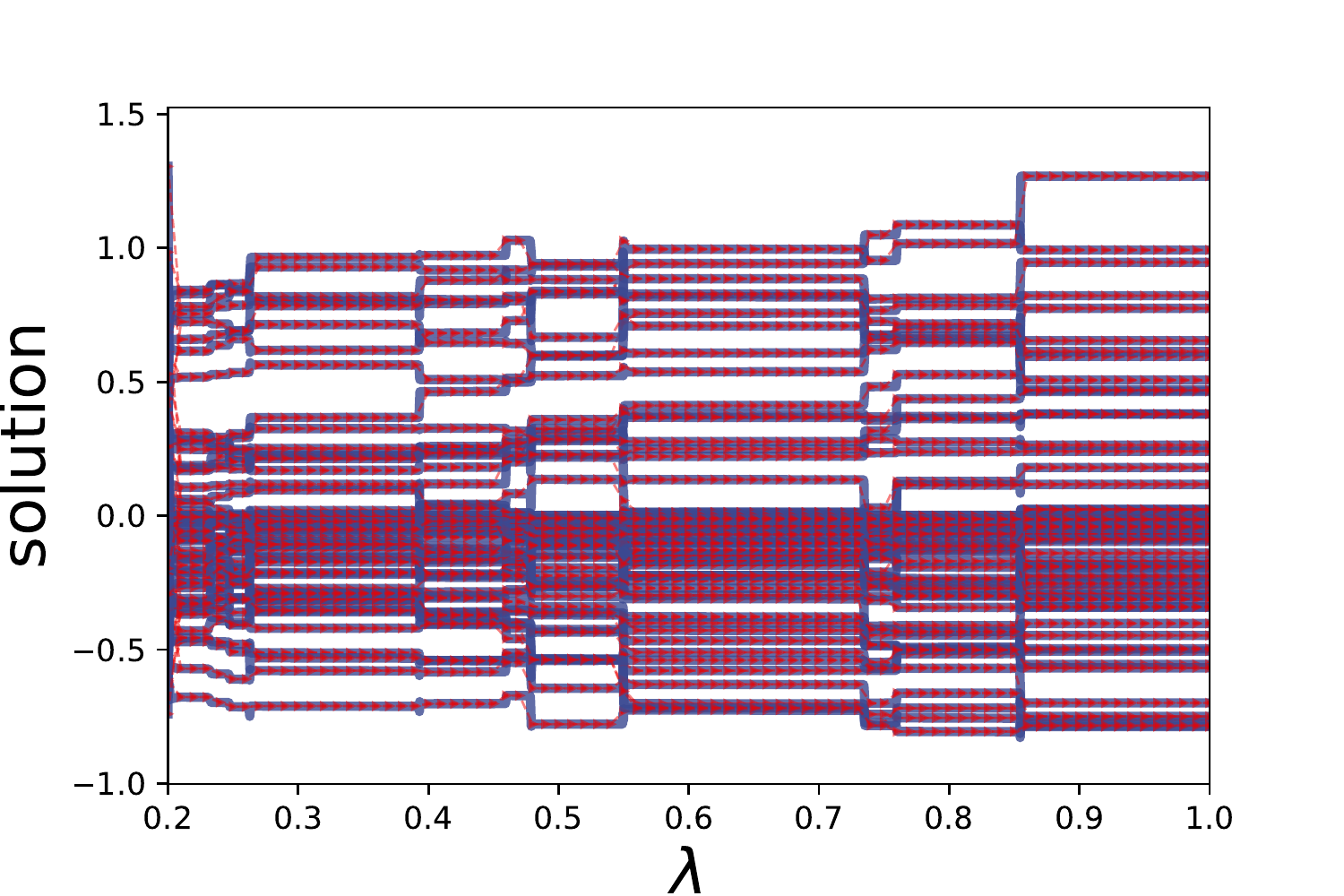} &         \includegraphics[width=0.33\linewidth]{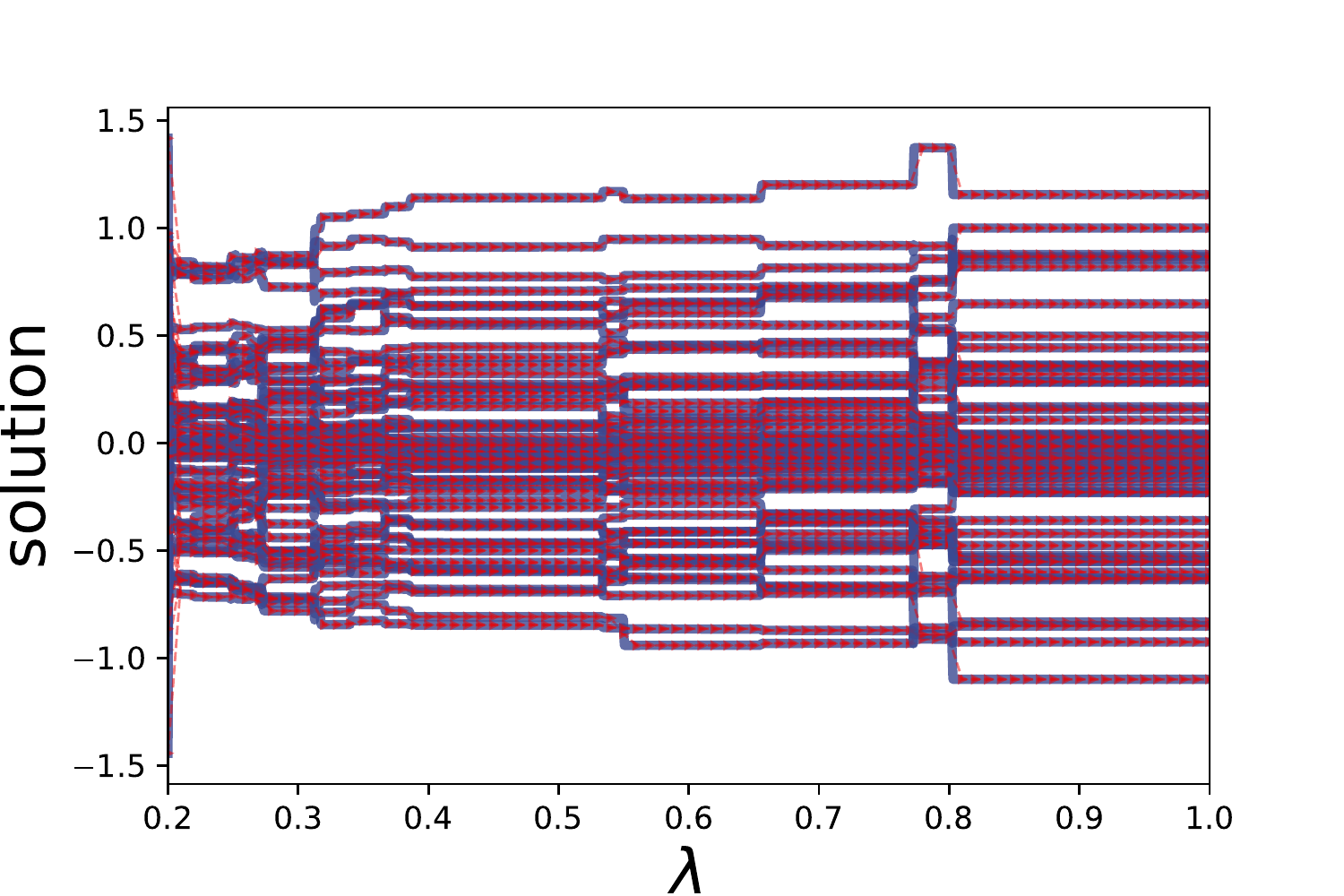} &          \includegraphics[width=0.33\linewidth]{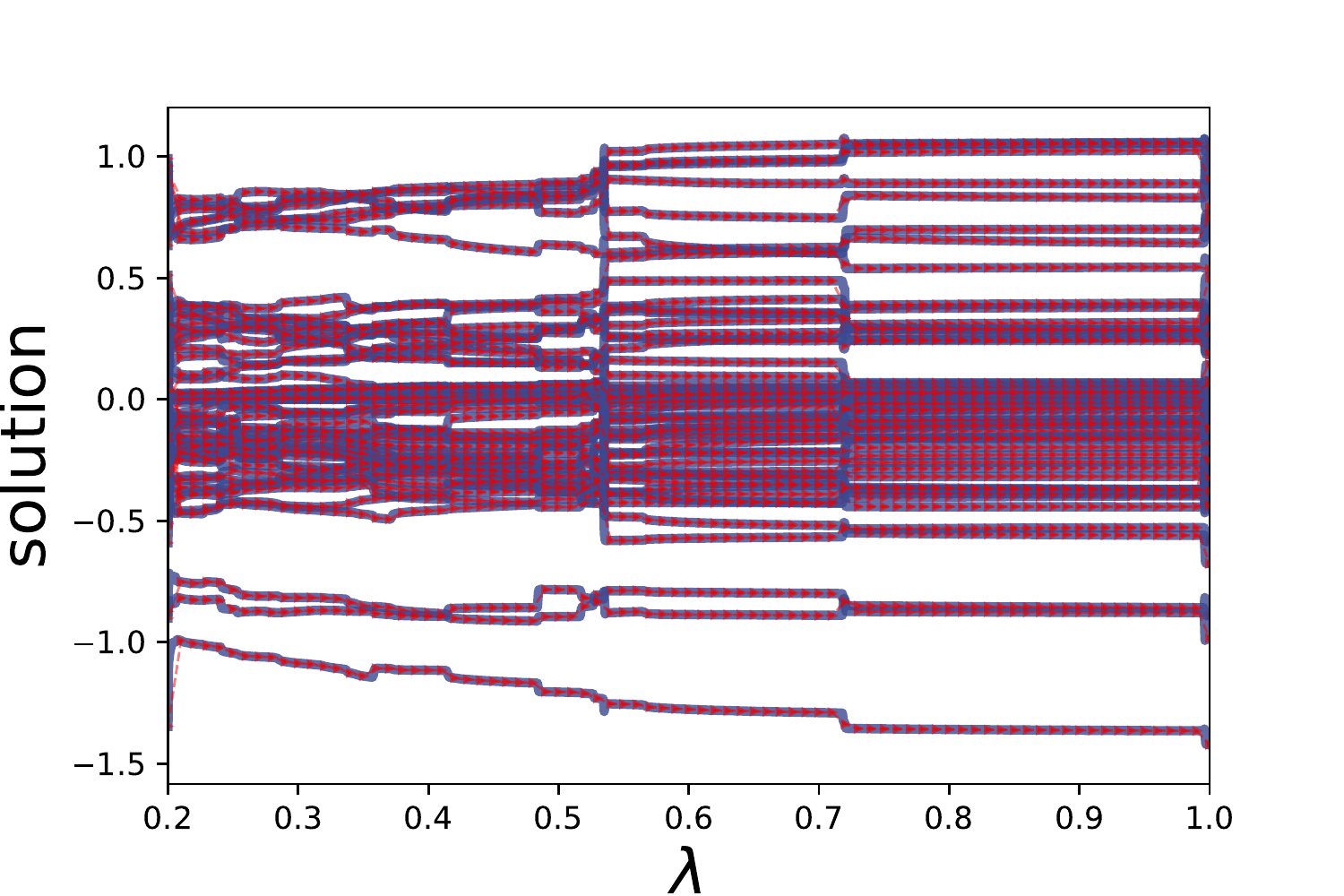} \\
\bottomrule
\end{tabular}

\caption{Age-path of logistic regression with different parameters and datasets. The first two rows of subfigures illustrate age-path using the linear SP-regularizer while the last two rows of subfigures show age-path using the mixture SP-regularizer. For experiments in the first and third row, the $C=25$. For experiments in the second and fourth row, the $C=40$.}
\label{path_3}
\end{figure}

\subsection{More Histograms}
We further show the intrinsic property of the age-path in more experiments. Specifically, we change the dataset, SP-regularizer as well as the value of other hyper-parameter in SVM, Lasso and logistic regression, while recording the number of different types of critical points. The results are illustrated in Figure \ref{hist_1}, \ref{hist_2} and \ref{hist_3}. 

These histograms demonstrate that there exists more turning points on the age-path compared with the jump points. As a result, the heuristic technique applied in \method can avoid extensive unnecessary warm starts by identifying the exact type of each critical point.
\begin{figure}
\centering
\begin{tabular}{ll}
\toprule
      \includegraphics[width=0.4\linewidth]{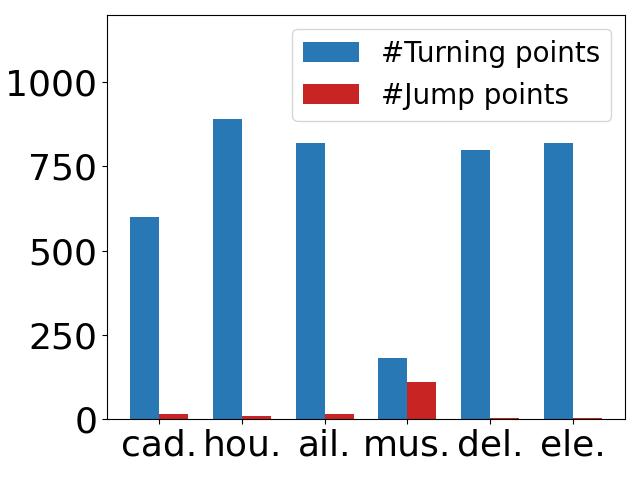}&      \includegraphics[width=0.4\linewidth]{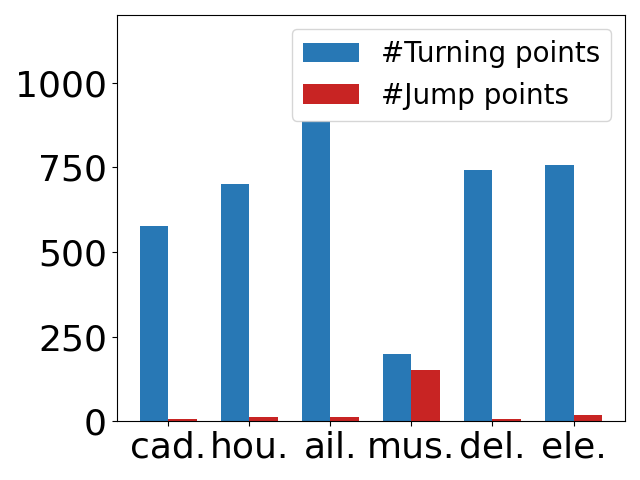}         \\
       \includegraphics[width=0.4\linewidth]{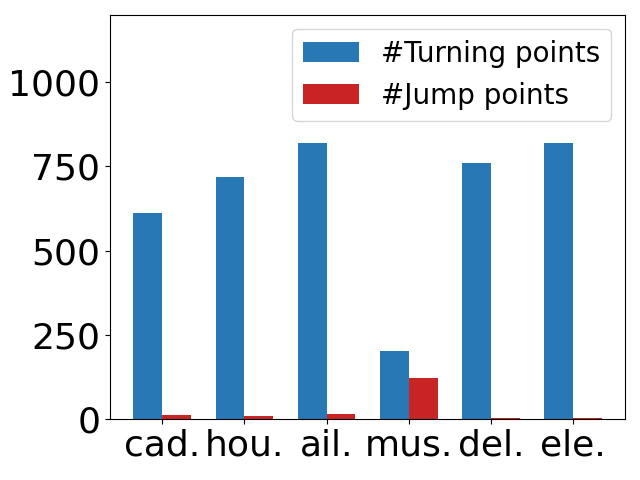} &         \includegraphics[width=0.4\linewidth]{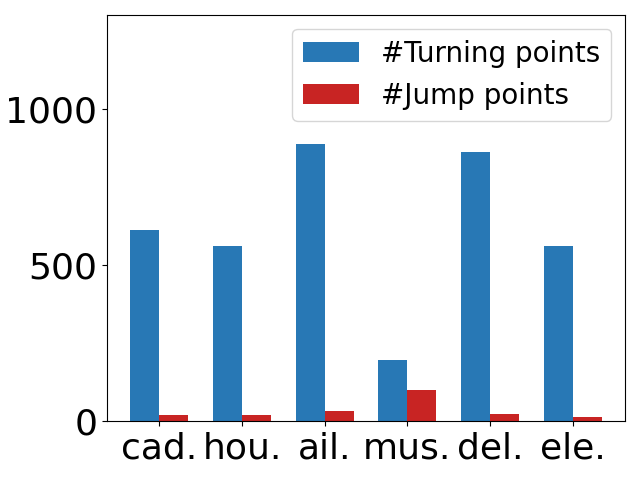}            \\
\bottomrule
\end{tabular}
\caption{The number of different types of critical points in Lasso with different parameters and regularizers. The first row of subfigures illustrate different critical points in the age-path with linear SP-regularizer while the last row of figures show the number of different critical points in
age-path using the mixture SP-regularizer. 
}
\label{hist_1}
\end{figure}
\begin{figure}
\centering
\begin{tabular}{ll}
\toprule
      \includegraphics[width=0.4\linewidth]{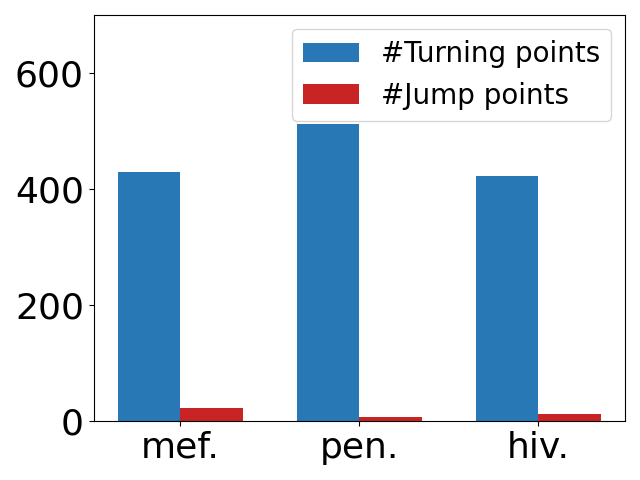}&      \includegraphics[width=0.4\linewidth]{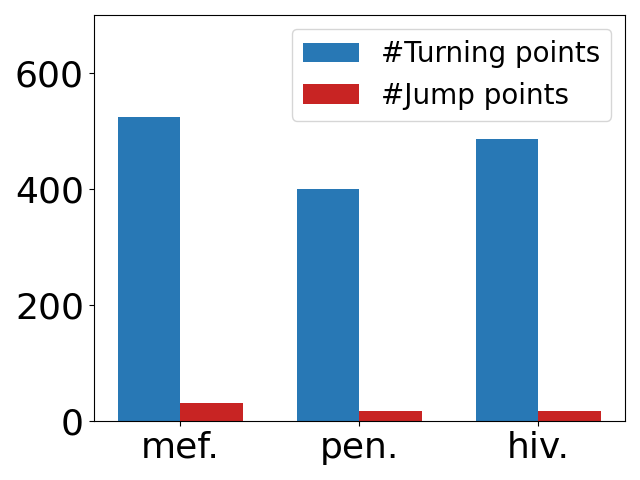}        \\
       \includegraphics[width=0.4\linewidth]{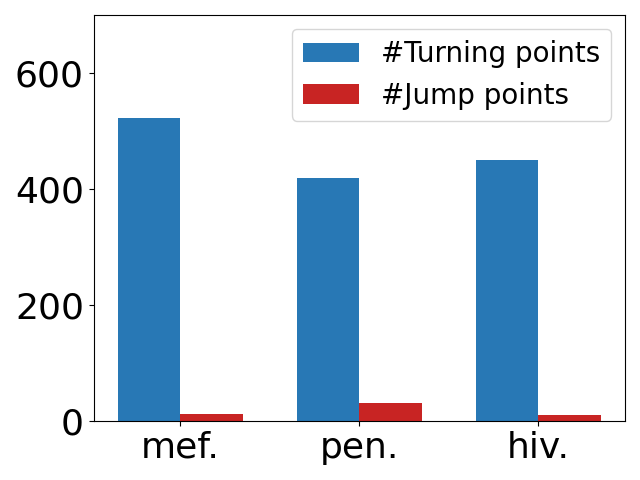} &         \includegraphics[width=0.4\linewidth]{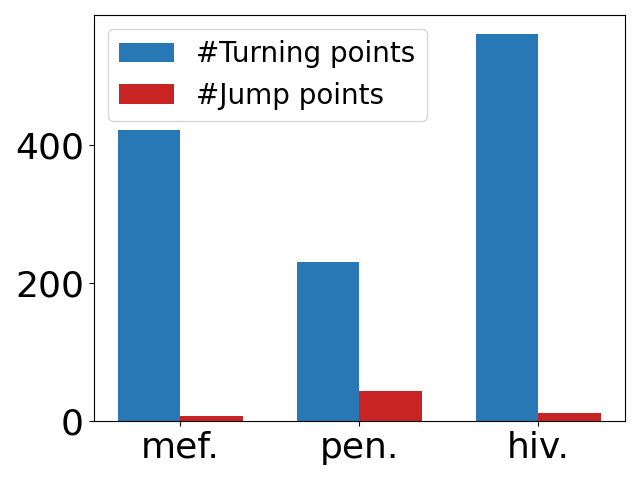}          \\
\bottomrule
\end{tabular}
\caption{The number of different types of critical points in classic SVM with different parameters and regularizers.The first row of subfigures illustrate different critical points in the age-path with linear SP-regularizer while the last row of figures show the number of different critical points in
age-path using the mixture SP-regularizer}
\label{hist_2}
\end{figure}
\begin{figure}
\centering
\begin{tabular}{ll}
\toprule
      \includegraphics[width=0.4\linewidth]{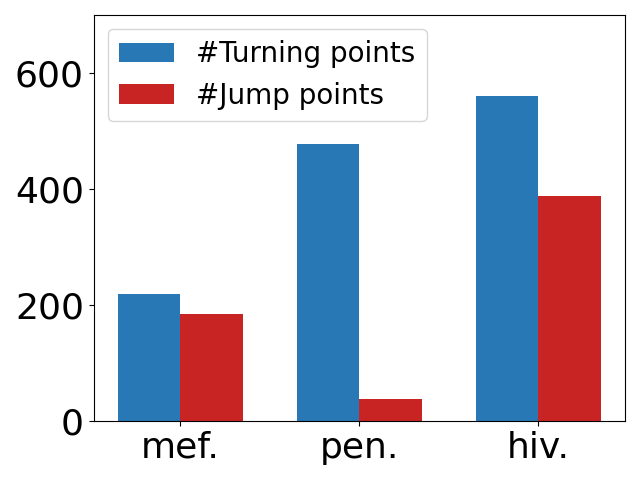}&      \includegraphics[width=0.4\linewidth]{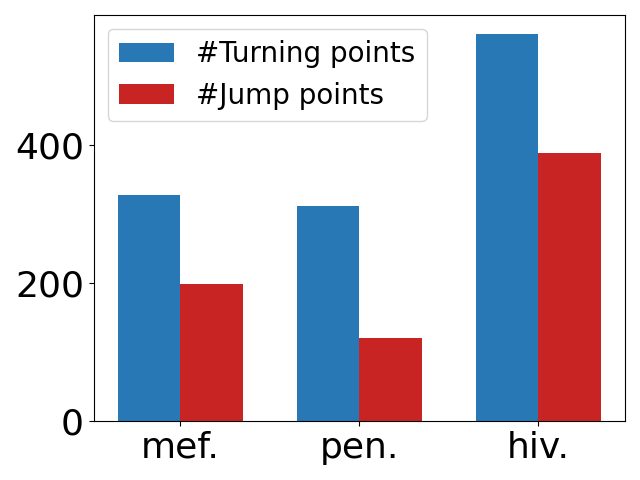}         \\
       \includegraphics[width=0.4\linewidth]{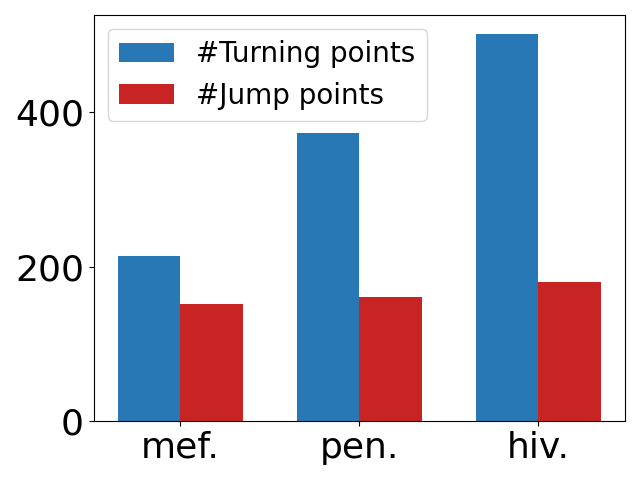} &         \includegraphics[width=0.4\linewidth]{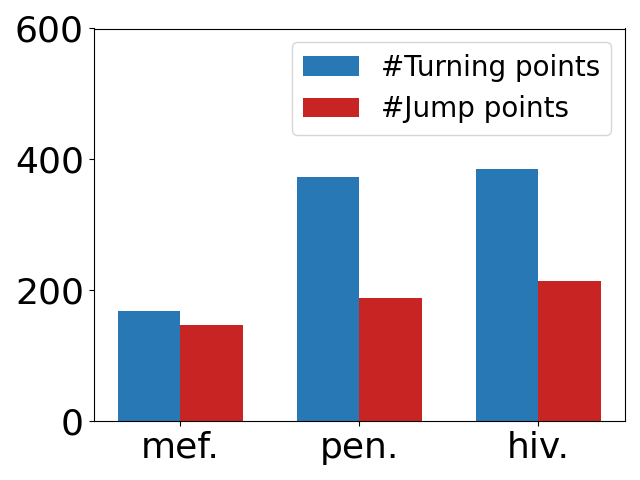}           \\
\bottomrule
\end{tabular}
\caption{The number of different types of critical points in logistic regression with different parameters and regularizers. The first row
of subfigures illustrate different critical points in the age path using linear SP-regularizer while the last row of subfigures show the number of different critical points in age-path using the mixture SP-regularizer.}
\label{hist_3}
\end{figure}

\subsection{Experimental Comparison of robust SVMs and Lasso}
{We also conduct a comparative study to the state-of-the-art robust model for SVMs \cite{chen2015robust,yang2014robust,biggio2011support} and Lasso \cite{nasrabadi2011robust} besides the SPL domain. Please take notice that RLSSVM and Re-LSSVM proposed by \cite{chen2015robust,yang2014robust} are variations of LS-SVM. Especially, we implement the Huber-loss Lasso as a special case of the generalized model proposed in \cite{nasrabadi2011robust}. The hyperparameters of these baselines are chosen from the best performance by grid search. All the experiments conducted on regression tasks are measured by the generalization error. We'd like to emphasize again that our \method framework pursues the best practice of conventional SPL while the SPL, as a special case of curriculum learning naturally owns 
certain shortcomings on the sample diversity \cite{soviany2022curriculum}. Even subject to the inherent defects of vanilla SPL, the result in Table.\ref{tab:compar1} reveals that our method still surpassed SOTA baselines in half of the experiments. The Table.\ref{tab:cmpare2} further implies that \method outperforms the robust SVM in all conducted trails. These numerical results strongly validate the performance of our method under various noise levels.} 

{Concretely speaking, the linear SP-regularizer is utilized in \method in the implementation. The $\gamma_1$ and $\gamma_2$ are hyperparameters of \textbf{RLSSVM} and \textbf{RE-LSSVM}. We use the same hyperparameters of \method in Appendix \ref{Sensitivity}.
}

 \begin{table}[htbp]
   \centering
   
   \scalebox{0.73}{
     \begin{tabular}{rrrrrrrrrrrrrrrl}
     \toprule
     \multicolumn{6}{c}{\textbf{Parameter}}                   & \multicolumn{2}{c}{\textbf{Huber Lasso}} & \multicolumn{2}{c}{\textbf{RLSSVM}} & \multicolumn{2}{c}{\textbf{Re-LSSVM}} & \multicolumn{2}{c}{\textbf{\method}} & \multicolumn{1}{c}{\multirow{2}[4]{*}{\textbf{Noise}}} & \multicolumn{1}{c}{\multirow{2}[4]{*}{\textbf{Dataset}}} \\
\cmidrule{1-6} \cmidrule{7-8} \cmidrule{9-10} \cmidrule{11-12} \cmidrule{13-14}   \multicolumn{1}{c}{$e$} & \multicolumn{1}{c}{$\alpha$} & \multicolumn{1}{c}{$\sigma$} & \multicolumn{1}{c}{$\gamma_1$} & \multicolumn{1}{c}{$\theta$} & \multicolumn{1}{c}{$\gamma_2$} & \multicolumn{1}{c}{Mean} & \multicolumn{1}{c}{Std} & \multicolumn{1}{c}{Mean} & \multicolumn{1}{c}{Std} & \multicolumn{1}{c}{Mean} & \multicolumn{1}{c}{Std} & \multicolumn{1}{c}{Mean} & \multicolumn{1}{c}{Std} &         &  \\
     \midrule
     \midrule
     \rowcolor[rgb]{ .851,  .851,  .851} 0.2     & 0.0006  & 0.7     & 1.9     & 0.2     & 1.9     & 0.399   & 0.003   & 0.449   & 0.107   & \textbf{0.318}   & 0.010   & 0.491   & 0.001   & 0.1     & ailerons \\
     0.1     & 0.0001  & 0.8     & 1.9     & 0.4     & 1.8     & \textbf{0.410}   & 0.003   & 0.573   & 0.082   & 0.499   & 0.010   & 0.49    & 0.002   & 0.2     & ailerons \\
     \rowcolor[rgb]{ .851,  .851,  .851} 0.2     & 0.0016  & 0.5     & 1.9     & 0.2     & 1.9     & \textbf{0.407}   & 0.003   & 0.419   & 0.079   & 0.696   & 0.018   & 0.489   & 0.002   & 0.3     & ailerons \\
     0.2     & 0.0006  & 0.7     & 1.9     & 0.9     & 1.8     & \textbf{0.428}   & 0.005   & 0.500   & 0.050   & 0.877   & 0.014   & 0.491   & 0.001   & 0.4     & ailerons \\
     \rowcolor[rgb]{ .851,  .851,  .851} 0.1     & 0.0008  & 0.8     & 1.9     & 0.2     & 1.9     & \textbf{0.452}   & 0.003   & 0.705   & 0.052   & 1.118   & 0.017   & 0.491   & 0.001   & 0.5     & ailerons \\
     0.2     & 0.0018  & 0.8     & 1.9     & 0.3     & 1.7     & \textbf{0.478}   & 0.003   & 0.565   & 0.044   & 1.159   & 0.025   & 0.490 & 0.015   & 0.6     & ailerons \\
     \rowcolor[rgb]{ .851,  .851,  .851} 0.1     & 0.0002  & 0.8     & 1.9     & 0.2     & 1.7     & \textbf{0.180}   & 0.000   & 0.207   & 0.082   & 0.315   & 0.024   & 0.201   & 0.046   & 0.2     & houses \\
     0.1     & 0.0013  & 0.5     & 1.9     & 0.9     & 1.8     & \textbf{0.182}   & 0.000   & 0.206   & 0.042   & 0.343   & 0.170   & 0.221   & 0.015   & 0.3     & houses \\
     \rowcolor[rgb]{ .851,  .851,  .851} 0.2     & 0.001   & 0.8     & 1.7     & 0.8     & 1.9     & 0.345   & 0.002   & 0.303   & 0.083   & 0.387   & 0.014   & \textbf{0.214}   & 0.002   & 0.1     & music \\
     0.1     & 0.0013  & 0.5     & 1.9     & 0.6     & 1.9     & 0.365   & 0.001   & 0.315   & 0.011   & 0.486   & 0.013   & \textbf{0.213}   & 0.005   & 0.2     & music \\
     \rowcolor[rgb]{ .851,  .851,  .851} 0.2     & 0.0011  & 0.9     & 1.9     & 0.4     & 1.8     & 0.357   & 0.002   & 0.308   & 0.057   & 0.444   & 0.057   & \textbf{0.214}   & 0.003   & 0.3     & music \\
     0.1     & 0.0019  & 1       & 1.9     & 0.4     & 1.7     & 0.406   & 0.003   & 0.422   & 0.087   & 0.412   & 0.015   & \textbf{0.211}   & 0.008   & 0.4     & music \\
     \rowcolor[rgb]{ .851,  .851,  .851} 0.2     & 0.0017  & 0.5     & 1.9     & 0.6     & 1.8     & 0.431   & 0.004   & 0.489   & 0.015   & 0.477   & 0.035   & \textbf{0.210}    & 0.003   & 0.5     & music \\
     0.3     & 0.0011  & 0.8     & 1.9     & 0.3     & 1.9  &   0.484   & 0.005   & 0.598   & 0.017   & 0.521   & 0.019   & \textbf{0.214}  & 0.001   & 0.6     & music \\
     \rowcolor[rgb]{ .851,  .851,  .851} 0.1     & 0.0005  & 0.7     & 1.9     & 0.8     & 1.9     & 0.663   & 0.000   & 0.747   & 0.089   & 0.689   & 0.010   & \textbf{0.647}   & 0.132   & 0.2     & delta elevators \\
     0.1     & 0.0014  & 0.8     & 1.9     & 0.4     & 1.8     & 0.666   & 0.000   & 0.744   & 0.025   & 0.694   & 0.033   & \textbf{0.634}   & 0.132   & 0.3     & delta elevators \\
     \bottomrule
     \end{tabular}%
    }
    \caption{Average generalization erros with the standard deviation in 20 runs on different datasets. The top results in each row are in boldface.}
   \label{tab:compar1}%
 \end{table}%

 \begin{table}[htbp]
   \centering
     \begin{tabular}{lccrrrrr}
     \toprule
     \multicolumn{1}{c}{\multirow{2}[4]{*}{\textbf{Dataset}}} & \multicolumn{2}{c}{\textbf{Parameter}} & \multicolumn{2}{c}{\textbf{Robust-SVM}} & \multicolumn{2}{c}{\textbf{\method}} & \multicolumn{1}{c}{\multirow{2}[4]{*}{\textbf{Noise Level}}} \\
\cmidrule{2-7}             & C       & $\gamma$   & \multicolumn{1}{c}{Mean} & \multicolumn{1}{c}{Std} & \multicolumn{1}{c}{Mean} & \multicolumn{1}{c}{Std} &  \\
     \midrule
     \midrule
     mfeat-pixel & 1       & 0.8     & 0.948   & 0.001   & \textbf{0.980} & \textbf{0.006} & 0.1 \\
     \rowcolor[rgb]{ .749,  .749,  .749} mfeat-pixel & 1       & 0.2     & 0.939   & 0.001   & \textbf{0.988} & \textbf{0.007} & 0.2 \\
     mfeat-pixel & 1       & 0.7     & 0.927   & 0.002   & \textbf{0.980} & \textbf{0.015} & 0.3 \\
     \rowcolor[rgb]{ .749,  .749,  .749} pendigts & 1       & 0.7     & 0.945   & 0.003   & \textbf{0.998} & \textbf{0.007} & 0.1 \\
     pendigts & 1       & 0.6     & 0.944   & 0.002   & \textbf{0.995} & \textbf{0.004} & 0.2 \\
     \rowcolor[rgb]{ .749,  .749,  .749} pendigts & 1       & 0.4     & 0.923   & 0.001   & \textbf{0.995} & \textbf{0.006} & 0.3 \\
     \bottomrule
     \end{tabular}%
        \caption{Average classification accuracy with the standard deviation in 20 runs under different noise levels. The top results in each row are in boldface.}
   \label{tab:cmpare2}%
 \end{table}%

\subsection{Sensitivity Analysis}\label{Sensitivity}
{In this subsection, we use the same settings except the backbone parameters $\alpha$, $C$ and the noise level. The linear SP-regularizer is utilized in GAGA. The classical SVM with linear kernel is chosen as the base model. 
Results in Table \ref{tab:airelons_alpha}, \ref{tab:music_alpha}, \ref{tab:data2_C}, \ref{tab:data3_C} confirm the performances of competing methods with different backbone parameters while retaining the same noise level. Table \ref{tab:ailerons_noise}, \ref{tab:music_noise}, \ref{tab:data2_noise}, \ref{tab:data3_noise} display the running results under different noise levels while the other backbone parameters are kept. The results of the massive simulation studies again strongly demonstrate that our \method owns the best practice of the conventional SPL compared with the baseline methods, regardless of the specific parametric selections.}

 \begin{table}[htbp]
   \centering
     \begin{tabular}{lrrrrrrrr}
     \toprule
     \multicolumn{1}{c}{\multirow{2}[4]{*}{\textbf{Dataset}}} & \multicolumn{1}{c}{\textbf{Parameter}} & \multicolumn{6}{c}{\textbf{Competing Methods}}            & \multicolumn{1}{c}{\multirow{2}[4]{*}{\textbf{Noise Level}}} \\
\cmidrule{2-8}             & \multicolumn{1}{c}{$\alpha$} & \multicolumn{2}{c}{ACS} & \multicolumn{2}{c}{MOSPL} & \multicolumn{2}{c}{\textbf{\method}} &  \\
     \midrule
     \midrule
     \rowcolor[rgb]{ .749,  .749,  .749} ailerons & 0.001   & 0.494   & 0.001   & 0.493   & 0.008   & \textbf{0.490} & \textbf{0.001} & 0.3 \\
     ailerons & 0.002   & 0.493   & 0.007   & 0.492   & 0.016   & \textbf{0.490} & \textbf{0.002} & 0.3 \\
     \rowcolor[rgb]{ .749,  .749,  .749} ailerons & 0.003   & 0.493   & 0.001   & 0.493   & 0.001   & \textbf{0.489} & \textbf{0.002} & 0.3 \\
     ailerons & 0.004   & 0.493   & 0.001   & 0.492   & 0.001   & \textbf{0.491} & \textbf{0.001} & 0.3 \\
     \rowcolor[rgb]{ .749,  .749,  .749} ailerons & 0.005   & 0.493   & 0.001   & 0.493   & 0.001   & \textbf{0.49} & \textbf{0.001} & 0.3 \\
     ailerons & 0.006   & 0.493   & 0.001   & 0.493   & 0.001   & \textbf{0.491} & \textbf{0.001} & 0.3 \\
     \rowcolor[rgb]{ .749,  .749,  .749} ailerons & 0.007   & 0.494   & 0.002   & 0.492   & 0.002   & \textbf{0.491} & \textbf{0.001} & 0.3 \\
     ailerons & 0.008   & 0.493   & 0.002   & 0.493   & 0.002   & \textbf{0.493} & \textbf{0.002} & 0.3 \\
     \rowcolor[rgb]{ .749,  .749,  .749} ailerons & 0.009   & 0.493   & 0.002   & 0.493   & 0.002   & \textbf{0.493} & \textbf{0.002} & 0.3 \\
     ailerons & 0.010   & 0.493   & 0.002   & 0.493   & 0.002   & \textbf{0.493} & \textbf{0.002} & 0.3 \\
     \rowcolor[rgb]{ .749,  .749,  .749} ailerons & 0.011   & 0.493   & 0.002   & 0.493   & 0.002   & \textbf{0.493} & \textbf{0.002} & 0.3 \\
     ailerons & 0.012   & 0.493   & 0.002   & 0.493   & 0.002   & \textbf{0.493} & \textbf{0.002} & 0.3 \\
     \rowcolor[rgb]{ .749,  .749,  .749} ailerons & 0.013   & 0.493   & 0.002   & 0.493   & 0.002   & \textbf{0.493} & \textbf{0.002} & 0.3 \\
     ailerons & 0.014   & 0.493   & 0.002   & 0.493   & 0.002   & \textbf{0.493} & \textbf{0.002} & 0.3 \\
     \rowcolor[rgb]{ .749,  .749,  .749} ailerons & 0.015   & 0.493   & 0.002   & 0.493   & 0.002   & \textbf{0.493} & \textbf{0.002} & 0.3 \\
     \bottomrule
     \end{tabular}%
     \caption{Average generalization errors and the standard deviation in 20 runs with different values of $\alpha$. The top results in each row are in boldface.}
   \label{tab:airelons_alpha}%
 \end{table}%
 
 \begin{table}[htbp]
   \centering
     \begin{tabular}{lrrrrrrrr}
     \toprule
     \multicolumn{1}{c}{\multirow{2}[4]{*}{\textbf{Dataset}}} & \multicolumn{1}{c}{\textbf{Parameter}} & \multicolumn{6}{c}{\textbf{Competing Methods}}            & \multicolumn{1}{c}{\multirow{2}[4]{*}{\textbf{Noise Level}}} \\
\cmidrule{2-8}             & \multicolumn{1}{c}{$\alpha$} & \multicolumn{2}{c}{ACS} & \multicolumn{2}{c}{MOSPL} & \multicolumn{2}{c}{\textbf{\method}} &  \\
     \midrule
     \midrule
     music   & 0.001   & 0.230   & 0.011   & \textbf{0.226}   & 0.009   &  0.227 & \textbf{0.011} & 0.3 \\
     \rowcolor[rgb]{ .749,  .749,  .749} music   & 0.002   & 0.224   & 0.005   & 0.222   & 0.006   & \textbf{0.218} & \textbf{0.004} & 0.3 \\
     music   & 0.003   & 0.227   & 0.011   & 0.223   & 0.008   & \textbf{0.219} & \textbf{0.009} & 0.3 \\
     \rowcolor[rgb]{ .749,  .749,  .749} music   & 0.004   & 0.221   & 0.011   & 0.219   & 0.010   & \textbf{0.213} & \textbf{0.009} & 0.3 \\
     music   & 0.005   & 0.221   & 0.005   & 0.220   & 0.004   & \textbf{0.209} & \textbf{0.005} & 0.3 \\
     \rowcolor[rgb]{ .749,  .749,  .749} music   & 0.006   & 0.218   & 0.009   & 0.216   & 0.009   & \textbf{0.213} & \textbf{0.027} & 0.3 \\
     music   & 0.007   & 0.222   & 0.003   & 0.215   & 0.005   & \textbf{0.211} & \textbf{0.004} & 0.3 \\
     \rowcolor[rgb]{ .749,  .749,  .749} music   & 0.008   & 0.215   & 0.004   & 0.213   & 0.003   & \textbf{0.208} & \textbf{0.002} & 0.3 \\
     music   & 0.009   & 0.216   & 0.006   & 0.213   & 0.005   & \textbf{0.207} & \textbf{0.003} & 0.3 \\
     \rowcolor[rgb]{ .749,  .749,  .749} music   & 0.010   & 0.216   & 0.007   & 0.214   & 0.007   & \textbf{0.208} & \textbf{0.003} & 0.3 \\
     music   & 0.011   & 0.215   & 0.008   & 0.212   & 0.006   & \textbf{0.206} & \textbf{0.005} & 0.3 \\
     \rowcolor[rgb]{ .749,  .749,  .749} music   & 0.012   & 0.211   & 0.007   & 0.208   & 0.007   & \textbf{0.205} & \textbf{0.005} & 0.3 \\
     music   & 0.013   & 0.210   & 0.005   & 0.210   & 0.003   & \textbf{0.207} & \textbf{0.003} & 0.3 \\
     \rowcolor[rgb]{ .749,  .749,  .749} music   & 0.014   & 0.212   & 0.004   & 0.208   & 0.005   & \textbf{0.206} & \textbf{0.003} & 0.3 \\
     music   & 0.015   & 0.211   & 0.006   & 0.210   & 0.004   & \textbf{0.207} & \textbf{0.003} & 0.3 \\
     \rowcolor[rgb]{ .749,  .749,  .749} music   & 0.016   & 0.211   & 0.006   & 0.211   & 0.007   & \textbf{0.206} & \textbf{0.004} & 0.3 \\
     music   & 0.017   & 0.207   & 0.004   & 0.207   & 0.006   & \textbf{0.205} & \textbf{0.004} & 0.3 \\
     \rowcolor[rgb]{ .749,  .749,  .749} music   & 0.018   & 0.210   & 0.006   & 0.206   & 0.007   & \textbf{0.205} & \textbf{0.005} & 0.3 \\
     music   & 0.019   & 0.210   & 0.006   & 0.208   & 0.007   & \textbf{0.206} & \textbf{0.003} & 0.3 \\
     \rowcolor[rgb]{ .749,  .749,  .749} music   & 0.020   & 0.206   & 0.003   & 0.206   & 0.003   & \textbf{0.206} & \textbf{0.003} & 0.3 \\
     music   & 0.021   & 0.207   & 0.005   & 0.207   & 0.005   & \textbf{0.205} & \textbf{0.003} & 0.3 \\
     \rowcolor[rgb]{ .749,  .749,  .749} music   & 0.022   & 0.207   & 0.004   & 0.207   & 0.003   & \textbf{0.205} & \textbf{0.002} & 0.3 \\
     music   & 0.023   & 0.206   & 0.004   & 0.205   & 0.003   & \textbf{0.205} & \textbf{0.003} & 0.3 \\
     \rowcolor[rgb]{ .749,  .749,  .749} music   & 0.024   & 0.206   & 0.003   & 0.207   & 0.005   & \textbf{0.205} & \textbf{0.004} & 0.3 \\
     music   & 0.025   & 0.207   & 0.003   & 0.207   & 0.003   & \textbf{0.204} & \textbf{0.003} & 0.3 \\
     \rowcolor[rgb]{ .749,  .749,  .749} music   & 0.026   & 0.206   & 0.002   & 0.206   & 0.003   & \textbf{0.206} & \textbf{0.003} & 0.3 \\
     music   & 0.027   & 0.208   & 0.004   & 0.206   & 0.004   & \textbf{0.205} & \textbf{0.004} & 0.3 \\
     \rowcolor[rgb]{ .749,  .749,  .749} music   & 0.028   & 0.208   & 0.005   & 0.208   & 0.005   & \textbf{0.205} & \textbf{0.004} & 0.3 \\
     music   & 0.029   & 0.205   & 0.005   & 0.205   & 0.005   & \textbf{0.205} & \textbf{0.003} & 0.3 \\
     \rowcolor[rgb]{ .749,  .749,  .749} music   & 0.03    & 0.207   & 0.004   & 0.207   & 0.004   & \textbf{0.205} & \textbf{0.002} & 0.3 \\
     \bottomrule
     \end{tabular}%
     \caption{Average generalization errors and the standard deviation in 20 runs with different values of $\alpha$. The top results in each row are in boldface.}
   \label{tab:music_alpha}%
 \end{table}%

 \begin{table}[htbp]
   \centering
     \begin{tabular}{lcrrrrrrr}
     \toprule
     \multicolumn{1}{c}{\multirow{2}[4]{*}{\textbf{Dataset}}} & \textbf{Parameter} & \multicolumn{6}{c}{\textbf{Competing Methods}}            & \multicolumn{1}{c}{\multirow{2}[4]{*}{\textbf{Noise Level}}} \\
\cmidrule{2-8}             & C       & \multicolumn{2}{c}{ACS} & \multicolumn{2}{c}{MOSPL} & \multicolumn{2}{c}{\textbf{\method}} &  \\
     \midrule
     \midrule
     \rowcolor[rgb]{ .749,  .749,  .749} mfeat-pixel & 0.100   & 0.932   & 0.012   & 0.959   & 0.005   & \textbf{0.965} & \textbf{0.005} & 0.3 \\
     mfeat-pixel & 0.200   & 0.929   & 0.009   & 0.957   & 0.006   & \textbf{0.966} & \textbf{0.003} & 0.3 \\
     \rowcolor[rgb]{ .749,  .749,  .749} mfeat-pixel & 0.300   & 0.937   & 0.054   & 0.962   & 0.281   & \textbf{0.980} & \textbf{0.015} & 0.3 \\
     mfeat-pixel & 0.400   & 0.936   & 0.011   & 0.966   & 0.013   & \textbf{0.979} & \textbf{0.014} & 0.3 \\
     \rowcolor[rgb]{ .749,  .749,  .749} mfeat-pixel & 0.500   & 0.928   & 0.008   & 0.951   & 0.004   & \textbf{0.961} & \textbf{0.005} & 0.3 \\
     mfeat-pixel & 0.600   & 0.933   & 0.008   & 0.960   & 0.006   & \textbf{0.967} & \textbf{0.004} & 0.3 \\
     \rowcolor[rgb]{ .749,  .749,  .749} mfeat-pixel & 0.700   & 0.922   & 0.012   & 0.946   & 0.006   & \textbf{0.948} & \textbf{0.005} & 0.3 \\
     mfeat-pixel & 0.800   & 0.927   & 0.014   & 0.952   & 0.008   & \textbf{0.960} & \textbf{0.002} & 0.3 \\
     \rowcolor[rgb]{ .749,  .749,  .749} mfeat-pixel & 0.900   & 0.927   & 0.006   & 0.945   & 0.010   & \textbf{0.955} & \textbf{0.010} & 0.3 \\
     mfeat-pixel & 1.000   & 0.925   & 0.007   & 0.948   & 0.006   & \textbf{0.951} & \textbf{0.001} & 0.3 \\
     \rowcolor[rgb]{ .749,  .749,  .749} mfeat-pixel & 1.100   & 0.928   & 0.009   & 0.952   & 0.006   & \textbf{0.955} & \textbf{0.002} & 0.3 \\
     mfeat-pixel & 1.200   & 0.933   & 0.018   & 0.961   & 0.009   & \textbf{0.972} & \textbf{0.005} & 0.3 \\
     \rowcolor[rgb]{ .749,  .749,  .749} mfeat-pixel & 1.300   & 0.929   & 0.011   & 0.953   & 0.009   & \textbf{0.964} & \textbf{0.004} & 0.3 \\
     mfeat-pixel & 1.400   & 0.934   & 0.007   & 0.966   & 0.013   & \textbf{0.971} & \textbf{0.004} & 0.3 \\
     \rowcolor[rgb]{ .749,  .749,  .749} mfeat-pixel & 1.500   & 0.936   & 0.007   & 0.958   & 0.013   & \textbf{0.978} & \textbf{0.003} & 0.3 \\
     mfeat-pixel & 1.600   & 0.920   & 0.007   & 0.923   & 0.007   & \textbf{0.945} & \textbf{0.004} & 0.3 \\
     \rowcolor[rgb]{ .749,  .749,  .749} mfeat-pixel & 1.700   & 0.931   & 0.010   & 0.948   & 0.014   & \textbf{0.969} & \textbf{0.004} & 0.3 \\
     mfeat-pixel & 1.800   & 0.934   & 0.015   & 0.963   & 0.015   & \textbf{0.970} & \textbf{0.005} & 0.3 \\
     \rowcolor[rgb]{ .749,  .749,  .749} mfeat-pixel & 1.900   & 0.929   & 0.007   & 0.957   & 0.006   & \textbf{0.962} & \textbf{0.007} & 0.3 \\
\cmidrule{1-8}     \end{tabular}%
 \caption{Average classification accuracy and the standard deviation in 20 runs with different values of C. The top results in each row are in boldface.}
   \label{tab:data2_C}%
 \end{table}%

 \begin{table}[htbp]
   \centering
     \begin{tabular}{lcrrrrrrr}
     \toprule
     \multicolumn{1}{c}{\multirow{2}[4]{*}{\textbf{Dataset}}} & \textbf{Parameter} & \multicolumn{6}{c}{\textbf{Competing Methods}}            & \multicolumn{1}{c}{\multirow{2}[4]{*}{\textbf{Noise Level}}} \\
\cmidrule{2-8}             & C       & \multicolumn{2}{c}{ACS} & \multicolumn{2}{c}{MOSPL} & \multicolumn{2}{c}{\textbf{\method}} &  \\
     \midrule
     \midrule
     \rowcolor[rgb]{ .749,  .749,  .749} pendigts & 0.100   & 0.984   & 0.003   & 0.984   & 0.006   & \textbf{0.998} & \textbf{0.007} & 0.3 \\
     pendigts & 0.200   & 0.989   & 0.003   & 0.991   & 0.008   & \textbf{0.995} & \textbf{0.004} & 0.3 \\
     \rowcolor[rgb]{ .749,  .749,  .749} pendigts & 0.3   & 0.983   & 0.004   & 0.989   & 0.009   & \textbf{0.995} & \textbf{0.006} & 0.3 \\
     pendigts & 0.400   & 0.985   & 0.002   & 0.988   & 0.007   & \textbf{0.992} & \textbf{0.006} & 0.3 \\
     \rowcolor[rgb]{ .749,  .749,  .749} pendigts & 0.500   & 0.982   & 0.005   & 0.983   & 0.005   & \textbf{0.993} & \textbf{0.004} & 0.3 \\
     pendigts & 0.600   & 0.987   & 0.007   & 0.989   & 0.010   & \textbf{0.992} & \textbf{0.004} & 0.3 \\
     \rowcolor[rgb]{ .749,  .749,  .749} pendigts & 0.700   & 0.986   & 0.007   & 0.988   & 0.006   & \textbf{0.996} & \textbf{0.002} & 0.3 \\
     pendigts & 0.800   & 0.979   & 0.003   & 0.987   & 0.009   & \textbf{0.995} & \textbf{0.007} & 0.3 \\
     \rowcolor[rgb]{ .749,  .749,  .749} pendigts & 0.900   & 0.987   & 0.003   & 0.985   & 0.011   & \textbf{0.991} & \textbf{0.005} & 0.3 \\
     pendigts & 1.000   & 0.989   & 0.006   & 0.986   & 0.012   & \textbf{0.992} & \textbf{0.006} & 0.3 \\
     \rowcolor[rgb]{ .749,  .749,  .749} pendigts & 1.100   & 0.982   & 0.005   & 0.987   & 0.005   & \textbf{0.990} & \textbf{0.554} & 0.3 \\
     pendigts & 1.200   & 0.983   & 0.008   & 0.982   & 0.007   & \textbf{0.994} & \textbf{0.004} & 0.3 \\
     \rowcolor[rgb]{ .749,  .749,  .749} pendigts & 1.300   & 0.981   & 0.006   & 0.991   & 0.007   & \textbf{0.995} & \textbf{0.004} & 0.3 \\
     pendigts & 1.400   & 0.980   & 0.003   & 0.983   & 0.002   & \textbf{0.989} & \textbf{0.005} & 0.3 \\
     \rowcolor[rgb]{ .749,  .749,  .749} pendigts & 1.500   & 0.981   & 0.007   & 0.986   & 0.006   & \textbf{0.995} & \textbf{0.002} & 0.3 \\
     pendigts & 1.600   & 0.981   & 0.009   & 0.985   & 0.011   & \textbf{0.992} & \textbf{0.003} & 0.3 \\
     \rowcolor[rgb]{ .749,  .749,  .749} pendigts & 1.700   & 0.979   & 0.006   & 0.981   & 0.004   & \textbf{0.995} & \textbf{0.006} & 0.3 \\
     pendigts & 1.800   & 0.977   & 0.004   & 0.983   & 0.005   & \textbf{0.993} & \textbf{0.006} & 0.3 \\
     \bottomrule
     \end{tabular}%
        \caption{Average classification accuracy and the standard deviation in 20 runs with different values of C. The top results in each row are in boldface.}
   \label{tab:data3_C}%
 \end{table}%

 \begin{table}[htbp]
   \centering
     \begin{tabular}{lrrrrrrrr}
     \toprule
     \multicolumn{1}{c}{\multirow{2}[4]{*}{\textbf{Dataset}}} & \multicolumn{1}{c}{\textbf{Parameter}} & \multicolumn{6}{c}{\textbf{Competing Methods}}            & \multicolumn{1}{c}{\multirow{2}[4]{*}{\textbf{Noise Level}}} \\
\cmidrule{2-8}             & \multicolumn{1}{c}{$\alpha$} & \multicolumn{2}{c}{ACS} & \multicolumn{2}{c}{MOSPL} & \multicolumn{2}{c}{\textbf{\method}} &  \\
     \midrule
     \midrule
     \rowcolor[rgb]{ .749,  .749,  .749} ailerons & 0.006   & 0.492   & 0.001   & 0.492   & 0.005   & \textbf{0.491} & \textbf{0.001} & 0.1 \\
     ailerons & 0.006   & 0.492   & 0.001   & 0.491   & 0.016   & \textbf{0.49} & \textbf{0.002} & 0.2 \\
     \rowcolor[rgb]{ .749,  .749,  .749} ailerons & 0.006   & 0.493   & 0.001   & 0.493   & 0.001   & \textbf{0.489} & \textbf{0.002} & 0.3 \\
     ailerons & 0.006   & 0.493   & 0.001   & 0.492   & 0.001   & \textbf{0.491} & \textbf{0.001} & 0.4 \\
     \rowcolor[rgb]{ .749,  .749,  .749} ailerons & 0.006   & 0.493   & 0.001   & 0.493   & 0.001   & \textbf{0.491} & \textbf{0.001} & 0.5 \\
     ailerons & 0.006   & 0.493   & 0.001   & 0.492   & 0.001   & \textbf{0.491} & \textbf{0.001} & 0.6 \\
     \rowcolor[rgb]{ .749,  .749,  .749} ailerons & 0.006   & 0.494   & 0.002   & 0.492   & 0.002   & \textbf{0.491} & \textbf{0.001} & 0.7 \\
     ailerons & 0.006   & 0.493   & 0.002   & 0.493   & 0.002   & \textbf{0.493} & \textbf{0.002} & 0.8 \\
     \bottomrule
     \end{tabular}%
     \caption{Average generalization errors with the standard deviation in 20 runs under different noise levels. The top results in each row are in boldface.}
   \label{tab:ailerons_noise}%
 \end{table}%

 \begin{table}[htbp]
   \centering
     \begin{tabular}{lrrrrrrrr}
     \toprule
     \multicolumn{1}{c}{\multirow{2}[4]{*}{\textbf{Dataset}}} & \multicolumn{1}{c}{\textbf{Parameter}} & \multicolumn{6}{c}{\textbf{Competing Methods}}            & \multicolumn{1}{c}{\multirow{2}[4]{*}{\textbf{Noise Level}}} \\
\cmidrule{2-8}             & \multicolumn{1}{c}{$\alpha$} & \multicolumn{2}{c}{ACS} & \multicolumn{2}{c}{MOSPL} & \multicolumn{2}{c}{\textbf{\method}} &  \\
     \midrule
     \midrule
     \rowcolor[rgb]{ .749,  .749,  .749} music   & 0.006 & 0.22    & 0.004   & 0.219   & 0.003   & \textbf{0.214} & \textbf{0.002} & 0.1 \\
     music   & 0.006 & 0.218   & 0.005   & 0.215   & 0.016   & \textbf{0.213} & \textbf{0.005} & 0.2 \\
     \rowcolor[rgb]{ .749,  .749,  .749} music   & 0.006 & 0.221   & 0.009   & 0.217   & 0.006   & \textbf{0.214} & \textbf{0.003} & 0.3 \\
     music   & 0.006 & 0.226   & 0.011   & 0.221   & 0.01    & \textbf{0.211} & \textbf{0.008} & 0.4 \\
     \rowcolor[rgb]{ .749,  .749,  .749} music   & 0.006 & 0.228   & 0.011   & 0.22    & 0.006   & \textbf{0.21} & \textbf{0.003} & 0.5 \\
     music   & 0.006 & 0.493   & 0.001   & 0.492   & 0.001   & \textbf{0.491} & \textbf{0.001} & 0.6 \\
     \rowcolor[rgb]{ .749,  .749,  .749} music   & 0.006 & 0.494   & 0.002   & 0.492   & 0.002   & \textbf{0.491} & \textbf{0.001} & 0.7 \\
     music   & 0.006 & 0.493   & 0.002   & 0.493   & 0.002   & \textbf{0.493} & \textbf{0.002} & 0.8 \\
     \bottomrule
     \end{tabular}%
     \caption{Average generalization errors with the standard deviation in 20 runs under different noise levels. The top results in each row are in boldface.}
   \label{tab:music_noise}%
 \end{table}%

 \begin{table}[htbp]
   \centering
     \begin{tabular}{lcrrrrrrr}
     \toprule
     \multicolumn{1}{c}{\multirow{2}[4]{*}{\textbf{Dataset}}} & \textbf{Parameter} & \multicolumn{6}{c}{\textbf{Competing Methods}}            & \multicolumn{1}{c}{\multirow{2}[4]{*}{\textbf{Noise Level}}} \\
\cmidrule{2-8}             & C       & \multicolumn{2}{c}{ACS} & \multicolumn{2}{c}{MOSPL} & \multicolumn{2}{c}{\textbf{\method}} &  \\
     \midrule
     \midrule
     mfeat-pixel & 1.000   & 0.946   & 0.018   & 0.967   & 0.007   & \textbf{0.980} & \textbf{0.006} & 0.1 \\
     \rowcolor[rgb]{ .749,  .749,  .749} mfeat-pixel & 1.000   & 0.949   & 0.016   & 0.978   & 0.013   & \textbf{0.988} & \textbf{0.007} & 0.2 \\
     mfeat-pixel & 1.000   & 0.937   & 0.054   & 0.962   & 0.281   & \textbf{0.980} & \textbf{0.015} & 0.3 \\
     \rowcolor[rgb]{ .749,  .749,  .749} mfeat-pixel & 1.000   & 0.939   & 0.011   & 0.955   & 0.012   & \textbf{0.968} & \textbf{0.010} & 0.4 \\
     mfeat-pixel & 1.000   & 0.941   & 0.008   & 0.962   & 0.005   & \textbf{0.972} & \textbf{0.008} & 0.5 \\
     \rowcolor[rgb]{ .749,  .749,  .749} mfeat-pixel & 1.000   & 0.946   & 0.011   & 0.971   & 0.006   & \textbf{0.978} & \textbf{0.006} & 0.6 \\
     mfeat-pixel & 1.000   & 0.934   & 0.013   & 0.955   & 0.005   & \textbf{0.959} & \textbf{0.005} & 0.7 \\
     \bottomrule
     \end{tabular}%
        \caption{Average classification accuracy with the standard deviation in 20 runs under different noise levels. The top results in each row are in boldface.}
   \label{tab:data2_noise}%
 \end{table}%

 \begin{table}[htbp]
   \centering
     \begin{tabular}{lcrrrrrrr}
     \toprule
     \multicolumn{1}{c}{\multirow{2}[4]{*}{\textbf{Dataset}}} & \textbf{Parameter} & \multicolumn{6}{c}{\textbf{Competing Methods}}            & \multicolumn{1}{c}{\multirow{2}[4]{*}{\textbf{Noise Level}}} \\
\cmidrule{2-8}             & C       & \multicolumn{2}{c}{ACS} & \multicolumn{2}{c}{MOSPL} & \multicolumn{2}{c}{\textbf{\method}} &  \\
     \midrule
     \midrule
     \rowcolor[rgb]{ .749,  .749,  .749} pendigts & 1.000   & 0.979   & 0.012   & 0.984 & 0.006   & \textbf{0.994} & \textbf{0.004} & 0.1 \\
     pendigts & 1.000   & 0.980   & 0.003   & 0.991   & 0.008   & \textbf{0.998} & \textbf{0.004} & 0.2 \\
     \rowcolor[rgb]{ .749,  .749,  .749} pendigts & 1.000   & 0.989   & 0.006   & 0.986   & 0.012   & \textbf{0.992} & \textbf{0.006} & 0.3 \\
     pendigts & 1.000   & 0.980   & 0.009   & 0.990   & 0.005   & \textbf{0.996} & \textbf{0.006} & 0.4 \\
     \rowcolor[rgb]{ .749,  .749,  .749} pendigts & 1.000   & 0.974   & 0.004   & 0.994   & 0.005   & \textbf{0.994} & \textbf{0.004} & 0.5 \\
     pendigts & 1.000   & 0.984   & 0.007   & 0.986   & 0.010   & \textbf{0.996} & \textbf{0.004} & 0.6 \\
     \rowcolor[rgb]{ .749,  .749,  .749} pendigts & 1.000   & 0.970   & 0.007   & 0.972   & 0.006   & \textbf{0.974} & \textbf{0.006} & 0.7 \\
     \bottomrule
     \end{tabular}%
        \caption{Average classification accuracy with the standard deviation in 20 runs under different noise levels. The top results in each row are in boldface.}
   \label{tab:data3_noise}%
 \end{table}%

\clearpage

\section{Limitations and Broader Impact}
\label{limit_impact}
\paragraph{Limitations.} The current framework \method is only applicable to the vanilla SPL paradigm, while some variants of SPL (\emph{e.g.}, self-paced learning with diversity \cite{jiang2014self}) could have a group of hyperparameters more than the mere $\lambda$. In this instance, the original solution paths escalates into the solution surfaces, and made it harder to track the solutions. Another point is that our argument assumed that the target function is a biconvex problem, while many complex losses (\emph{e.g.}, deep neural networks) could be the function of strongly non-convex.

\paragraph{Broader Impact.} 
The parties with limited computational resource may benefit from the efficiency of our proposed work. Meanwhile, the research groups that sensitive to age parameters in SPL may benefit from the robustness of our work. For social impact, our work improve the hyperparameter search and helps reducing the computation cost, which saves carbon emissions during the training process.

\section{Code Readme}
This section explains how to use the code implementing the proposed \method.
\subsection{Archive content}

\begin{enumerate}
\item The fundamental implementation of our methodology \method is called {\color{blue}\texttt{GAGA.py}}, which includes all the functions used in our experiment. In more detail, they are named in the form of [GAGA]\_[Model\_Name]\_[SPL\_Regularizer], \emph{e.g.}, GAGA\_svm\_linear.
\item The {\color{blue}\texttt{Evaluation.py}} provides functions for evaluating solution path from different models including ACS.
\item The {\color{blue}\texttt{Input\_Data.py}} offers file-IO for all datasets used in our experiment.
\item The {\color{blue}\texttt{ACS.py}} provides ACS algorithm for solving SPL.
\end{enumerate}

\subsection{Reproducing the results of the article}

 For the sake of quick and convenient reproducing experiments and  checking our findings, we provide a demo named {\color{blue}\texttt{main.py}}. We use open-source datasets and provide file-IO functions along with  detailed pre-processing pipeline for users. Any used datasets in our experiment can be easily downloaded them from the UCI and OpenML website and put them in the {\color{blue}\texttt{datasets}} folder. The required dependencies and running environment is recorded in  {\color{blue}\texttt{Environment.txt}}.
\end{document}